\theoremstyle{plain}
\newtheorem{theorem}{Theorem}[section]
\theoremstyle{definition}
\newtheorem{definition}[theorem]{Definition}
\theoremstyle{remark}
\newtheorem{remark}[theorem]{Remark}
\newenvironment{lemma}[1]
  {\innerlemma}
    {\endinnerlemma}
\newenvironment{assumption}[1]
  {\innerassumption}
    {\endinnerassumption}
\newcolumntype{d}[1]{D{.}{.}{4}}%
\newcommand{\subhead}[1]{\multicolumn{1}{c}{#1}}%
\DeclareMathOperator{\y}{\textbf{y}}
\DeclareMathOperator{\B}{\textbf{B}}
\DeclareMathOperator{\X}{\textbf{X}}
\DeclareMathOperator{\G}{\textbf{G}}
\DeclareMathOperator{\A}{\textbf{A}}
\DeclareMathOperator{\zero}{\textbf{0}}
\newcommand{\lrVert}[1]{\left\Vert #1 \right\Vert}
\newcommand{\Ebatch}[1]{\mathbb{E}^{t_0} \left[#1\right]}
\newcommand{\Etot}[1]{\mathbb{E} \left[#1\right]}
\newcommand{\rev}[1]{{#1}}
\icmltitlerunning{Communication-Efficient Learning with Vertically Partitioned Data}
\begin{document}

\twocolumn[
\icmltitle{Compressed-VFL: Communication-Efficient Learning with Vertically Partitioned Data}

\begin{icmlauthorlist}
\icmlauthor{Timothy Castiglia}{rpi}
\icmlauthor{Anirban Das}{rpi}
\icmlauthor{Shiqiang Wang}{ibm}
\icmlauthor{Stacy Patterson}{rpi}
\end{icmlauthorlist}

\icmlaffiliation{rpi}{Department of Computer Science,
  Rensselaer Polytechnic Institute, Troy, NY, USA}
\icmlaffiliation{ibm}{IBM Thomas J. Watson Research Center, Yorktown Heights, NY, USA}

\icmlcorrespondingauthor{Timothy Castiglia}{castit@rpi.edu}
\icmlkeywords{Machine Learning, Federated Learning, Distributed Systems, Parellel Computing, Scalable Algorithms, Compression}

\vskip 0.3in
]
\printAffiliationsAndNotice{} %

\begin{abstract}
We propose Compressed Vertical Federated Learning (C-VFL) for communication-efficient training on vertically partitioned data. In C-VFL, a server and multiple parties collaboratively train a model on their respective features utilizing several local iterations and sharing compressed intermediate results periodically. Our work provides the first theoretical analysis of the effect message compression has on distributed training over vertically partitioned data. We prove convergence of non-convex objectives at a rate of $O(\frac{1}{\sqrt{T}})$ when the compression error is bounded over the course of training. We provide specific requirements for convergence with common compression techniques, such as quantization and top-$k$ sparsification. Finally, we experimentally show compression can reduce communication by over $90\%$ without a significant decrease in accuracy over VFL without compression.
\end{abstract}

\section{Introduction} \label{intro.sec}

Federated Learning~\citep{pmlr-v54-mcmahan17a} 
is a distributed machine learning approach
that has become of much interest
in both theory~\citep{li2018federated, wang2019adaptive, liu2020client} 
and practice~\citep{DBLP:conf/mlsys/BonawitzEGHIIKK19, DigitalHealthFL, DBLP:journals/comsur/LimLHJLYNM20} 
in recent years. 
Naive distributed learning algorithms may require frequent exchanges of large amounts of data, which can lead to slow training performance~\citep{lin2018don}.
Further, participants may be globally distributed, with high latency network connections. To mitigate these factors, 
Federated Learning algorithms aim to be communication-efficient by design.
Methods such as \emph{local updates}~\citep{moritz2016sparknet,liu2019communication}, 
where parties train local parameters for multiple iterations without communication,
and message compression~\citep{DBLP:conf/nips/StichCJ18, DBLP:conf/nips/WenXYWWCL17, DBLP:conf/icml/KarimireddyRSJ19} 
reduce message frequency and size, respectively, 
with little impact on training performance. 

Federated Learning methods often
target the case where the data among parties is distributed 
horizontally: each party's data shares the same features but
parties hold data corresponding to different sample IDs. This is known as
Horizontal Federated Learning (HFL)~\citep{DBLP:journals/tist/YangLCT19}. 
However, there are several application areas where 
data is partitioned in a \emph{vertical} manner: 
the parties store data on the same sample IDs
but different feature spaces.

An example of a vertically partitioned setting includes 
a hospital, bank, and insurance company 
seeking to train a model to predict
something of mutual interest, such as customer credit score.
Each of these institutions may have data on the same
individuals but store medical history, financial
transactions, and vehicle accident reports, respectively.
These features must remain local to the institutions
due to privacy concerns, rules and regulations (e.g., GDPR, HIPAA), 
and/or communication network limitations.
In such a scenario, Vertical Federated Learning (VFL) methods
must be employed.
Although VFL is less well-studied than HFL, there has
been a growing interest in VFL 
algorithms recently~\citep{FDML, gu2021privacy, verticalAutoencoders},
and VFL algorithms have important applications
including risk prediction, smart manufacturing, and discovery
of pharmaceuticals~\citep{kairouz2019advances}.

Typically in VFL, each party trains a local embedding 
function that maps raw data features to 
a meaningful vector representation, or \emph{embedding}, for prediction tasks.
For example, a neural network can be an embedding function
for mapping the text of an online article  
to a vector space for classification~\citep{book_embeddings_git}.
Referring to Figure~\ref{vflmodel.fig}, suppose Party $1$ 
is a hospital with medical data features $x_1$. 
The hospital computes its embedding $h_1(\theta_1 ; x_1)$ for the features
by feeding $x_1$ through a neural network. 
The other parties (the bank and insurance company),
compute embeddings for their features, then all parties share the
embeddings in a private manner
(e.g., homomorphic encryption,
secure multi-party computation, or secure aggregation).
The embeddings are then combined 
in a \emph{server model} $\theta_0$ to determine the final loss of the global model.
A server model (or fusion network) captures
the complicated interactions of embeddings
and is often a complex, non-linear 
model~\citep{GuLSLCLM19, NieLWWS21, han2021improving}. 
Embeddings can be very large, in practice, sometimes requiring 
terabytes of communication over the course of training.

\begin{figure}[t]
    \centering
    \includegraphics[width=0.4\textwidth]{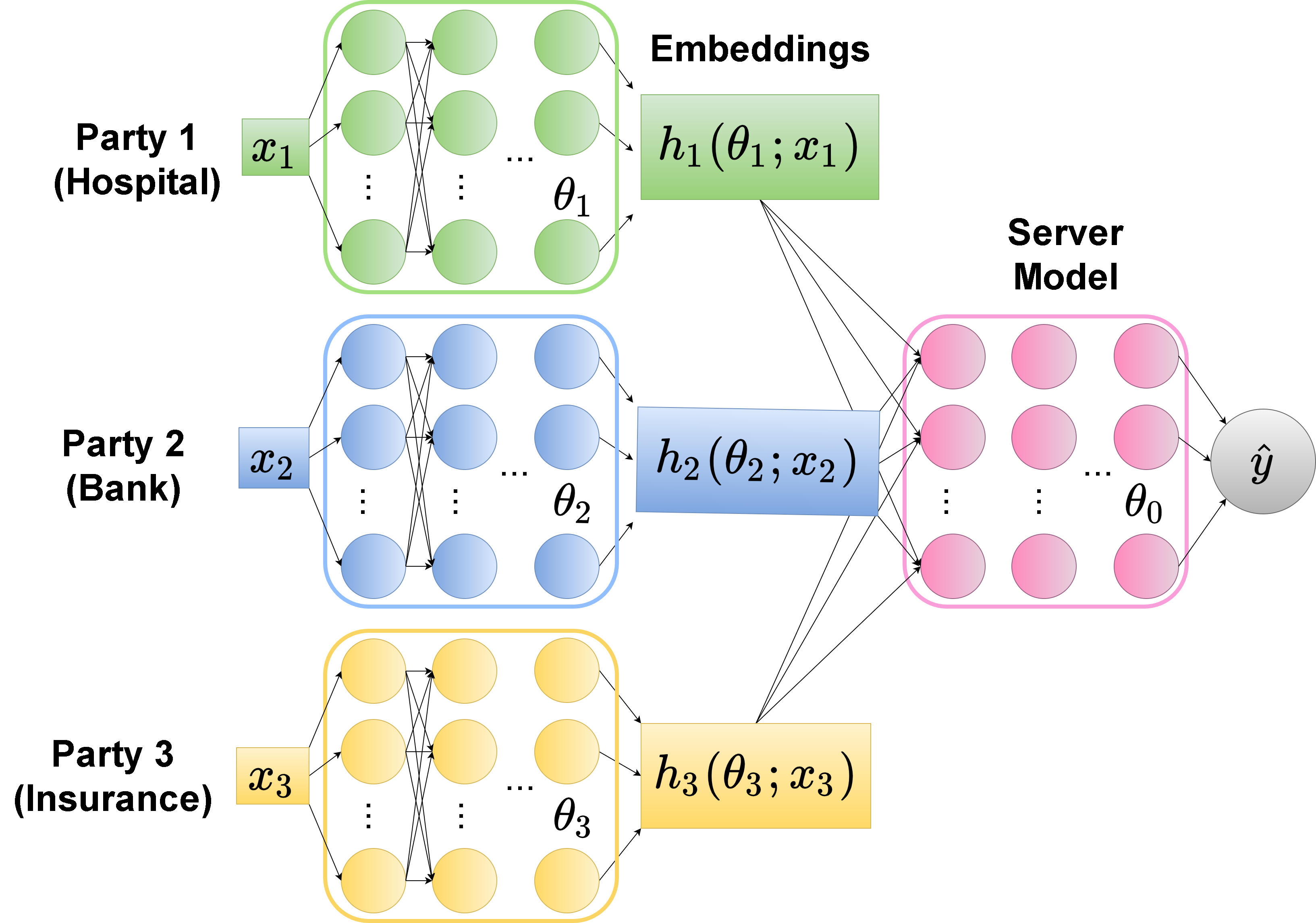}
    \caption{Example global model with neural networks.
        To obtain a $\hat{y}$ prediction for a data sample $x$, each
        party $m$ feeds the local features of $x$, $x_m$,
        into a neural network. The output of this neural network is
        the embedding $h_m(\theta_m ; x_m)$. All embeddings are then
        fed into the server model neural network with parameters $\theta_0$.
    }
    \label{vflmodel.fig}
\end{figure}

Motivated by this, we propose Compressed Vertical Federated Learning (\mbox{C-VFL}),
a general framework for communication-efficient Federated Learning 
over vertically partitioned data.
In our algorithm, 
parties communicate compressed embeddings periodically, 
and the parties and the server 
each run block-coordinate descent for multiple local iterations, in parallel,
using stochastic gradients to update their local parameters.

\mbox{C-VFL} is the first theoretically verified VFL algorithm that applies
embedding compression.
Unlike in HFL algorithms, \mbox{C-VFL} compresses embeddings rather
than gradients. 
Previous work has proven convergence for HFL algorithms with gradient 
compression~\citep{DBLP:conf/nips/StichCJ18, DBLP:conf/nips/WenXYWWCL17, DBLP:conf/icml/KarimireddyRSJ19}.
However, no previous work
analyzes the convergence requirements for VFL algorithms that
use embedding compression.
Embeddings are parameters in the partial derivatives calculated at each party.
The effect of compression error on the resulting partial derivatives may be complex; 
therefore, the analysis in previous work on gradient compression in HFL 
does not apply to compression in VFL.
In our work, we prove that, under a diminishing compression error,
\mbox{C-VFL} converges at a rate of $O(\frac{1}{\sqrt{T}})$,
which is comparable to previous VFL algorithms that do not employ compression.
We also analyze common compressors, such as quantization and sparsification,
in \mbox{C-VFL} and provide bounds on their compression parameters to ensure convergence.

\begin{figure}[t]
    \centering
    \includegraphics[width=0.35\textwidth]{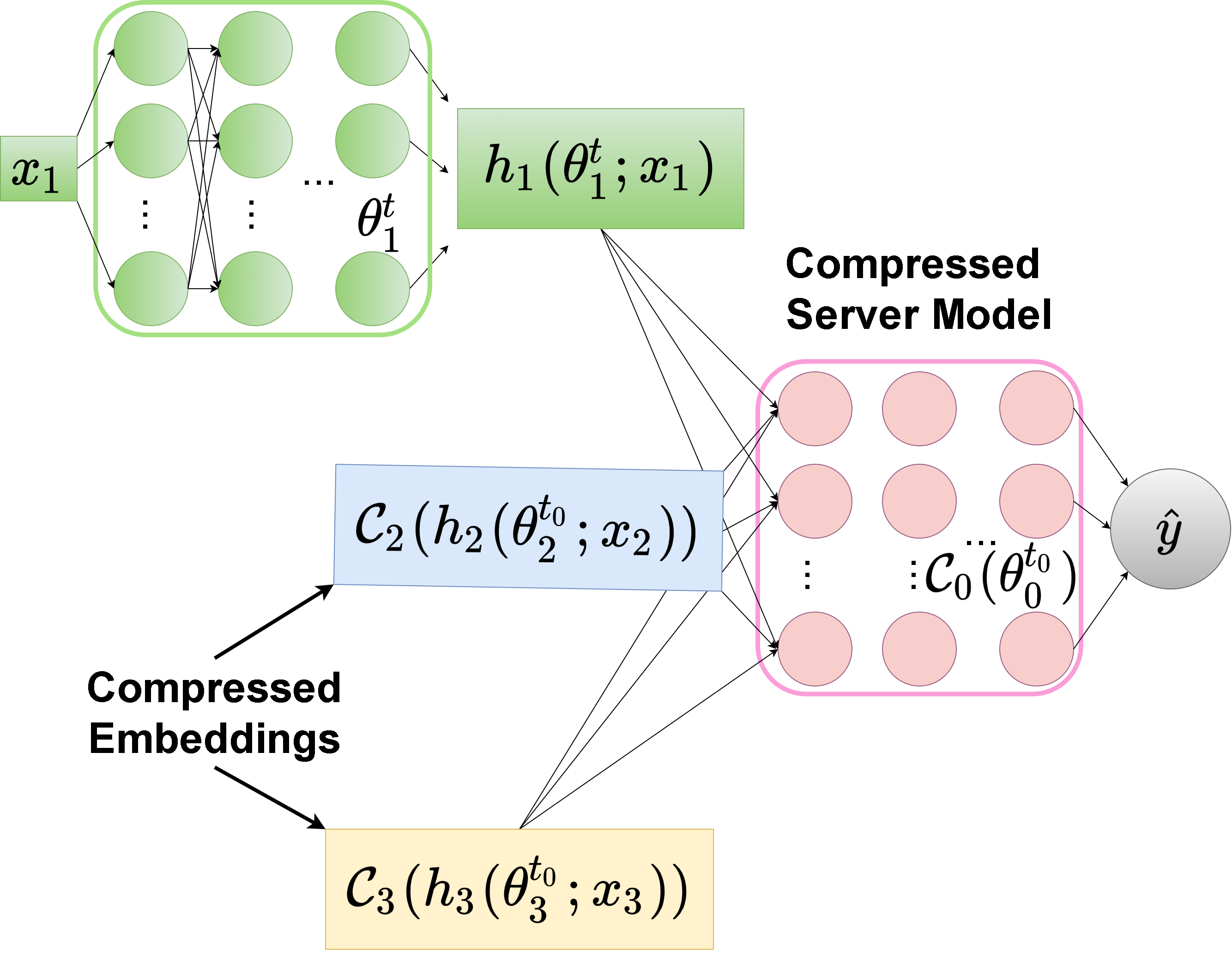}
    \caption{Example local view of a global model with neural networks.
        When running \mbox{C-VFL}, Party $1$ (in green) only 
        has a compressed snapshot of the other parties embeddings
        and the server model. To calculate $\hat{y}$, Party $1$ uses
        its own embedding calculated at iteration $t$, and the
        embeddings and server model calculated at 
        time $t_0$, the latest communication iteration, 
        and compressed with $\mathcal{C}_m$.
    }
    \label{vflview.fig}
\end{figure}

\mbox{C-VFL} also generalizes previous work by supporting an arbitrary server model.
Previous work in VFL has either only analyzed
an arbitrary server model without local updates~\citep{chen2020vafl}, or 
analyzed local updates with a linear server model~\citep{liu2019communication,DBLP:journals/corr/abs-2012-12420,9415026}.
\mbox{C-VFL} is designed with an arbitrary server model,
allowing support for more complex prediction tasks
than those supported by previous VFL algorithms.

We summarize our main contributions in this work.
\begin{enumerate}[leftmargin=0cm,itemindent=.5cm,labelwidth=\itemindent,labelsep=0cm,align=left]
\item We introduce \mbox{C-VFL}
        with an arbitrary compression scheme. 
        Our algorithm generalizes previous work in VFL by 
        including both an arbitrary server model 
        and multiple local iterations.
    \item We prove convergence of \mbox{C-VFL} \rev{to a neighborhood of a fixed point}
            on non-convex objectives at a rate of 
            $O(\frac{1}{\sqrt{T}})$ for a fixed step size when the compression error is bounded over the course of training.
    We also prove that the algorithm convergence error goes 
        to zero for a diminishing step size if the compression 
        error diminishes as well. 
        Our work provides novel analysis for the effect of 
        compressing embeddings on convergence in a VFL algorithm.
        Our analysis also applies to Split Learning when uploads 
        to the server are compressed. 
\item       We provide convergence bounds on parameters in common compressors
    that can be used in \mbox{C-VFL}.
        In particular, we examine
        scalar quantization~\citep{DBLP:journals/bstj/Bennett48}, 
        lattice vector quantization~\citep{DBLP:journals/tit/ZamirF96}, 
        and top-$k$ sparsification~\citep{DBLP:conf/iclr/LinHM0D18}.    
\item We evaluate our algorithm by training on 
    MIMIC-III, CIFAR-10, and ModelNet10 datasets.
        We empirically show how \mbox{C-VFL} can reduce 
        the number of bits sent by over $90\%$ compared to 
        VFL with no compression
        without a significant loss in accuracy of the final model.  
\end{enumerate}

\paragraph{Related Work.}
\cite{DBLP:journals/mp/RichtarikT16,hardy2017private}
were the first works to propose Federated Learning algorithms for
vertically partitioned data.
\cite{chen2020vafl, pyvertical}  
propose the inclusion of an arbitrary server model 
in a VFL algorithm.
However, these works do not consider multiple local iterations,
and thus communicate at every iteration.  
\cite{liu2019communication},
\cite{DBLP:journals/corr/abs-2001-11154},
and \cite{9415026} 
all propose different VFL algorithms with local 
iterations for vertically partitioned data
but do not consider an arbitrary server model. 
\rev{Split Learning is a related concept to VFL~\cite{
DBLP:journals/jnca/GuptaR18}.
Split Learning can be thought of a special case 
of VFL when there is only one party. 
Recent works~\cite{He2020SplitFed, han2021accelerating} 
have extended Split Learning to a Federated Learning setting. 
However, these works focus on the HFL setting and
do not apply message compression.}
In contrast to previous works, our work addresses
a vertical scenario, 
an arbitrary server model,
local iterations, and message compression.

Message compression is a common topic in HFL scenarios, where participants exchange gradients determined by their local datasets.
Methods of gradient compression in HFL include
scalar quantization~\citep{DBLP:conf/icml/BernsteinWAA18}, 
vector quantization~\citep{DBLP:journals/tsp/ShlezingerCEPC21}, 
and top-$k$ sparsification~\citep{DBLP:conf/ijcai/ShiZWTC19}.
In \mbox{C-VFL}, compressed embeddings are shared, rather than compressed gradients.
Analysis in previous work on gradient compression in HFL 
does not apply to compression in VFL, as
the effect of embedding compression error on each 
party's partial derivatives may be complex.
No prior work has analyzed the impact of compression on convergence in VFL.%

\paragraph{Outline.} 
In Section~\ref{problem.sec}, we provide the problem formulation and
our assumptions. Section~\ref{alg.sec} presents the details of
\mbox{C-VFL}. In Section~\ref{main.sec}, we present
our main theoretical results. Our experimental results are given
in Section~\ref{exp.sec}. Finally, we conclude in Section~\ref{conclusion.sec}.

\section{Problem Formulation} \label{problem.sec}

We present our problem formulation and notation to be used in the rest of
the paper. 
We let $\lrVert{a}$ be the 2-norm of a vector $a$,
and let $\lrVert{\A}_{\mathcal{F}}$ be the Frobenius 
norm of a matrix $\A$.

We consider a set of $M$
parties $\{1, \ldots ,M\}$ and a server. 
The dataset $\X \in \mathbb{R}^{N \times D}$ is vertically partitioned 
a priori across the $M$ parties, where $N$ is the number of data samples
and $D$ is the number of features. 
The $i$-th row of $\X$ corresponds to a data sample $x^i$. 
For each sample $x^i$, 
a party $m$ holds a disjoint subset of the features, denoted $x_m^i$,
so that $x^i = [x_1^i, \ldots, x_M^i]$.
For each $x^i$, there is a corresponding label $y^i$.
Let $\y \in \mathbb{R}^{N \times 1}$ be the vector of 
all sample labels. 
We let $\X_m \in \mathbb{R}^{N \times D_m}$ be the local dataset of a party $m$,
where the $i$-th row correspond to data features $x_m^i$. 
We assume that the server and all parties have a copy 
of the labels $\y$. 
For scenarios
where the labels are private and only present at a single
party, the label holder can provide enough information for
the parties to compute gradients for some classes of 
model architectures~\citep{liu2019communication}.

Each party $m$ holds a set of model parameters
$\theta_m$ as well as a local \emph{embedding} function
$h_m(\cdot)$. 
The server holds a set of parameters $\theta_0$ called the
\emph{server model} and a loss
function $l(\cdot)$ that combines the \emph{embeddings}
$h_m(\theta_m ; x_m^i)$ from all parties.
Our objective is to minimize the following:
\begin{align}
    &F(\Theta ; \X ; \y)  \nonumber \\
    &~~~~~\coloneqq 
    \frac{1}{N} \sum_{i=1}^N l(\theta_0, h_1(\theta_1; x^i_1) , \ldots , h_M(\theta_M; x^i_M); y^i) 
    \label{problem.eq}
\end{align}
where $\Theta = [\theta_0^T, \ldots ,\theta_M^T]^T$
is the \emph{global model}. 
An example of a global model $\Theta$
is in Figure~\ref{vflmodel.fig}.

For simplicity, we let $m=0$ refer to the server,
and define $h_0(\theta_0 ; x^i) \coloneqq \theta_0$ 
for all $x^i$, where $h_0(\cdot)$ is equivalent to the identity function.
Let $h_m(\theta_m; x^i_m) \in \mathbb{R}^{P_m}$ for $m=0, \ldots, M$,
where $P_m$ is the size of the $m$-th embedding.
Let $\nabla_m F(\Theta ; \X ; \y) \coloneqq \frac{1}{N} \sum_{i=1}^N 
\nabla_{\theta_m} l(\theta_0, h_1(\theta_1; x^i_1), \ldots , h_M(\theta_M; x^i_M) ; y^i)$ 
be the partial derivatives for parameters $\theta_m$.

Let $\X^{\B}$ and $\y^{\B}$ be the set of samples and labels corresponding
to a randomly sampled mini-batch $\B$ of size $B$.
We let the stochastic partial derivatives for parameters $\theta_m$ be $\nabla_m F_{\B}(\Theta ; \X ; \y) \coloneqq \frac{1}{B} \sum_{x^i,y^i \in \X^{\B},\y^{\B}} 
\nabla_{\theta_m} l(\theta_0, h_1(\theta_1; x^i_1), \ldots , h_M(\theta_M; x^i_M) ; y)$. 
We may drop $\X$ and $\y$ from $F(\cdot)$ and $F_{\B}(\cdot)$.
With a minor abuse of notation, 
we let $h_m(\theta_m; \X_m^{\B})
\coloneqq \{h_m(\theta_m; x^{\B^1}_m), \ldots , h_m(\theta_m; x^{\B^B}_m) \}$ 
be the set of all party $m$ embeddings associated with mini-batch $\B$,
where $\B^i$ is the $i$-th sample in the mini-batch $\B$.
We let $\nabla_m F_{\B}(\Theta)$ and 
$\nabla_m F_{\B}(\theta_0, h_1(\theta_1; \X_1^{\B}), \ldots, h_M(\theta_M; \X_M^{\B}))$ 
be equivalent, and use them interchangeably. 

\begin{assumption}{1}
    \label{smooth.assum} \textbf{Smoothness}: 
    There exists positive constants ${L < \infty}$ and ${L_m < \infty}$, for ${m=0, \ldots, M}$,
    such that for all $\Theta_1$, $\Theta_2$,
    the objective function satisfies:
        \begin{align*}
            \lrVert{\nabla F(\Theta_1) - \nabla F(\Theta_2)} &\leq L \lrVert{\Theta_1 - \Theta_2} \\
            \lrVert{\nabla_m F_{\B}(\Theta_1) - \nabla_m F_{\B}(\Theta_2)} &\leq L_m \lrVert{\Theta_1 - \Theta_2}.
        \end{align*}
\end{assumption}
\begin{assumption}{2}
    \label{bias.assum} \textbf{Unbiased gradients}: 
    For $m=0, \ldots ,M$, for every mini-batch $\B$, 
        the stochastic partial derivatives are unbiased, i.e.,
            $\mathbb{E}_{\B}{\nabla_m F_{\B}(\Theta)} = \nabla_m F(\Theta)$.
\end{assumption}
\begin{assumption}{3}
    \label{var.assum} \textbf{Bounded variance}:
     For $m=0, \ldots ,M$, there exists constants ${\sigma_m < \infty}$
        such that the variance of the stochastic partial derivatives are bounded as:
            $\mathbb{E}_{\B}{\lrVert{\nabla_m F(\Theta) -  \nabla_m F_{\B}(\Theta)}^2} \leq \frac{\sigma_m^2}{B}$
        for a mini-batch $\B$ of size $B$.
\end{assumption}

Assumption~\ref{smooth.assum} bounds how fast the
gradient and stochastic partial derivatives can change. 
Assumptions~\ref{bias.assum} and
\ref{var.assum} require that the stochastic partial
derivatives are unbiased estimators of the true
partial derivatives with bounded variance.
Assumptions~\ref{smooth.assum}--\ref{var.assum}
are common assumptions in convergence analysis of 
gradient-based algorithms~\citep{
tsitsiklis1986distributed,
HogWild18,
bottou2018optimization}.
We note Assumptions~\ref{bias.assum}--\ref{var.assum} are similar to
the IID assumptions in HFL convergence analysis. However,
in VFL settings, all parties store identical sample IDs but different
subsets of features. Hence, there is no equivalent notion of a non-IID
distribution in VFL.

\begin{assumption}{4}
    \label{smooth2.assum} \textbf{Bounded Hessian}: 
    There exists positive constants $H_m$ for $m=0, \ldots, M$
    such that for all $\Theta$, the second partial derivatives of $F_{\B}$
    with respect to $h_m(\theta_m; \X_m^{\B})$ satisfy: 
        \begin{align}
           \|\nabla^2_{h_m(\theta_m; \X_m^{\B})} F_{\B}(\Theta)\|_{\mathcal{F}} \leq H_m
        \end{align}
    for any mini-batch $\B$.
\end{assumption}

\begin{assumption}{5}
    \label{bounded.assum} \textbf{Bounded Embedding Gradients}: 
    There exists positive constants $G_m$ for $m=0, \ldots, M$
    such that for all $\theta_m$, the stochastic embedding gradients are bounded:
        \begin{align}
            \|\nabla_{\theta_m} h_m (\theta_m; \X_m^{\B})\|_{\mathcal{F}} \leq G_m
        \end{align}
        for any mini-batch $\B$.
\end{assumption}

Since we are assuming
a Lipschitz-continuous loss function (Assumption~\ref{smooth.assum}),
we know the Hessian of $F$ is bounded.
Assumption~\ref{smooth2.assum} strengthens this assumption
slightly to also bound the Hessian over any mini-batch.
Assumption~\ref{bounded.assum} bounds the magnitude
of the partial derivatives with respect to embeddings.
This embedding gradient bound is necessary to ensure
convergence in the presence of embedding compression error
(see Appendix~\ref{lemmas.sec} for details).

\section{Algorithm} \label{alg.sec}

We now present \mbox{C-VFL}, 
a communication-efficient method for training a global model
with vertically partitioned data.
In each \emph{global round}, 
a mini-batch $\B$ is chosen randomly from all samples and
parties share necessary information for
local training on this mini-batch. 
Each party $m$, in parallel, runs block-coordinate
stochastic gradient descent on its local model parameters $\theta_m$ 
for $Q$ local iterations. 
\mbox{C-VFL} runs for a total of $R$ global rounds,
and thus runs for $T=RQ$ total local iterations.

For party $m$ to compute the stochastic gradient 
with respect to its features, %
it \rev{requires the embeddings computed by all other parties $j \neq m$.
In \mbox{C-VFL}, these embeddings are shared with the server then distributed to the parties.} 
We reduce communication cost by only sharing embeddings
every global round. Further, each party compresses their
embeddings before sharing.
We define a set of general compressors for compressing party embeddings
and the server model:
$\mathcal{C}_m(\cdot) : \mathbb{R}^{P_m} \rightarrow \mathbb{R}^{P_m}$ 
for $m=0, \ldots, M$.  
To calculate the gradient for data sample $x^i$,
party $m$ receives $\mathcal{C}_j(h_j(\theta_j ; x_j^i))$ 
from all parties $j \neq m$.
With this information,
a party $m$ can compute $\nabla_m F_{\B}$ and update its parameters $\theta_m$
for multiple local iterations.
Note that each party uses a stale view of the global model
to compute its gradient during these local iterations, as it is reusing
the embeddings it receives at the start of the round.
In Section~\ref{main.sec}, we show that \mbox{C-VFL} converges even though
parties use stale information.
An example view a party has of the global model during training 
is in Figure~\ref{vflview.fig}.
Here, $t$ is the current iteration and $t_0$ is the start of
the most recent global round, when embeddings were shared.

\begin{algorithm}[t]
    \begin{algorithmic}[1]
    \STATE {\textbf{Initialize:}} $\theta_m^0$ for all parties $m$ and server model $\theta_0^0$
    \FOR {$t \leftarrow 0, \ldots, T-1$}
        \IF{$t$ mod $Q = 0$}
            \STATE Randomly sample $\B^t \in \{\X, \y\}$
            \FOR {$m \leftarrow 1, \ldots, M$ in parallel}
                \STATE Send $\mathcal{C}_m(h_m(\theta_m^t ; \X^{\B^t}_m))$ to server %
            \ENDFOR
            \STATE $\hat{\Phi}^{t_0} \leftarrow 
                \{\mathcal{C}_0(\theta_0^t), \mathcal{C}_1(h_1(\theta_1^t)), 
                \ldots, \mathcal{C}_M(h_M(\theta_M^t))\}$
            \STATE Server sends $\hat{\Phi}^{t_0}$ to all parties 
        \ENDIF
        \FOR {$m \leftarrow 0, \ldots, M$ in parallel}
        \STATE $\hat{\Phi}_m^t \leftarrow \{ \hat{\Phi}_{-m}^{t_0} ; h_m(\theta_m^{t}; \X_m^{\B^{t_0}})\}$ 
            \STATE $\theta_m^{t+1} \leftarrow \theta_m^t - \eta^{t_0}
            \nabla_m F_{\B}(\hat{\Phi}_m^t ; \y^{\B^{t_0}})$  %
    	\ENDFOR
    \ENDFOR
    \end{algorithmic}
    \caption{Compressed Vertical Federated Learning}
    \label{cvfl.alg}
\end{algorithm}
Algorithm~\ref{cvfl.alg} details the procedure of
\mbox{C-VFL}. In each global round, 
when $t$ mod $Q = 0$, a mini-batch $\B$ is randomly sampled from $\X$
and the parties exchange the associated embeddings,
compressed using $\mathcal{C}_m(\cdot)$, via the server (lines 3-9).  
Each party $m$ completes $Q$ local iterations, 
using the compressed embeddings it received in iteration
$t_0$ and 
its own $m$-th uncompressed embeddings $h_m(\theta_m^{t}, \X_m^{\B^{t_0}})$.
We denote the set of embeddings that each party receives in iteration $t_0$ as: 
\begin{align}
    \hat{\Phi}^{t_0} \coloneqq 
\{ \mathcal{C}_0(\theta_0^{t_0}), 
    \mathcal{C}_1(h_1(\theta_1^{t_0})), \ldots , 
\mathcal{C}_M(h_M(\theta_M^{t_0})) \}.
\end{align}
We let $\hat{\Phi}_{-m}^{t_0}$ be the set of compressed embeddings from parties $j \neq m$,
and let $\hat{\Phi}_{m}^t \coloneqq \{ \hat{\Phi}_{-m}^{t_0} ; h_m(\theta_m^{t}; \X_m^{\B^{t_0}})\}$.
For each local iteration, each party $m$ updates $\theta_m$ 
by computing the stochastic partial derivatives 
$\nabla_m F_{\B}(\hat{\Phi}_m^t ; \y^{\B^{t_0}})$ and
applying a gradient step with step size $\eta^{t_0}$ (lines 11-14). 

\rev{A key difference of \mbox{C-VFL} from
previous VFL algorithms is the support of a 
server model with trainable parameters, allowing for arbitrary fusion networks. To
support such a model with multiple local iterations, the
server model parameters are shared with the parties.}
Also note that the same mini-batch is used for all $Q$ local iterations, thus
communication is only required every $Q$ iterations. 
Therefore, without any compression,
the total communication cost is $O(R\cdot M \cdot (B \cdot \sum_m P_m + |\theta_0|))$
for $R$ global rounds.
Our compression technique replaces $P_m$ and $|\theta_0|$ with
smaller values based on the compression factor.
For cases where embeddings, the batch size, and the server model are large, 
this reduction can greatly decrease the communication cost.

\paragraph*{Privacy.}
We now discuss privacy-preserving mechanisms for \mbox{C-VFL}. 
In HFL settings, model update or gradient information is shared in messages.
It has been shown that gradients 
can leak information about the raw data~\citep{DBLP:journals/tifs/PhongAHWM18, DBLP:conf/nips/GeipingBD020}.
However in {\mbox{C-VFL}}, parties only share embeddings and can
only calculate the partial derivatives associated with the
server model and their local models. Commonly proposed
HFL gradient attacks cannot be performed on \mbox{C-VFL}.
Embeddings may be vulnerable to model inversion attacks~\citep{MahendranV15},
which are methods by which an attacker can recover raw input to a model
using the embedding output and black-box access to the model. 
One can protect against such an
attack using homomorphic encryption~\citep{SecureBoost, hardy2017private} 
or secure multi-party computation~\citep{gu2021privacy}.
Alternatively, if the input to the server model is the sum of party
embeddings, then secure aggregation methods~\citep{DBLP:journals/corr/BonawitzIKMMPRS16} 
can be applied.
\rev{Several works have proposed privacy-perserving methods for VFL 
settings~\cite{SecureBoost, VFL_SMC_2015, Zheng2022Secret}
that are compatible with the {\mbox{C-VFL}} algorithm.}

Note that \mbox{C-VFL} assumes all parties have access to the labels.
For low-risk scenarios, such as predicting credit score,
labels may not need to be private among the parties.
In cases where labels are private, 
one can augment {\mbox{C-VFL}} to apply the method in \cite{liu2019communication} 
for gradient calculation without the need for sharing labels.
Our analysis in Section~\ref{main.sec} would still hold in this case, 
and the additional communication is reduced by the use of message compression.

\section{Analysis} \label{main.sec}

\begin{table*}[t]
\small
\centering
\caption{
Choice of common compressor parameters to achieve 
a convergence rate of $O(1/\sqrt{T})$.
$P_m$ is the size of the $m$-th embedding.
In scalar quantization, we let there be $2^q$ quantization levels,
and let $h_{\max}$ and $h_{\min}$ be respectively the
maximum and minimum components in $h_m(\theta_m^t; x_m^i)$ 
for all iterations $t$, parties $m$, and $x_m^i$.
We let $V$ be the size of the lattice cell in vector quantization.
We let $k$ be the number of parameters sent in an embedding after top-$k$ sparsification, and $(\|h\|^2)_{\max}$ be the maximum value
of $\|h_m(\theta_m^t; x_m^i)\|^2$ 
for all iterations $t$, parties $m$, and $x_m^i$. 
}
\label{comp.table}
\vskip 0.1in
\resizebox{\textwidth}{!}{
\begin{tabular}{lccc}
    \toprule 
    & \textbf{Scalar Quantization}
    & \textbf{Vector Quantization}
    & \textbf{Top-$k$ Sparsification}\\
    \midrule
    \textbf{Parameter Choice}
    & $q = \Omega \left(\log_2 \left(B P_m(h_{\max} - h_{\min})^2\sqrt{T}\right) \right)$ 
    & $V = O\left( \frac{1}{BP_m\sqrt{T}} \right)$ 
    & $k = \Omega \left(P_m - \frac{P_m}{B(\lrVert{h}^2)_{\max}\sqrt{T}}\right)$ \\ 
    \textbf{Compression Error}
    & $\mathcal{E}_m^{t_0} \leq BP_m\frac{(h_{\max} - h_{\min})^2}{12}2^{-2q}$
    & $\mathcal{E}_m^{t_0} \leq \frac{VBP_m}{24}$
    & $\mathcal{E}_m^{t_0} \leq B(1-\frac{k}{P_m})(\|h\|^2)_{\max}$\\
    \bottomrule
\end{tabular}}
\end{table*}

In this section, we discuss our analytical approach 
and present our theoretical results.
We first define the compression error associated with $\mathcal{C}_m(\cdot)$:
\begin{definition}
    \label{compress.assum} \textbf{Compression Error}: 
    Let vectors $\epsilon_m^{x^i}$ for $m=0, \ldots ,M$,
    be the compression errors of $\mathcal{C}_m(\cdot)$ on a data sample $x^i$:
        $\epsilon_m^{x^i} \coloneqq  \mathcal{C}_m(h_m (\theta_m; x^i)) - h_m (\theta_m; x^i)$. 
    Let $\epsilon_m^{t_0}$ be the $P_m \times B$ matrix 
    with $\epsilon_m^{x^i}$ for all data
samples $x^i$ in mini-batch $\B^{t_0}$ as the columns.
We denote the expected squared message compression error from party $m$ at round $t_0$ 
as $\mathcal{E}_m^{t_0} \coloneqq  \mathbb{E}\lrVert{\epsilon_m^{t_0}}_{\mathcal{F}}^2$.
\end{definition}
Let $\hat{\G}^t$ be the stacked partial derivatives at iteration $t$: 
$$\hat{\G}^t \coloneqq [(\nabla_0 F_{\B}(\hat{\Phi}_0^t ; \y^{\B^{t_0}}))^T, \ldots, (\nabla_M F_{\B}(\hat{\Phi}_M^t ; \y^{\B^{t_0}}))^T]^T.$$
The model $\Theta$ evolves as:
\begin{align}
    \Theta^{t+1} = \Theta^t - \eta^{t_0} \hat{\G}^t.
\end{align}
We note the reuse of the mini-batch of $\B^{t_0}$ for $Q$ iterations in this recursion.
This indicates that the stochastic gradients are not unbiased during
local iterations $t_0+1 \leq t \leq t_0+Q-1$. 
Using conditional expectation, we can apply Assumption~\ref{bias.assum} 
to the gradient calculated at iteration $t_0$ when there is no compression error.
We define $\Phi^{t_0}$ to be the set of embeddings that would be received
by each party at iteration $t_0$ if no compression error were applied:
\begin{align}
    \Phi^{t_0} &\coloneqq 
    \{ \theta_0^{t_0}, 
    h_1(\theta_1^{t_0}), \ldots , 
    h_M(\theta_M^{t_0}) \}.
\end{align}
We let $\Phi_{-m}^{t_0}$ be the set of embeddings from parties $j \neq m$,
and let $\Phi_{m}^t \coloneqq \{ \Phi_{-m}^{t_0} ; h_m(\theta_m^{t}; \X_m^{\B^{t_0}})\}$.
Then, if we take expectation over $\B^{t_0}$ conditioned
on previous global models $\Theta^t$ up to $t_0$:
\begin{align}
    &\mathbb{E}_{\B^{t_0}}[\nabla_m F_{\B} (\Phi_m^{t_0}) ~|~ \{\Theta^{\tau}\}_{\tau=0}^{t_0}]     = \nabla_m F( \Phi_m^{t_0} ).
    \label{bias_imp2.assum}
\end{align}
With the help of (\ref{bias_imp2.assum}), we can prove convergence by
bounding the difference between the gradient at 
the start of each global round and those calculated during local iterations
(see the proof of Lemma~\ref{diff.lemma} in Appendix~\ref{lemmas.sec} for details).

To account for compression error, 
using the chain rule and 
Taylor series expansion,  
we obtain:
\begin{lemma}{1} \label{error.lemma} 
    Under Assumptions~\ref{smooth2.assum}-\ref{bounded.assum}, the norm of the difference
    between the objective function value with compressed and uncompressed embeddings
    is bounded as:
\begin{align}
    \mathbb{E} \|\nabla_m F_{\B}(\hat{\Phi}_m^t) - \nabla_m F_{\B}(\Phi_m^t)\|^2
    &\leq H_m^2 G_m^2 \sum_{j=0, j \neq m}^M 
    \mathcal{E}_j^{t_0}.
    \nonumber
\end{align}
\end{lemma}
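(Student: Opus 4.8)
The plan is to isolate the two sources of the bound — the embedding Jacobian (contributing $G_m$) and the loss Hessian in embedding space (contributing $H_m$) — and to show that the compression perturbation enters only through the other parties' embeddings, so that its squared magnitude collapses into $\sum_{j\neq m}\mathcal{E}_j^{t_0}$. Throughout I would work at the mini-batch level, regarding $F_{\B}$ as a function of the stacked embeddings $\mathbf{h}_0,\dots,\mathbf{h}_M$ with $\mathbf{h}_m = h_m(\theta_m;\X_m^{\B^{t_0}})$. The key structural observation is that $\hat{\Phi}_m^t$ and $\Phi_m^t$ agree in the $m$-th slot (party $m$'s own embedding is never compressed) and differ in every slot $j\neq m$ exactly by the compression error $\epsilon_j^{t_0}$.

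First I would apply the chain rule to write $\nabla_m F_{\B} = (\nabla_{\theta_m}\mathbf{h}_m)^{T}\,\nabla_{\mathbf{h}_m}F_{\B}$. The Jacobian $\nabla_{\theta_m}\mathbf{h}_m$ depends only on $\theta_m$ and $\X_m$, not on the other parties' embeddings, so it is identical when evaluated at $\hat{\Phi}_m^t$ and at $\Phi_m^t$ and factors out of the difference. Bounding this factor with Assumption~\ref{bounded.assum} (so that $\|\nabla_{\theta_m}\mathbf{h}_m\|_{\mathcal{F}}\le G_m$) yields $\|\nabla_m F_{\B}(\hat{\Phi}_m^t)-\nabla_m F_{\B}(\Phi_m^t)\| \le G_m\,\|\nabla_{\mathbf{h}_m}F_{\B}(\hat{\Phi}_m^t)-\nabla_{\mathbf{h}_m}F_{\B}(\Phi_m^t)\|$.

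Next I would control the remaining factor with an exact (integral) mean-value form of the Taylor expansion of $\nabla_{\mathbf{h}_m}F_{\B}$ along the segment joining $\Phi_m^t$ to $\hat{\Phi}_m^t$. Writing the perturbation $\Delta$ as the stacked compression errors — nonzero only in slots $j\neq m$ — gives $\nabla_{\mathbf{h}_m}F_{\B}(\hat{\Phi}_m^t)-\nabla_{\mathbf{h}_m}F_{\B}(\Phi_m^t)=\big(\int_0^1 \nabla^2_{\mathbf{h}_m}F_{\B}(\Phi_m^t+s\Delta)\,ds\big)\Delta$. Because Assumption~\ref{smooth2.assum} bounds the relevant embedding Hessian by $H_m$ uniformly over all $\Theta$, hence at every interior point $\Phi_m^t+s\Delta$, Cauchy--Schwarz gives $\|\nabla_{\mathbf{h}_m}F_{\B}(\hat{\Phi}_m^t)-\nabla_{\mathbf{h}_m}F_{\B}(\Phi_m^t)\|\le H_m\|\Delta\|$, and since $\Delta$ only collects the errors $\epsilon_j^{t_0}$ for $j\neq m$ we have $\|\Delta\|^2=\sum_{j\neq m}\|\epsilon_j^{t_0}\|_{\mathcal{F}}^2$. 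Squaring, taking expectation, and using $\mathcal{E}_j^{t_0}=\mathbb{E}\|\epsilon_j^{t_0}\|_{\mathcal{F}}^2$ then produces the claimed bound $H_m^2 G_m^2\sum_{j\neq m}\mathcal{E}_j^{t_0}$.

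The main obstacle I anticipate is the third step: setting up the multivariate expansion so that the second-order object that appears is precisely the one controlled by Assumption~\ref{smooth2.assum}, and so that the perturbation contributes $\sum_{j\neq m}\|\epsilon_j^{t_0}\|_{\mathcal{F}}^2$ rather than cross terms $\|\epsilon_j\|\,\|\epsilon_{j'}\|$. Using the integral form (rather than a first-order expansion with a separate remainder) is what lets the bound be exact, with no leftover higher-order term. The subtle point is that the perturbation acts only on the $j\neq m$ slots, so the expansion really invokes the off-diagonal blocks of the embedding Hessian coupling $\mathbf{h}_m$ to $\mathbf{h}_j$; I would therefore rely on Assumption~\ref{smooth2.assum} bounding this block row of the Hessian (equivalently, each cross block by $H_m$), which is exactly the uniform Frobenius bound the assumption supplies.
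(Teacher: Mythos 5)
Your proposal follows essentially the same route as the paper's proof: factor out the embedding Jacobian via the chain rule and bound it by $G_m$ (Assumption~\ref{bounded.assum}), then bound the difference of embedding-space gradients by $H_m$ times the norm of the stacked compression perturbation via a Taylor/mean-value argument (Assumption~\ref{smooth2.assum}), and finally use that the perturbation is block-orthogonal so its squared Frobenius norm is exactly $\sum_{j\neq m}\lVert\epsilon_j^{t_0}\rVert_{\mathcal{F}}^2$. Your integral-remainder formulation is just a cleaner justification of the bound $\lVert R_0^m\rVert_{\mathcal{F}}\le H_m\lVert E_m^{t_0}\rVert_{\mathcal{F}}$ that the paper asserts as a ``property of the Taylor series approximation error,'' so the two arguments coincide in substance.
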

The proof of Lemma~\ref{error.lemma} is given in Appendix~\ref{lemmas.sec}.
Using Lemma~\ref{error.lemma}, we can bound the effect of compression error
on convergence.

We present our main theoretical results.
All proofs are provided in Appendix~\ref{proofs.sec}.

\begin{theorem} \label{main.thm} \textbf{Convergence with fixed step size}: 
    Under Assumptions~\ref{smooth.assum}-\ref{bounded.assum}, 
    if $\eta^{t_0} = \eta$ for all iterations and satisfies
$\eta^{t_0} \leq \frac{1}{16Q\max\{L,\max_m L_m\}}$,
then the average squared gradient over
$R$ global rounds of Algorithm 1 is bounded:
\begin{align}
    &\frac{1}{R} \sum_{t_0=0}^{R-1} \Etot{\lrVert{\nabla F(\Theta^{t_0})}^2} 
    \nonumber \\
    &~~~~~~~\leq \frac{4\left[F(\Theta^0) - \Etot{F(\Theta^T)}\right]}{\eta T}
    + 6\eta QL\sum_{m=0}^M \frac{\sigma_m^2}{B} 
    \nonumber \\ &~~~~~~~~~~~~~~~~~~~
    + \frac{92Q^2}{R}  \sum_{m=0}^M H_m^2 G_m^2 
    \sum_{t_0=0}^{R-1} \sum_{j=0, j \neq m}^M \mathcal{E}_j^{t_0}.
    \label{main.eq}
\end{align}
\end{theorem}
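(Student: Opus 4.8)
The plan is to run a standard non-convex descent argument, but carefully anchored at the start-of-round iterates $\Theta^{t_0}$ to cope with the two sources of bias in $\hat{\G}^t$: the reuse of the mini-batch $\B^{t_0}$ across the $Q$ local iterations, and the compression/staleness of the other parties' embeddings. First I would invoke the $L$-smoothness of $F$ (Assumption~\ref{smooth.assum}) as a descent lemma at each iteration $t$, using $\Theta^{t+1} = \Theta^t - \eta\hat{\G}^t$, to obtain
\begin{equation*}
\mathbb{E}[F(\Theta^{t+1})] \leq \mathbb{E}[F(\Theta^t)] - \eta\,\mathbb{E}\langle \nabla F(\Theta^t), \hat{\G}^t\rangle + \frac{L\eta^2}{2}\,\mathbb{E}\|\hat{\G}^t\|^2.
\end{equation*}
The next move is to replace $\nabla F(\Theta^t)$ in the cross term by the anchor $\nabla F(\Theta^{t_0})$, paying a drift error $\|\nabla F(\Theta^t)-\nabla F(\Theta^{t_0})\|^2 \le L^2\|\Theta^t-\Theta^{t_0}\|^2$, and then applying the polarization identity $-\langle a,b\rangle = \tfrac12(\|a-b\|^2 - \|a\|^2 - \|b\|^2)$ with $a=\nabla F(\Theta^{t_0})$, $b=\hat{\G}^t$. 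This surfaces the desired negative term $-\tfrac{\eta}{2}\|\nabla F(\Theta^{t_0})\|^2$, a term $-\tfrac{\eta}{2}\|\hat{\G}^t\|^2$ that will partly cancel against the quadratic term for small $\eta$, and an error term $\tfrac{\eta}{2}\|\hat{\G}^t-\nabla F(\Theta^{t_0})\|^2$ that must be controlled.

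The heart of the proof is bounding both $\mathbb{E}\|\hat{\G}^t-\nabla F(\Theta^{t_0})\|^2$ and $\mathbb{E}\|\hat{\G}^t\|^2$ by a block-wise split into three pieces. For each block $m$ I would write $\nabla_m F_{\B}(\hat{\Phi}_m^t) - \nabla_m F(\Theta^{t_0})$ as the sum of (i) the \emph{compression error} $\nabla_m F_{\B}(\hat{\Phi}_m^t) - \nabla_m F_{\B}(\Phi_m^t)$, bounded directly by Lemma~\ref{error.lemma} as $H_m^2 G_m^2 \sum_{j\neq m}\mathcal{E}_j^{t_0}$; (ii) the \emph{parameter-drift / staleness} term $\nabla_m F_{\B}(\Phi_m^t)-\nabla_m F_{\B}(\Phi_m^{t_0})$, bounded via the $L_m$-smoothness half of Assumption~\ref{smooth.assum} in terms of $\|\Theta^t-\Theta^{t_0}\|$, which is exactly the quantity Lemma~\ref{diff.lemma} controls; and (iii) the \emph{stochastic noise} $\nabla_m F_{\B}(\Phi_m^{t_0}) - \nabla_m F(\Phi_m^{t_0})$, which by the anchored unbiasedness (\ref{bias_imp2.assum}) and the variance bound (Assumptions~\ref{bias.assum}--\ref{var.assum}) contributes $\sigma_m^2/B$. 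Young's inequality splits the squared norm across these three contributions (producing the constant factors), and only at the anchor $t_0$ do I invoke unbiasedness, since that is the single iteration where the mini-batch is independent of the iterate.

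Finally I would sum the per-step inequality over all $T=RQ$ iterations. The left side telescopes to $F(\Theta^0)-\mathbb{E}[F(\Theta^T)]$; the noise pieces accumulate to a term proportional to $\eta^2 T L\sum_m \sigma_m^2/B$, and the compression pieces, after passing through the drift bound (which carries a factor $Q$) and being summed over the $Q$ iterations of a round, accumulate with a $Q^2$ factor into $\sum_{t_0}\sum_m H_m^2 G_m^2\sum_{j\neq m}\mathcal{E}_j^{t_0}$. The step-size condition $\eta \le \tfrac{1}{16Q\max\{L,\max_m L_m\}}$ is used to guarantee that the drift-induced $\|\nabla F(\Theta^{t_0})\|^2$ contributions and the $\tfrac{L\eta^2}{2}\|\hat{\G}^t\|^2$ terms are absorbed, leaving a strictly positive coefficient (of order $\eta Q$) in front of $\sum_{t_0}\|\nabla F(\Theta^{t_0})\|^2$. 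Dividing through by $\eta T$ and by that coefficient, and tracking constants, yields (\ref{main.eq}) with the factors $4$, $6$, and $92Q^2$. I expect the main obstacle to be step (ii)/(iii): because the same mini-batch is reused for $Q$ iterations, $\hat{\G}^t$ is biased away from the anchor, so every expectation must be taken conditionally at $t_0$ and the loss of unbiasedness compensated by the smoothness-based drift estimate; making the drift bound interact with the step-size threshold so that the accumulated gradient coefficient stays positive, while keeping all the Young's-inequality constants from blowing up, is the delicate bookkeeping.
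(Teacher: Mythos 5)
Your proposal is correct and follows essentially the same route as the paper: an $L$-smooth descent step anchored at $\Theta^{t_0}$, a polarization identity to surface $-\|\nabla F(\Theta^{t_0})\|^2$, unbiasedness invoked only at the round start, and a three-way split of the gradient error into compression (Lemma~\ref{error.lemma}), local-iteration drift (the recursion of Lemma~\ref{diff.lemma}), and mini-batch variance, absorbed via the step-size condition before telescoping. The only differences are organizational — the paper applies the descent lemma once per round and polarizes against the uncompressed round-start gradient $\G^{t_0}$ rather than against $\nabla F(\Theta^{t_0})$ directly — which affects constants but not the argument.
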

The first term in (\ref{main.eq}) is based on the
difference between the initial model and final model of the algorithm. 
The second term is the error associated with the
variance of the stochastic gradients 
and the Lipschitz constants $L$ and $L_m$'s.  
The third term relates to the average compression error over all iterations.
The larger the error introduced by a compressor,
the larger the convergence error is. 
We note that setting $\mathcal{E}_j^{t_0}=0$ 
for all parties and iterations
provides an error bound on VFL without compression
and is an improvement over the bound in \cite{liu2019communication}
in terms of $Q$, $M$, and $B$. 
The second and third terms include a coefficient relating to
local iterations.  
As the number of local iterations $Q$ increases, the convergence error 
increases. However, increasing $Q$ also has the effect of reducing the 
number of global rounds. Thus, it may be beneficial to have
$Q>1$ in practice. 
We explore this more in experiments in Section~\ref{exp.sec}.
The second and third terms scale with $M$, the number of parties. 
However, VFL scenarios typically have a small 
number of parties~\citep{kairouz2019advances}, and thus $M$ plays
a small role in convergence error.
We note that when $M=1$ and $Q=1$, Theorem~\ref{main.thm} applies to 
Split Learning~\citep{DBLP:journals/jnca/GuptaR18}
when only uploads to the server are compressed. 

\begin{remark}{\normalfont
Let $\mathcal{E} = \frac{1}{R}\sum_{t_0=0}^{R-1} \sum_{m=0}^M 
\mathcal{E}_m^{t_0}$.
If $\eta^{t_0} = \frac{1}{\sqrt{T}}$ for all global rounds $t_0$, for $Q$ and $B$ independent of $T$,
then 
$$\frac{1}{R}{\sum_{t_0=0}^{R-1}} \Etot{\|\nabla F(\Theta^{t_0})\|^2} = O\left( \frac{1}{\sqrt{T}} + \mathcal{E}\right).$$
This indicates that if
$\mathcal{E} = O(\frac{1}{\sqrt{T}})$ 
then we can achieve a convergence rate of 
$O(\frac{1}{\sqrt{T}})$.
Informally, this means that \mbox{C-VFL} can afford compression error and not
worsen asymptotic convergence when this condition is satisfied.
We discuss how this affects commonly used compressors in practice later in the section.

}\end{remark}

We consider a diminishing step size in the following:
\begin{theorem} \label{main2.thm} \textbf{Convergence with diminishing step size}:
    Under Assumptions~\ref{smooth.assum}-\ref{bounded.assum}, 
    if $0 < \eta^{t_0} < 1$ satisfies
    $\eta^{t_0} \leq \frac{1}{16Q\max\{L,\max_m L_m\}}$,
then the minimum squared gradient over
$R$ global rounds of Algorithm 1 is bounded:
\begin{align*}
    &\min_{t_0=0,\ldots,R-1} \Etot{\lrVert{\nabla F(\Theta^{t_0})}^2} =
    \nonumber \\
    &O\left(\frac{1}{\sum_{t_0=0}^{R-1} \eta^{t_0}}
    + \frac{\sum_{t_0=0}^{R-1} (\eta^{t_0})^2}{\sum_{t=0}^{T-1} \eta^{t_0}} 
+ \frac{\sum_{t_0=0}^{R-1} \sum_{m=0}^M \eta^{t_0} \mathcal{E}_m^{t_0}}
{\sum_{t_0=0}^{R-1} \eta^{t_0}} \right).
\end{align*}
If $\eta^{t_0}$ and $\mathcal{E}_m^{t_0}$ satisfy
    $\sum_{t_0=0}^{\infty} \eta^{t_0} = \infty$,  
    $\sum_{t_0=0}^{\infty} (\eta^{t_0})^2 < \infty$, 
    and $\sum_{t_0=0}^{\infty} \sum_{m=0}^M \eta^{t_0} \mathcal{E}_m^{t_0} < \infty$,
    then 
    $\min_{t_0=0,\ldots,R-1} \Etot{\lrVert{\nabla F(\Theta^{t_0})}^2}
    \rightarrow 0$ as $R \rightarrow \infty$.
\end{theorem}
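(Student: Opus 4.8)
The plan is to mirror the fixed-step-size argument behind Theorem~\ref{main.thm}, but to retain the round-dependent step size $\eta^{t_0}$ throughout instead of factoring out a single constant $\eta$. The first step is to establish a per-global-round descent inequality. Starting from the smoothness descent lemma (Assumption~\ref{smooth.assum}) applied across the $Q$ updates of round $t_0$ and using the recursion $\Theta^{t+1}=\Theta^t-\eta^{t_0}\hat{\G}^t$, I expand $F$ at the start of the next round in terms of the inner products $\langle\nabla F(\Theta^t),\hat{\G}^t\rangle$ and the squared norms $\|\hat{\G}^t\|^2$. Taking conditional expectation and applying unbiasedness at the round start via~(\ref{bias_imp2.assum}), the variance bound (Assumption~\ref{var.assum}), the drift bound between $\nabla F(\Theta^{t_0})$ and the gradients computed during the local iterations (Lemma~\ref{diff.lemma}), and the compression-error bound (Lemma~\ref{error.lemma}), I obtain an inequality of the form
\begin{align*}
    c_1\eta^{t_0}\Etot{\|\nabla F(\Theta^{t_0})\|^2}
    &\leq \Etot{F(\Theta^{t_0})}-\Etot{F(\Theta^{t_0+Q})} \\
    &\quad + c_2(\eta^{t_0})^2\sum_{m=0}^M\frac{\sigma_m^2}{B} \\
    &\quad + c_3\eta^{t_0}\sum_{m=0}^M H_m^2G_m^2\!\!\sum_{\substack{j=0\\ j\neq m}}^M\!\!\mathcal{E}_j^{t_0},
\end{align*}
where $c_1,c_2,c_3$ depend only on $Q$, $L$, and the $L_m$.

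The only point requiring care relative to Theorem~\ref{main.thm} is that every intermediate inequality in this derivation uses the step-size constraint $\eta^{t_0}\leq\frac{1}{16Q\max\{L,\max_m L_m\}}$ purely \emph{within} round $t_0$; none of them relies on the step size being equal across rounds. Consequently the per-round inequality above holds verbatim with the variable step size, and this ``locality'' of the estimate is the main (and essentially only) obstacle — once it is verified, the fixed-step machinery transfers directly.

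I would then sum the per-round inequality over the $R$ round-starts. The objective terms telescope to $F(\Theta^0)-\Etot{F(\Theta^T)}$, which I bound above by the constant $F(\Theta^0)-F^\star$ with $F^\star=\inf_\Theta F(\Theta)$. Dividing through by $c_1\sum_{t_0}\eta^{t_0}$ and noting that the resulting left-hand side is a convex combination of the per-round squared gradient norms with weights $\eta^{t_0}/\sum_\tau\eta^\tau$, I lower-bound it by $\min_{t_0}\Etot{\|\nabla F(\Theta^{t_0})\|^2}$. Absorbing the constants $c_1,c_2,c_3$ and the factors $H_m^2G_m^2$, $\sigma_m^2/B$ into the $O(\cdot)$ yields exactly the claimed three-term bound: the first term comes from the telescoped objective gap, the second from the variance contribution $\sum(\eta^{t_0})^2$, and the third from the weighted compression error.

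Finally, for the convergence-to-zero claim I would check that each term vanishes as $R\to\infty$ under the stated hypotheses. The first term is $O(1/\sum_{t_0}\eta^{t_0})\to 0$ because $\sum_{t_0}\eta^{t_0}=\infty$. In the second and third terms the numerators $\sum(\eta^{t_0})^2$ and $\sum_{t_0}\sum_m\eta^{t_0}\mathcal{E}_m^{t_0}$ converge to finite limits by hypothesis, while the common denominator $\sum_{t_0}\eta^{t_0}$ diverges, so both ratios tend to $0$. Hence $\min_{t_0}\Etot{\|\nabla F(\Theta^{t_0})\|^2}\to 0$.
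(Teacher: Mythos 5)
Your proposal is correct and follows essentially the same route as the paper: the paper's own derivation in Appendix~\ref{main_proof.sec} keeps the round-dependent $\eta^{t_0}$ throughout, arrives at exactly the per-round inequality you describe, sums and telescopes, divides by $\sum_{t_0}\eta^{t_0}$, lower-bounds the weighted average by the minimum, and checks each term vanishes under the summability hypotheses. Your observation that the step-size condition is only used locally within each round is precisely why the fixed-step machinery transfers, as the paper implicitly relies on.
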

According to Theorem~\ref{main2.thm},
the product of the step size and the 
compression error must be summable over all iterations.
In the next subsection, we discuss how to choose common
compressor parameters to ensure this property is satisified.
We also see in Section~\ref{exp.sec} that good results can
be achieved empirically without diminishing the step size or 
compression error.%

\paragraph*{Common Compressors.}
In this section, we show how to choose common compressor parameters 
to achieve a convergence rate of $O(\frac{1}{\sqrt{T}})$ 
in the context of Theorem~\ref{main.thm},
and guarantee convergence in the context of Theorem~\ref{main2.thm}.
We analyze three common compressors: %
a uniform scalar quantizer~\citep{DBLP:journals/bstj/Bennett48}, 
a $2$-dimensional hexagonal lattice quantizer~\citep{DBLP:journals/tit/ZamirF96}, 
and top-$k$ sparsification~\citep{DBLP:conf/iclr/LinHM0D18}.
For uniform scalar quantizer,
we let there be $2^q$ quantization levels.
For the lattice vector quantizer,
we let $V$ be the volume of each lattice cell.
For top-$k$ sparsification, we let $k$ be the number of embedding components sent
in a message.
In Table~\ref{comp.table}, 
we present the choice of compressor parameters in order to 
achieve a convergence rate of $O(\frac{1}{\sqrt{T}})$ 
in the context of Theorem~\ref{main.thm}.
We show how we calculate these bounds in Appendix~\ref{common.sec} and 
provide some implementation details for their use. %
We can also use Table~\ref{comp.table} to choose compressor parameters
    to ensure convergence in the context of Theorem~\ref{main2.thm}. 
    Let $\eta^{t_0}=O(\frac{1}{t_0})$, where $t_0$ is the current round.
    Then setting $T=t_0$ in Table~\ref{comp.table} 
    provides a choice of compression parameters at each iteration 
    to ensure the compression error diminishes at a rate of $O(\frac{1}{\sqrt{t_0}})$,
    guaranteeing convergence.
Diminishing compression error 
can be achieved by increasing the number of quantization levels,
decreasing the volume of lattice cells, or increasing the number of
components sent in a message.

\section{Experiments} \label{exp.sec}
We present experiments to examine the performance of \mbox{C-VFL} 
in practice.
The goal of our experiments is to examine the effects that different
compression techniques have on training, 
and investigate the accuracy/communication trade-off empirically.
We run experiments on three datasets: 
the MIMIC-III dataset~\citep{johnson2016mimic}, 
the \mbox{ModelNet10} dataset~\citep{wu20153d}, and
the CIFAR-10 dataset~\citep{krizhevsky2009learning}.
We provide more details on the datasets and training procedure in Appendix~\ref{exp_details.sec},
as well as additional plots and experiments in Appendix~\ref{additional.sec}.

\textbf{MIMIC-III: }
MIMIC-III is an anonymized hospital patient time series dataset.
In MIMIC-III, the task is binary classification to predict in-hospital
mortality. We train with a set of $4$ parties, each storing $19$ of the $76$ features.
Each party trains an LSTM and the server trains two fully-connected layers.
We use a fixed step size of $0.01$, a batch size of $1000$, and 
train for $1000$ epochs.

\textbf{CIFAR-10: }
CIFAR-10 is an image dataset for object classification. 
We train with a set of $4$ parties, each storing a different quadrant of every image. 
Each party trains ResNet18, and the server trains a fully-connected layer.
We use a fixed step size of $0.0001$ and a batch size of $100$, and
train for $200$ epochs.

\textbf{ModelNet10: }
ModelNet10 is a set of CAD models, each with images of $12$ different camera views.
The task of ModelNet10 is classification of images into $10$ object classes.
We run experiments with both a set of $4$ and $12$ parties, 
where parties receive $3$ or $1$ view(s) of each CAD model, respectively.
Each party's network consists of two convolutional layers and a fully-connected layer, and
the server model consists of a fully-connected layer.
We use a fixed step size of $0.001$ and a batch size of $64$, and
train for $100$ epochs.

We consider the three compressors discussed in Section~\ref{main.sec}:
a uniform scalar quantizer, a $2$-dimensional 
hexagonal lattice (vector) quantizer, and top-$k$ sparsification.
For both quantizers, 
the embedding values need to be bounded.
In the case of the models used for MIMIC-III and CIFAR-10, 
the embedding values are already bounded,
but the CNN used for ModelNet10 may have unbounded embedding values.
We scale embedding values for \mbox{ModelNet10} to the range $[0,1]$.
We apply subtractive dithering
to both the scalar quantizer~\citep{wannamaker1997theory} 
and vector quantizer~\citep{DBLP:journals/tsp/ShlezingerCEPC21}.

In our experiments, each embedding component is a $32$-bit float.
Let $b$ be the bits per component we compress to.
For the scalar quantizer, this means there are $2^b$ quantization levels.
For the $2$-D vector quantizer, this means there are $2^{2b}$ 
vectors in the codebook. The volume $V$ of the vector quantizer is a
function of the number of codebook vectors.
For top-$k$ sparsification, $k = P_m \cdot \frac{b}{32}$
as we use $32$-bit components. 
We train using \mbox{C-VFL} 
and consider cases where $b=2$, $3$, and $4$.
We compare with a case where $b=32$. 
\rev{This corresponds to a standard VFL algorithm 
without embedding compression~\citep{liu2019communication}}, 
acting as a baseline for accuracy.

\begin{figure}[t]
    \begin{subfigure}{0.23\textwidth}
        \centering
        \includegraphics[width=\textwidth]{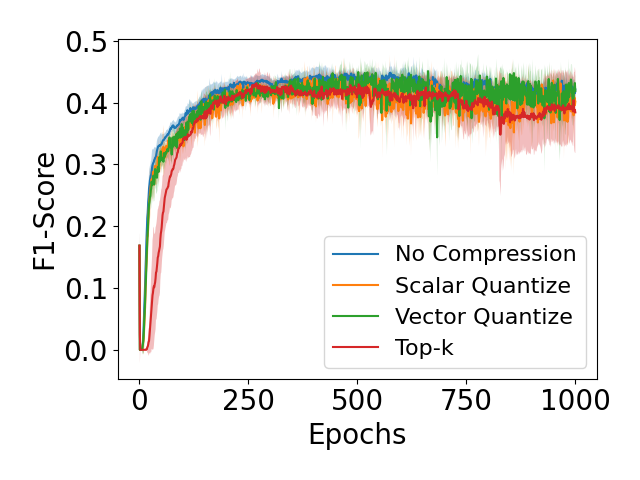}
        \caption{MIMIC-III by epochs}
        \label{mimic.fig}
    \end{subfigure}
    \begin{subfigure}{0.23\textwidth}
        \centering
        \includegraphics[width=\textwidth]{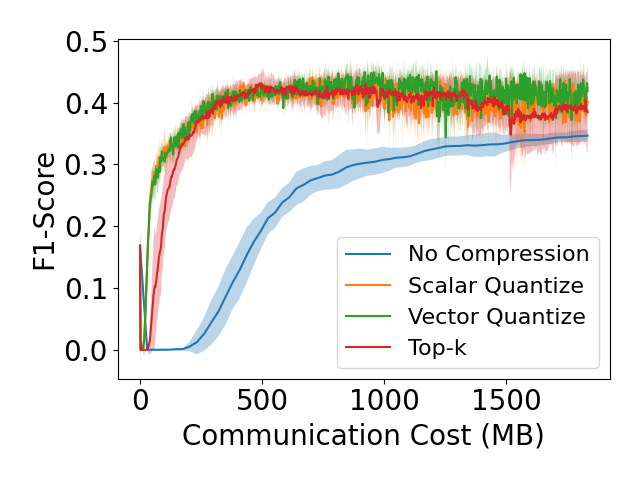}
        \caption{MIMIC-III by cost}
        \label{mimiccomm.fig}
    \end{subfigure}
    \begin{subfigure}{0.23\textwidth}
        \centering
        \includegraphics[width=\textwidth]{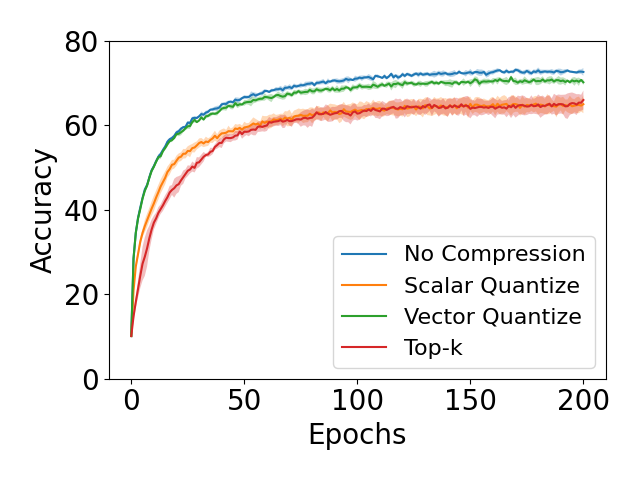}
        \caption{CIFAR-10 by epochs}
        \label{cifarepoch.fig}
    \end{subfigure}
    \begin{subfigure}{0.23\textwidth}
        \centering
        \includegraphics[width=\textwidth]{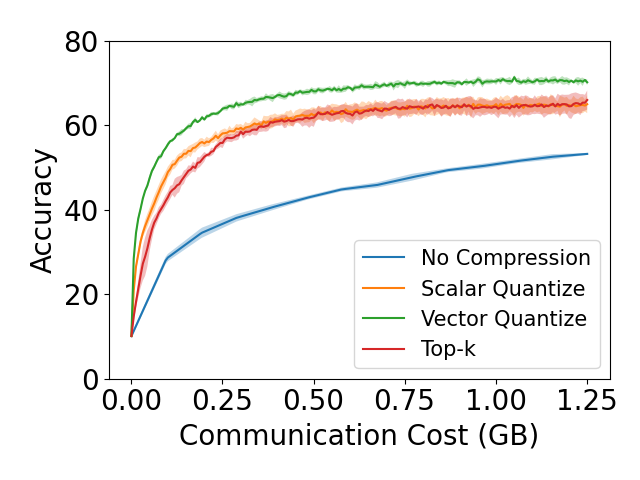}
        \caption{CIFAR-10 by cost}
        \label{cifarcomm.fig}
    \end{subfigure}
    \begin{subfigure}{0.23\textwidth}
        \centering
        \includegraphics[width=\textwidth]{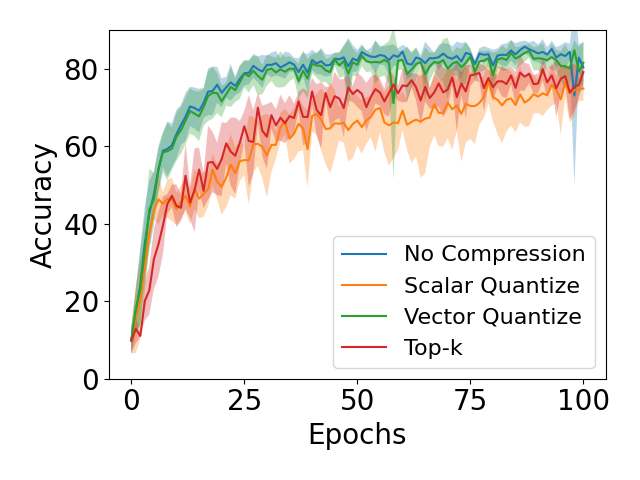}
        \caption{ModelNet10 by epochs}
        \label{mvcnn.fig}
    \end{subfigure}
    \hfill
    \begin{subfigure}{0.23\textwidth}
        \centering
        \includegraphics[width=\textwidth]{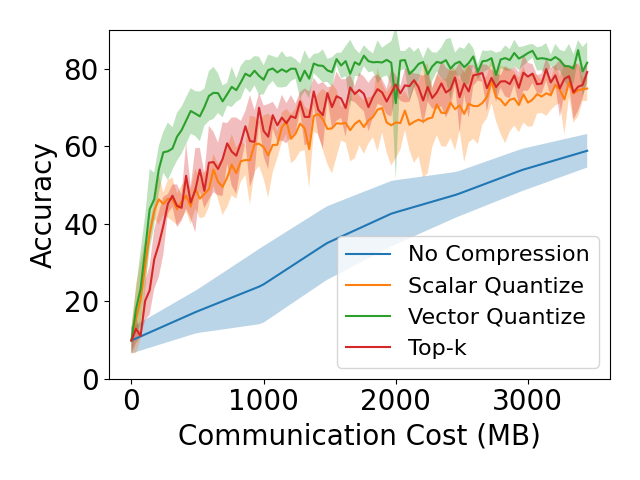}
        \caption{ModelNet10 by cost}
        \label{mvcnncomm.fig}
    \end{subfigure}
    \caption{\mbox{C-VFL} when compressing to $2$ bits per component.
        The solid lines are the mean of $5$ runs, while the shaded region represents
        the standard deviation.
        We show test $F_1$-Score on MIMIC-III dataset and test accuracy
    on CIFAR-10 and ModelNet10 dataset, plotted by epochs and communication cost.}
    \label{acc.fig}
\end{figure}

In Figure \ref{acc.fig}, 
we plot the test $F_1$-Score and test accuracy
for MIMIC-III, CIFAR-10, and ModelNet10 when training with $b=2$. 
We use $F_1$-Score for MIMIC-III as the in-hospital mortality
prediction task has high class imbalance; most people in the dataset 
did not die in the hospital.  
For these experiments, we let $M=4$ for ModelNet10.
The solid line in each plot represents
the average accuracy over five runs, while the shaded regions
represent one standard deviation. 
In Figures \ref{mimic.fig}, \ref{cifarepoch.fig}, and~\ref{mvcnn.fig}, 
we plot by the number of training epochs. 
We can see in all cases, although convergence can
be a bit slower, training with compressed embeddings 
still reaches similar accuracy to no compression.
In Figures \ref{mimiccomm.fig}, \ref{cifarcomm.fig}, and~\ref{mvcnncomm.fig}, 
we plot by the communication cost in bytes. The cost of communication
includes both the upload of (compressed) embeddings to the server and download of
embeddings and server model to all parties.
We can see that by compressing embeddings, we can reach higher
accuracy with significantly less communication cost.
In all datasets, the compressors reach similar accuracy to each other, 
though top-$k$ sparsification performs slightly worse than the others 
 on MIMIC-III,
while vector quantization performs the best in both on CIFAR-10 and ModelNet10.

\begin{table}[t]
\caption{MIMIC-III maximum $F_1$-Score reached during training,
and communication cost to reach a target test $F_1$-Score of $0.4$.
Value shown is the mean of $5$ runs, $\pm$ the standard deviation.
In these experiments, $Q=10$ and $M=4$.}
\label{acc_mimic.table}
\vskip 0.1in
\small
\centering
\resizebox{0.4\textwidth}{!}{
\begin{tabular}{lcccc}
    \toprule 
    \textbf{Compressor} & \subhead{Max $F_1$-Score} &\subhead{Cost (MB)}\\
                        & \subhead{Reached}         &\subhead{Target $=0.4$}\\
    \midrule
    \midrule
    None $b=32$                 &0.448 $\pm$ 0.010 & 3830.0 $\pm$ 558.2 \\
    \midrule
    Scalar $b=2$                &0.441 $\pm$ 0.018 & 233.1  $\pm$ 28.7  \\
    Vector $b=2$                &0.451 $\pm$ 0.021 & 236.1  $\pm$ 17.9  \\
    Top-$k$ $b=2$               &0.431 $\pm$ 0.016 & 309.8  $\pm$ 93.6  \\
    \midrule
    Scalar $b=3$                &0.446 $\pm$ 0.011 & 343.1  $\pm$ 18.8  \\
    Vector $b=3$                &0.455 $\pm$ 0.020 & 330.5  $\pm$ 10.6  \\
    Top-$k$ $b=3$               &0.435 $\pm$ 0.030 & 470.7  $\pm$ 116.8  \\
    \midrule
    Scalar $b=4$                &0.451 $\pm$ 0.020 & 456.0  $\pm$ 87.8   \\
    Vector $b=4$                &0.446 $\pm$ 0.017 & 446.5  $\pm$ 21.3   \\
    Top-$k$ $b=4$               &0.453 $\pm$ 0.014 & 519.1  $\pm$ 150.4  \\
    \bottomrule                                                                               
\end{tabular}
}
\end{table}

\begin{table}[t]
\caption{CIFAR-10 maximum test accuracy reached during training,
and communication cost to reach a target accuracy of $70\%$.
Value shown is the mean of $5$ runs, $\pm$ the standard deviation.
A ``--" indicates that the target was not reached during training.
We let $Q=10$ and $M=4$.}
\label{acc_cifar.table}
\vskip 0.1in
\small
\centering
\resizebox{0.4\textwidth}{!}{
\begin{tabular}{lcc}
    \toprule 
    \textbf{Compressor} & \subhead{Max Accuracy}    &\subhead{Cost (GB)}\\
    & \subhead{Reached}         &\subhead{Target $=70\%$}\\
    \midrule
    \midrule
    None $b=32$                 &73.18\% $\pm$ 0.44\% & 7.69 $\pm$ 0.35 \\
    \midrule
    Scalar $b=2$                &65.16\% $\pm$ 1.85\% & --              \\
    Vector $b=2$                &71.43\% $\pm$ 0.47\% & 0.68 $\pm$ 0.06 \\
    Top-$k$ $b=2$               &66.02\% $\pm$ 2.24\% & --              \\
    \midrule
    Scalar $b=3$                &71.49\% $\pm$ 1.05\% & 1.22 $\pm$ 0.17 \\
    Vector $b=3$                &72.50\% $\pm$ 0.40\% & 0.81 $\pm$ 0.05 \\
    Top-$k$ $b=3$               &71.56\% $\pm$ 0.81\% & 1.24 $\pm$ 0.22  \\
    \midrule
    Scalar $b=4$                &71.80\% $\pm$ 1.18\% & 1.72 $\pm$ 0.26  \\
    Vector $b=4$                &73.17\% $\pm$ 0.39\% & 0.98 $\pm$ 0.08  \\
    Top-$k$ $b=4$               &72.03\% $\pm$ 1.77\% & 1.43 $\pm$ 0.26  \\
    \bottomrule                                                                               
\end{tabular}
}
\end{table}

\begin{table}[t]
    \caption{ModelNet10 maximum test accuracy reached during training,
and communication cost to reach a target accuracy of $75\%$.
Value shown is the mean of $5$ runs, $\pm$ the standard deviation.
    We let $Q=10$ and $M=4$.}
\label{acc_modelnet.table}
\vskip 0.1in
\small
\centering
\resizebox{0.45\textwidth}{!}{
    \begin{tabular}{lcc}
    \toprule 
    \textbf{Compressor} &  \subhead{Max Accuracy}&      \subhead{Cost (MB)}\\
    &  \subhead{Reached}&          \subhead{Target $=75\%$}\\
    \midrule
    \midrule
    None $b=32$                 &85.68\% $\pm$ 1.57\% & 9604.80 $\pm$ 2933.40\\
    \midrule
    Scalar $b=2$                &76.94\% $\pm$ 5.87\% & 1932.00 $\pm$ 674.30 \\
    Vector $b=2$                &84.80\% $\pm$ 2.58\% & 593.40 $\pm$ 170.98  \\
    Top-$k$ $b=2$               &79.91\% $\pm$ 2.86\% & 1317.90 $\pm$ 222.95 \\
    \midrule
    Scalar $b=3$                &81.32\% $\pm$ 1.61\% & 1738.80 $\pm$ 254.79 \\
    Vector $b=3$                &85.66\% $\pm$ 1.36\% & 900.45 $\pm$ 275.01  \\
    Top-$k$ $b=3$               &81.63\% $\pm$ 1.24\% & 1593.90 $\pm$ 225.34  \\
    \midrule
    Scalar $b=4$                &81.19\% $\pm$ 1.88\% & 2194.20 $\pm$ 266.88  \\
    Vector $b=4$                &85.77\% $\pm$ 1.69\% & 1200.60 $\pm$ 366.68  \\
    Top-$k$ $b=4$               &83.50\% $\pm$ 1.21\% & 1821.60 $\pm$ 241.40  \\
    \bottomrule                                                                               
\end{tabular}
}
\end{table}

\begin{table}[t]
    \caption{MIMIC-III time in seconds to reach a target $F_1$-Score for
    different local iterations $Q$ and communication latency $t_c$ 
    with vector quantization and $b=3$.
    Value shown is the mean of $5$ runs, $\pm$ one standard deviation.
    }
    \label{local.table}
\vskip 0.1in
    \centering
    \resizebox{0.45\textwidth}{!}{
    \begin{tabular}{lccc}
        \toprule 
        $t_c$ & \multicolumn{3}{c}{\textbf{Time to Reach Target $F_1$-Score $0.45$}}\\
                                  & \subhead{$Q=1$}  & \subhead{$Q=10$} & \subhead{$Q=25$}\\
        \midrule
        \midrule
        $1$      &   694.53 $\pm$  150.75 &  470.86 $\pm$ 235.35 & 445.21 $\pm$  51.44  \\
        $10$     &  1262.78 $\pm$  274.10 &  512.82 $\pm$ 256.32 & 461.17 $\pm$  53.29 \\
        $50$     &  3788.32 $\pm$  822.30 &  699.30 $\pm$ 349.53 & 532.12 $\pm$  61.49 \\
        $200$    & 13259.14 $\pm$ 2878.04 & 1398.60 $\pm$ 699.05 & 798.19 $\pm$  92.23 \\
        \bottomrule
    \end{tabular}
    }
\end{table}

In Tables~\ref{acc_mimic.table}, \ref{acc_cifar.table} and~\ref{acc_modelnet.table}, we show the maximum 
test accuracy reached during training and the communication
cost to reach a target accuracy for MIMIC-III, CIFAR-10, and ModelNet10. 
We show results for all three
compressors with $b=2$, $3$, and $4$ bits per component, as 
well as the baseline of $b=32$. 
For the MIMIC-III dataset, we show the maximum test $F_1$-Score 
reached and the total communication cost of reaching an $F_1$-Score of $0.4$.
The maximum $F_1$-Score for each case is within a standard deviation of each other.
However, the cost to reach target score is much smaller 
as the value of $b$ decreases for all compressors.
We can see that when $b=2$, we can achieve over $90\%$
communication cost reduction over no compression to reach a target $F_1$-Score.

\rev{For the CIFAR-10 and ModelNet10 datasets, Tables~\ref{acc_cifar.table} and~\ref{acc_modelnet.table}
show the maximum test accuracy reached and the total communication cost of reaching a target accuracy.}
We can see that, \rev{for both datasets,} vector quantization tends to outperform both scalar
quantization and top-$k$ quantization.
Vector quantization benefits from considering components jointly,
and thus can have better reconstruction quality 
than scalar quantization and top-$k$ sparsification~\citep{woods2006multidimensional}.
\begin{figure}[t]
    \begin{subfigure}{0.23\textwidth}
        \centering
        \includegraphics[width=\textwidth]{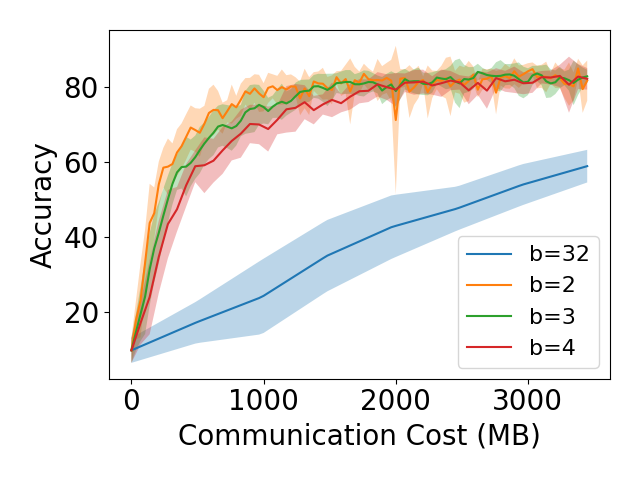}
        \caption{Parties $M=4$}
        \label{mvcnncomm4.fig}
    \end{subfigure}
    \hfill
    \begin{subfigure}{0.23\textwidth}
        \centering
        \includegraphics[width=\textwidth]{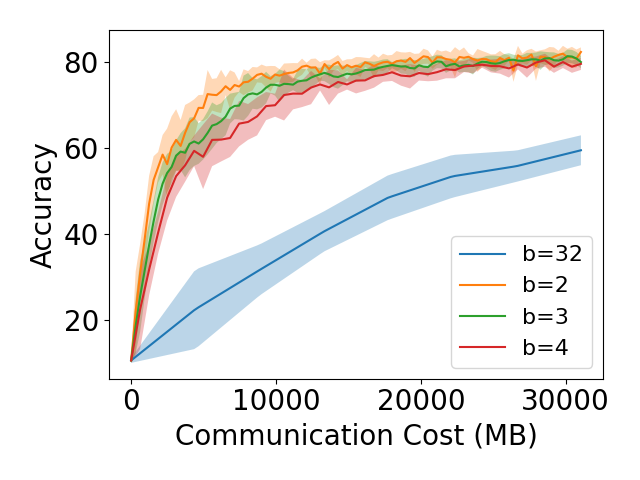}
        \caption{Parties $M=12$}
        \label{mvcnncomm12.fig}
    \end{subfigure}
    \caption{Communication cost of training on ModelNet10 with vector quantization.
        The solid lines are the mean of $5$ runs, and the shaded region represents
        one standard deviation.
    }
    \label{clients.fig}
\end{figure}

In Table~\ref{local.table}, we consider the communication/computation tradeoff
of local iterations.
We show how the number of local iterations
affects the time to reach a target $F_1$-Score in the MIMIC-III dataset. 
We train \mbox{C-VFL} with vector quantization $b=3$ and
set the local iterations $Q$ to $1$, $10$, and $25$. 
\rev{Note that the $Q=1$ case corresponds to adding embedding compression 
to previously proposed VFL algorithms that do not have multiple local 
iterations~\citep{FDML, pyvertical}.}
We simulate a scenario where computation time for training a mini-batch
of data at each party takes $10$ ms, and communication
of embeddings takes a total of $1$, $10$, $50$, and $200$ ms roundtrip. 
These different communication latencies correspond to the 
distance between the parties and the server:
within the same cluster, on the same local network, 
within the same region, and across the globe.
According to Theorem~\ref{main.thm},
increasing the number of local iterations $Q$ %
increases convergence error. However, the target test accuracy 
is reached within less time when $Q$ increases.
The improvement over $Q=1$ local iterations increases 
as the communication latency increases. In systems where communication
latency is high, it may be beneficial to increase the number of local iterations. 
The choice of $Q$ will depend on the accuracy requirements 
of the given prediction task and the time constraints on the prediction problem.

Finally, in Figure~\ref{clients.fig},
we plot the test accuracy of ModelNet10 against the communication
cost when using vector quantization with $b=2$, $3$, $4$, and $32$.
We include plots for $4$ and $12$ parties.
We note that changing the number of parties changes the 
global model structure $\Theta$ as well.
We can see in both cases that smaller values of $b$ 
reach higher test accuracies at lower communication cost.
The total communication cost is larger with $12$ parties,
but the impact of increasing compression is similar for
both $M=4$ and $M=12$.

\section{Conclusion} \label{conclusion.sec}
We proposed \mbox{C-VFL}, a distributed communication-efficient
algorithm for training a model over vertically partitioned data.
We proved convergence of the algorithm at a 
rate of $O( \frac{1}{\sqrt{T}} )$, and we showed experimentally
that communication cost could be reduced by over $90\%$ without a significant
decrease in accuracy.
For future work, we seek to relax our bounded gradient assumption
and explore the effect of adaptive compressors.

\rev{
\section*{Acknowledgements}
This work was supported by the Rensselaer-IBM AI Research Collaboration (\href{http://airc.rpi.edu}{http://airc.rpi.edu}), part of the IBM AI Horizons Network (\href{http://ibm.biz/AIHorizons}{http://ibm.biz/AIHorizons}),
and the National Science Foundation under grant CNS-1553340.
}

\balance
\bibliography{references}
\bibliographystyle{icml2022}

\clearpage

\setcounter{section}{0}
\renewcommand\thesection{\Alph{section}}
\numberwithin{equation}{section}
\counterwithin{figure}{section}
\counterwithin{table}{section}

\newpage
\appendix
\onecolumn

\section{Proofs of Theorems \ref{main.thm} and \ref{main2.thm}} \label{proofs.sec}
In this section, we provide the proofs for Theorems \ref{main.thm} and \ref{main2.thm}.

\subsection{Additional Notation} \label{notation.sec}
Before starting the proofs, we define some additional notation to be
used throughout.
At each iteration $t$, each party $m$ trains with the embeddings
$\hat{\Phi}^t_m$. This is equivalent to the party
training directly with the models $\theta_m^t$ and 
$\theta_j^{t_0}$ for all $j \neq m$,
where $t_0$ is the last communication iteration when
party $m$ received the embeddings. 
We define:
\begin{align}
    \gamma_{m,j}^t = 
    \begin{cases}
        \theta_j^t & m = j \\
        \theta_j^{t_0} & \text{otherwise}
    \end{cases}
    \label{gamma_orig.eq}
\end{align}
to represent party $m$'s view of party $j$'s model at iteration $t$.
We define the column vector  
${\Gamma_m^t = [(\gamma_{m,0}^t)^T;\ldots;(\gamma_{m,M}^t)^T]^T}$
to be party $m$'s view of the global model at iteration $t$.  

We introduce some notation to help with bounding the error
introduced by compression.
We define $\hat{F}_{\B}(\Gamma_m^t)$ to be the stochastic 
loss with compression error for a randomly selected mini-batch $\B$ 
calculated by party $m$ at iteration $t$: 
\begin{align}
    \hat{F}_{\B}(\Gamma_m^t) \coloneqq F_{\B}\left(\theta_0^{t_0} + \epsilon_0^{t_0}, h_1(\theta_1^{t_0}; \X_1^{\B^{t_0}}) + \epsilon_1^{t_0}, \ldots,  h_m(\theta_m^t; \X_m^{\B^{t_0}}), \ldots, h_M(\theta_M^{t_0}; \X_M^{\B^{t_0}}) + \epsilon_M^{t_0}\right). 
\end{align}

Recall the recursion over the global model $\Theta$:
\begin{align}
    \Theta^{t+1} = \Theta^t - \eta^{t_0} \hat{\G}^t.
\end{align}
We can equivalently define $\hat{\G}^t$ as follows: 
\begin{align}
    \hat{\G}^t = \left[(\nabla_0 \hat{F}_{\B}(\Gamma_0^t))^T, \ldots, 
    (\nabla_M \hat{F}_{\B}(\Gamma_M^t))^T\right]^T.
\end{align}

Note that the compression error in $\hat{F}(\cdot)$ is applied to the embeddings, 
and not the model parameters. 
Thus, $F(\cdot)$ and $\hat{F}(\cdot)$ are different functions.
In several parts of the proof, we need to bound the compression
error in $\nabla_m \hat{F}_{\B}(\Gamma_m^t)$.

For our analysis, we redefine the set of embeddings for a mini-batch $\B$
of size $B$ from party $m$ as a matrix:
\begin{align}
    h_m(\theta_m; \X_m^{\B}) &\coloneqq \left[h_m(\theta_m; x^{\B^1}_m), \ldots , h_m(\theta_m; x^{\B^B}_m) \right]. 
\end{align}
$h_m(\theta_m ; \X_m^{\B})$ is a matrix with dimensions $P_m \times B$
where each column
is the embedding from party $m$ 
for a single sample in the mini-batch. 

Let $P = \sum_{m=0}^M P_m$ be the sum of the sizes of all embeddings.
We redefine the set of embeddings used by a party $m$ to calculate its gradient
without compression error as a matrix:
\begin{align}
    \hat{\Phi}_m^t = \left[(\theta_0^{t_0})^T,(h_1(\theta_1^{t_0}; \X_1^{\B^{t_0}}))^T, \ldots, 
    (h_m(\theta_m^t; \X_m^{\B^{t_0}}))^T, \ldots, (h_M(\theta_M^{t_0}; \X_M^{\B^{t_0}}))^T\right]^T.
\end{align}
$\hat{\Phi}_m^t$ is a matrix with dimensions $P \times B$ 
where each column
is the concatenation of embeddings for all parties 
for a single sample in the mini-batch. 

Recall the set of compression error vectors for a mini-batch $\B$
of size $B$ from party $m$ is the matrix:
\begin{align}
    \epsilon_m^{t_0} &\coloneqq \left[\epsilon_m^{\B^1}, \ldots , \epsilon_m^{\B^B} \right].
\end{align}
$\epsilon_m^{t_0}$ is a matrix of dimensions $P_m \times B$
where each column
is the compression error from party $m$ 
for a single sample in the mini-batch. 

We define the compression error on each embedding used in
party $m$'s gradient calculation at iteration $t$:
\begin{align}
    E_m^{t_0} = \left[(\epsilon_0^{t_0})^T,\ldots,(\epsilon_{m-1}^{t_0})^T, \zero^T, 
    (\epsilon_{m-1}^{t_0})^T, \ldots, (\epsilon_{M}^{t_0})^T\right]^T.
\end{align}
$E_m^{t_0}$ is a matrix with dimensions $P \times B$
where each column
is the concatenation of compression error on embeddings for all parties 
for a single sample in the mini-batch. 

With some abuse of notation, we define:
\begin{align}
    \nabla_m F_{\B}(\Phi_m^t + E_m^{t_0}) \coloneqq \nabla_m \hat{F}_{\B}(\Gamma_m^t).
\end{align}

Note that we can apply the chain rule to $\nabla_m \hat{F}_{\B}(\Gamma_m^t)$:
\begin{align}
    \nabla_m \hat{F}_{\B}(\Gamma_m^t) &= \nabla_{\theta_m} h_m(\theta_m^t) \nabla_{h_m(\theta_m)} F_{\B}(\Phi_m^t + E_m^{t_0}). 
        \label{taylor_orig1.eq}
\end{align}
With this expansion, we can now apply Taylor series expansion
to $\nabla_{h_m(\theta_m)} F_{\B}(\Phi_m^t + E_m^{t_0})$
around the point $\Phi_m^t$:
\begin{align}
    \nabla_{h_m(\theta_m)} F_{\B}(\Phi_m^t + E_m^{t_0}) 
    &= \nabla_{h_m(\theta_m)} F_{\B}(\Phi_m^t) 
        + \nabla_{h_m(\theta_m)}^2 F_{\B}(\Phi_m^t)^t E_m^{t_0} + \ldots
        \label{taylor_orig.eq}
\end{align}
We let the infinite sum of all terms in this Taylor series 
from the second partial derivatives and up be denoted as $R_0^m$:
\begin{align}
    R^m_0(\Phi_m^t + E_m^{t_0}) 
    \coloneqq \nabla_{h_m(\theta_m)}^2 F_{\B}(\Phi_m^t)^T E_m^{t_0} + \ldots
\end{align}
Note that all compression error is in
$R^m_0(\Phi_m^t + E_m^{t_0})$. Presented in Section~\ref{lemmas.sec}, 
the proof of Lemma~\ref{error'.lemma} shows how we can bound $R^m_0(\Phi_m^t + E_m^{t_0})$, 
bounding the compression error in $\nabla_m \hat{F}_{\B}(\Gamma_m^t)$.

Let $\mathbb{E}^{t_0} = \mathbb{E}_{\B^{t_0}}
[~ \cdot ~|~ \{\Theta^{\tau}\}_{\tau=0}^{t_0}]$.
Note that by Assumption~\ref{bias.assum}, 
$\Ebatch{\G^{t_0}} = \nabla F(\Theta^{t_0})$
as when there is no compression error in the gradients $\G$,
they are equal to the full-batch gradient in expectation when conditioned
on the model parameters up to the iteration $t_0$.
However, this is not true for iterations $t_0+1 \leq t \leq t_0+Q-1$,
as we reuse the mini-batch $\B^{t_0}$ in these local iterations.
We upper bound the error introduced by stochastic gradients calculated
during local iterations in Lemma~\ref{diff.lemma}.

\subsection{Supporting Lemmas} \label{lemmas.sec} 
Next, we provide supporting lemmas and their proofs.

We restate Lemma~\ref{error.lemma} here:
\begin{lemma}{1}
    Under Assumptions~\ref{smooth2.assum}-\ref{bounded.assum}, the norm of the difference
    between the objective function value with and without error
    is bounded: 
\begin{align}
    \mathbb{E} \lrVert{\nabla_m F_{\B}(\hat{\Phi}_m^t) - \nabla_m F_{\B}(\Phi_m^t)}^2
    &\leq H_m^2 G_m^2 \sum_{j=0, j \neq m}^M \mathcal{E}_j^{t_0}.
\end{align}
\end{lemma}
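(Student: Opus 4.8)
The plan is to reduce the claim to a Lipschitz-type bound on the embedding gradient and then control the compression perturbation through the bounded Hessian. First I would invoke the chain-rule decomposition already recorded in the notation section, namely $\nabla_m F_{\B}(\hat{\Phi}_m^t) = \nabla_{\theta_m} h_m(\theta_m^t)\,\nabla_{h_m(\theta_m)} F_{\B}(\Phi_m^t + E_m^{t_0})$ and likewise $\nabla_m F_{\B}(\Phi_m^t) = \nabla_{\theta_m} h_m(\theta_m^t)\,\nabla_{h_m(\theta_m)} F_{\B}(\Phi_m^t)$. The crucial observation is that party $m$'s own embedding is never compressed, so the outer factor $\nabla_{\theta_m} h_m(\theta_m^t)$ is \emph{identical} in both terms and can be pulled out. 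Subtracting and applying submultiplicativity of the Frobenius norm gives $\lrVert{\nabla_m F_{\B}(\hat{\Phi}_m^t) - \nabla_m F_{\B}(\Phi_m^t)} \leq \lrVert{\nabla_{\theta_m} h_m(\theta_m^t)}_{\mathcal{F}} \cdot \lrVert{R_0^m(\Phi_m^t + E_m^{t_0})}$, where $R_0^m = \nabla_{h_m(\theta_m)} F_{\B}(\Phi_m^t + E_m^{t_0}) - \nabla_{h_m(\theta_m)} F_{\B}(\Phi_m^t)$ is precisely the Taylor remainder that isolates all of the compression error. Assumption~\ref{bounded.assum} then supplies $\lrVert{\nabla_{\theta_m} h_m(\theta_m^t)}_{\mathcal{F}} \leq G_m$, which accounts for the $G_m^2$ factor after squaring.

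The heart of the argument is bounding $\lrVert{R_0^m}$ by $H_m \lrVert{E_m^{t_0}}_{\mathcal{F}}$. Rather than summing the infinite Taylor series term by term (which would require control of all higher derivatives, not assumed here), I would use the integral form of the remainder, writing $R_0^m = \int_0^1 \nabla^2_{h_m(\theta_m)} F_{\B}(\Phi_m^t + s E_m^{t_0})\, E_m^{t_0}\, ds$; that is, I expand $\nabla_{h_m} F_{\B}$ as a function of the \emph{full} embedding vector in the perturbation direction $E_m^{t_0}$. Since Assumption~\ref{smooth2.assum} bounds $\lrVert{\nabla^2_{h_m(\theta_m)} F_{\B}}_{\mathcal{F}} \leq H_m$ uniformly over all $\Theta$ (hence at every point $\Phi_m^t + s E_m^{t_0}$ on the segment), pulling the norm inside the integral and using the submultiplicative bound yields $\lrVert{R_0^m} \leq H_m \lrVert{E_m^{t_0}}_{\mathcal{F}}$. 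This is the step I expect to be the main obstacle: one must use the mean-value (integral) remainder so that only the second derivative need be bounded, and one must verify that the Frobenius bound on the relevant second-derivative operator, namely the derivative of the $h_m$-gradient in the directions of the perturbed embeddings, genuinely controls its action on the $P \times B$ matrix $E_m^{t_0}$ across the batch dimension. It is natural to factor this estimate out as a standalone lemma.

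Finally I would assemble the pieces. Combining the two bounds and squaring gives $\lrVert{\nabla_m F_{\B}(\hat{\Phi}_m^t) - \nabla_m F_{\B}(\Phi_m^t)}^2 \leq H_m^2 G_m^2 \lrVert{E_m^{t_0}}_{\mathcal{F}}^2$. Because $E_m^{t_0}$ is the stacked error matrix whose $m$-th block is identically zero and whose $j$-th block equals $\epsilon_j^{t_0}$ for $j \neq m$, the Frobenius norm decomposes block-wise as $\lrVert{E_m^{t_0}}_{\mathcal{F}}^2 = \sum_{j=0, j \neq m}^M \lrVert{\epsilon_j^{t_0}}_{\mathcal{F}}^2$, with no cross terms. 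Taking expectation, using linearity, and invoking the definition $\mathcal{E}_j^{t_0} = \mathbb{E}\lrVert{\epsilon_j^{t_0}}_{\mathcal{F}}^2$ from Definition~\ref{compress.assum} then produces exactly $H_m^2 G_m^2 \sum_{j=0, j \neq m}^M \mathcal{E}_j^{t_0}$, completing the proof.
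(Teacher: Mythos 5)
Your proposal is correct and follows essentially the same route as the paper: the same chain-rule factorization $\nabla_m F_{\B} = \nabla_{\theta_m} h_m(\theta_m^t)\,\nabla_{h_m(\theta_m)} F_{\B}$, the same isolation of the compression error in a remainder term $R_0^m$ bounded by $H_m \lrVert{E_m^{t_0}}_{\mathcal{F}}$ (the paper factors this out as Lemma~1$'$, exactly as you suggest), and the same block-wise decomposition $\lrVert{E_m^{t_0}}_{\mathcal{F}}^2 = \sum_{j \neq m} \lrVert{\epsilon_j^{t_0}}_{\mathcal{F}}^2$ followed by Definition~\ref{compress.assum}. The one place you genuinely diverge is in how the remainder bound is justified: the paper defines $R_0^m$ as the infinite tail of the Taylor series starting from the second-derivative term and then asserts $\lrVert{R_0^m}_{\mathcal{F}} \leq H_m \lrVert{E_m^{t_0}}_{\mathcal{F}}$ as a ``property of the Taylor series approximation error,'' whereas you use the integral (mean-value) form of the first-order remainder, $R_0^m = \int_0^1 \nabla^2_{h_m(\theta_m)} F_{\B}(\Phi_m^t + s E_m^{t_0})\, E_m^{t_0}\, ds$, so that only the uniform Hessian bound of Assumption~\ref{smooth2.assum} is needed. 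Your version is the more careful one --- summing the series term by term would require control of all higher derivatives, which is not assumed --- so this is a tightening of the paper's argument rather than a gap in yours.
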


To prove Lemma~\ref{error.lemma}, we first prove the following lemma:
\begin{lemma}{1'}\label{error'.lemma}
    Under Assumptions~\ref{smooth2.assum}-\ref{bounded.assum}, 
    the squared norm of the partial derivatives for party $m$'s embedding
    multiplied by the Taylor series terms $R^m_0(\Phi_m^t+E_m^{t_0})$ is bounded:
\begin{align}
    \lrVert{\nabla_{\theta_m} h_m(\theta_m^t) R^m_0(\Phi_m^t+E_m^{t_0})}^2 
    &\leq H_m^2 G_m^2 \lrVert{E_m^{t_0}}_{\mathcal{F}}.
\end{align}
\end{lemma}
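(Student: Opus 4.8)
The plan is to show that party $m$'s entire compression error is funneled through the Taylor remainder $R^m_0$, and to control that remainder by the integral (mean-value) form of the first-order expansion rather than through the formal infinite sum written in the Additional Notation. By the chain-rule identity \eqref{taylor_orig1.eq}, together with the definition of $R^m_0$ as the difference $\nabla_{h_m(\theta_m)} F_{\B}(\Phi_m^t + E_m^{t_0}) - \nabla_{h_m(\theta_m)} F_{\B}(\Phi_m^t)$, the quantity to be bounded factors as the embedding Jacobian $\nabla_{\theta_m} h_m(\theta_m^t)$ multiplied by $R^m_0$. Hence I only need to control two pieces: the Jacobian, via Assumption~\ref{bounded.assum}, and the remainder, via Assumption~\ref{smooth2.assum}.

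The heart of the argument is the remainder bound, and the key idea is to avoid the infinite series altogether. Applying the fundamental theorem of calculus to $s \mapsto \nabla_{h_m(\theta_m)} F_{\B}(\Phi_m^t + s E_m^{t_0})$, I would write
\[
R^m_0(\Phi_m^t + E_m^{t_0}) = \int_0^1 \nabla^2_{h_m(\theta_m)} F_{\B}(\Phi_m^t + s E_m^{t_0})^T E_m^{t_0}\, ds.
\]
Taking the Frobenius norm, moving it inside the integral, and invoking the uniform Hessian bound of Assumption~\ref{smooth2.assum} along the whole segment gives $\lrVert{R^m_0}_{\mathcal{F}} \le H_m \lrVert{E_m^{t_0}}_{\mathcal{F}}$. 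Submultiplicativity of the Frobenius norm combined with the embedding-gradient bound $\lrVert{\nabla_{\theta_m} h_m(\theta_m^t)}_{\mathcal{F}} \le G_m$ then yields $\lrVert{\nabla_{\theta_m} h_m(\theta_m^t) R^m_0} \le G_m H_m \lrVert{E_m^{t_0}}_{\mathcal{F}}$.

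Squaring this pre-squared estimate produces $H_m^2 G_m^2 \lrVert{E_m^{t_0}}_{\mathcal{F}}^2$, so the remaining and most delicate step is reconciling the exponent on the error norm with the right-hand side as worded, which carries $\lrVert{E_m^{t_0}}_{\mathcal{F}}$ to the first power. I expect this exponent reconciliation to be the main obstacle: to reach the stated form one must invoke that in the operating regime the stacked error is controlled, $\lrVert{E_m^{t_0}}_{\mathcal{F}} \le 1$ (embeddings scaled to a bounded range and sufficiently accurate compression), so that $\lrVert{E_m^{t_0}}_{\mathcal{F}}^2 \le \lrVert{E_m^{t_0}}_{\mathcal{F}}$ and the claimed bound follows. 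A secondary subtlety is that Assumption~\ref{smooth2.assum} must be read as a uniform bound on the Hessian with respect to the embedding \emph{input}, valid at the perturbed arguments $\Phi_m^t + s E_m^{t_0}$ along the segment and not merely at realizable $\Theta$, since that is precisely what the integral form requires. I note that it is the squared variant $H_m^2 G_m^2 \lrVert{E_m^{t_0}}_{\mathcal{F}}^2$ that connects cleanly to Lemma~\ref{error.lemma} after taking expectation, via $\mathbb{E}\lrVert{E_m^{t_0}}_{\mathcal{F}}^2 = \sum_{j \neq m} \mathcal{E}_j^{t_0}$; the passage to the first power as written is exactly the step that distinguishes the stated form from this more natural estimate.
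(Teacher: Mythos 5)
Your core derivation is the paper's proof: factor through the chain rule, bound the Taylor remainder by $H_m \lrVert{E_m^{t_0}}_{\mathcal{F}}$ via the Hessian bound (Assumption~\ref{smooth2.assum}), and finish with Frobenius submultiplicativity and the embedding-gradient bound (Assumption~\ref{bounded.assum}). Your integral (mean-value) form of the remainder is simply a rigorous justification of the inequality $\lrVert{R_0^m(\Phi_m^t + E_m^{t_0})}_{\mathcal{F}} \leq H_m \lrVert{E_m^{t_0}}_{\mathcal{F}}$, which the paper asserts as a ``property of the Taylor series approximation error'' for its formal infinite sum; up to the bound $H_m^2 G_m^2 \lrVert{E_m^{t_0}}_{\mathcal{F}}^2$ your argument and the paper's coincide, and your reading of Assumption~\ref{smooth2.assum} as holding at the perturbed arguments $\Phi_m^t + s E_m^{t_0}$ is exactly the reading the paper's own step implicitly requires.

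The one place you go astray is the ``exponent reconciliation.'' The paper's own proof terminates at precisely the squared bound you derive, $H_m^2 G_m^2 \lrVert{E_m^{t_0}}_{\mathcal{F}}^2$ (its display (\ref{assum5.eq})), and it is this squared form that is invoked in the proof of Lemma~\ref{error.lemma} (its display (\ref{lemma1'.eq})). The first power of $\lrVert{E_m^{t_0}}_{\mathcal{F}}$ in the lemma statement is a typo, not a claim requiring a separate argument. Your proposed patch --- assuming $\lrVert{E_m^{t_0}}_{\mathcal{F}} \leq 1$ --- is unjustified: $E_m^{t_0}$ stacks per-sample compression errors into a $P \times B$ matrix, so its Frobenius norm grows with the batch size and the per-sample error magnitude, and nothing in the assumptions, the compressor definitions, or the operating regime of the theorems bounds it by $1$ (indeed, the error bounds in Table~\ref{comp.table} scale linearly in $B$ and $P_m$). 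Had that inequality actually been needed, your proof would have a genuine gap; since it is not, you should delete that step and state the conclusion with $\lrVert{E_m^{t_0}}_{\mathcal{F}}^2$, which is the form the paper proves and uses downstream.
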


\begin{proof}
\begin{align}
    \lrVert{\nabla_{\theta_m} h_m(\theta_m^t) R^m_0(\Phi_m^t+E_m^{t_0})}^2
    &\leq \lrVert{\nabla_{\theta_m} h_m(\theta_m^t)}_{\mathcal{F}}^2 \lrVert{R^m_0(\Phi_m^t+E_m^{t_0})}_{\mathcal{F}}^2 \\
    &\leq H_m^2 \lrVert{\nabla_{\theta_m} h_m(\theta_m^t)}_{\mathcal{F}}^2 \lrVert{E_m^{t_0}}_{\mathcal{F}}^2 \label{taylor.eq} 
\end{align}
where (\ref{taylor.eq}) follows from Assumption~\ref{smooth2.assum} and
the following property of the Taylor series approximation error: 
\begin{align}
    \lrVert{R_0^m(\Phi_m^t + E_m^{t_0})}_{\mathcal{F}} \leq H_m \lrVert{E_m^{t_0}}_{\mathcal{F}}.
\end{align}
 
Applying Assumption~\ref{bounded.assum}, we have: 
\begin{align}
    \lrVert{\nabla_{\theta_m} h_m(\theta_m^t) R^m_0(\Phi_m^t+E_m^{t_0})}^2
    &\leq H_m^2 G_m^2 \lrVert{E_m^{t_0}}_{\mathcal{F}}^2.  \label{assum5.eq}
\end{align}
\end{proof}

We now prove Lemma~\ref{error.lemma}.
\begin{proof}
Recall that:
\begin{align}
    \nabla_m \hat{F}_{\B}(\Gamma_m^t) &= \nabla_m F_{\B}(\Phi_m^t + E_m^{t_0}) \\
    &= \nabla_{\theta_m} h_m(\theta_m^t) \nabla_{h_m(\theta_m)} F_{\B}(\Phi_m^t + E_m^{t_0}).
\end{align}

Next we apply Taylor series expansion as in (\ref{taylor_orig.eq}): 
\begin{align}
    \nabla_m \hat{F}_{\B}(\Gamma_m^t)
    &= \nabla_{\theta_m} h_m(\theta_m^t) \left(\nabla_{h_m(\theta_m)} F_{\B}(\Phi_m^t) + R^m_0(\Phi_m^t + E_m^{t_0})\right) \\
    &= \nabla_m F_{\B}(\Gamma_m^t) + \nabla_{\theta_m} h_m(\theta_m^t) R^m_0(\Phi_m^t + E_m^{t_0}) 
\end{align}
Rearranging and applying expectation and the squared 2-norm, we can bound further: 
\begin{align}
    \mathbb{E} \lrVert{\nabla_m \hat{F}_{\B}(\Gamma_m^t) - \nabla_m F_{\B}(\Gamma_m^t)}^2
    &= \mathbb{E} \lrVert{\nabla_{\theta_m} h_m(\theta_m^t) R^m_0(\Phi_m^t+E_m^{t_0})}^2 \\
    &\leq H_m^2 G_m^2 \mathbb{E} \lrVert{E_m^{t_0}}_{\mathcal{F}}^2  \label{lemma1'.eq} \\
    &= H_m^2 G_m^2 \sum_{j \neq m} \mathbb{E} \lrVert{\epsilon_j^{t_0}}_{\mathcal{F}}^2 \label{comp_error.eq} \\
    &= H_m^2 G_m^2 \sum_{j \neq m} \mathcal{E}_j^{t_0}  \label{comp_error2.eq}
\end{align}
where (\ref{lemma1'.eq})
follows from Lemma~\ref{error'.lemma},
(\ref{comp_error.eq}) follows from the definition of $E_m^{t_0}$,
and (\ref{comp_error2.eq}) follows from Definition~\ref{compress.assum}.
\end{proof}

\begin{lemma}{2} \label{diff.lemma}
    If $\eta^{t_0} \leq \frac{1}{4Q \max_m L_m}$,
    then under Assumptions~\ref{smooth.assum}-\ref{bounded.assum} we can bound the
    conditional expected squared norm difference of 
    gradients $\G^{t_0}$ and $\hat{\G}^t$ 
    for iterations $t_0$ to $t_0+Q-1$ as follows:
\begin{align}
    \sum_{t=t_0}^{t_0+Q-1} \Ebatch{\lrVert{\hat{\G}^t - \G^{t_0}}^2}
    &\leq
    16Q^3(\eta^{t_0})^2 \sum_{m=0}^M L_m^2\lrVert{\nabla_m F(\Theta^{t_0})}^2 
    \nonumber \\ &~~~
    + 16Q^3(\eta^{t_0})^2\sum_{m=0}^M L_m^2 \frac{\sigma_m^2}{B} 
    \nonumber \\ &~~~
    + 64Q^3\sum_{m=0}^M H_m^2 G_m^2 \lrVert{E_m^{t_0}}_{\mathcal{F}}^2.
\end{align}
\end{lemma}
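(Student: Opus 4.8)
The plan is to control each per-round deviation $\hat{\G}^t - \G^{t_0}$ by splitting it, block by block, into a \emph{compression} part and a \emph{local-drift} part. For each party $m$ I would write
\begin{align*}
    \nabla_m \hat{F}_{\B}(\Gamma_m^t) - \nabla_m F_{\B}(\Phi_m^{t_0})
    &= \bigl(\nabla_m \hat{F}_{\B}(\Gamma_m^t) - \nabla_m F_{\B}(\Gamma_m^t)\bigr) \\
    &\quad + \bigl(\nabla_m F_{\B}(\Gamma_m^t) - \nabla_m F_{\B}(\Gamma_m^{t_0})\bigr),
\end{align*}
using that at $t=t_0$ there is neither drift nor compression, so $\nabla_m F_{\B}(\Gamma_m^{t_0}) = \nabla_m F_{\B}(\Phi_m^{t_0})$. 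Applying $\lrVert{a+b}^2 \leq 2\lrVert{a}^2 + 2\lrVert{b}^2$ and summing the squared norms over the $M+1$ blocks, the first group is exactly $\nabla_{\theta_m} h_m(\theta_m^t) R_0^m(\Phi_m^t + E_m^{t_0})$ from the notation section, so Lemma~\ref{error'.lemma} bounds it by $H_m^2 G_m^2 \lrVert{E_m^{t_0}}_{\mathcal{F}}^2$. This isolates the compression contribution as the $\sum_m H_m^2 G_m^2 \lrVert{E_m^{t_0}}_{\mathcal{F}}^2$ term.

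For the drift part I would invoke the block-smoothness in Assumption~\ref{smooth.assum}. Since $\Gamma_m^t$ and $\Gamma_m^{t_0}$ agree on every block except the $m$-th, where they differ by $\theta_m^t - \theta_m^{t_0}$, we get $\lrVert{\nabla_m F_{\B}(\Gamma_m^t) - \nabla_m F_{\B}(\Gamma_m^{t_0})}^2 \leq L_m^2 \lrVert{\theta_m^t - \theta_m^{t_0}}^2$. The displacement is a sum of at most $Q$ local steps, $\theta_m^t - \theta_m^{t_0} = -\eta^{t_0}\sum_{\tau=t_0}^{t-1} \hat{\G}_m^\tau$, so Cauchy--Schwarz (introducing a factor $t-t_0 \leq Q$) gives $\lrVert{\theta_m^t - \theta_m^{t_0}}^2 \leq (\eta^{t_0})^2 Q \sum_{\tau=t_0}^{t_0+Q-1}\lrVert{\hat{\G}_m^\tau}^2$. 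Summing over $t$ from $t_0$ to $t_0+Q-1$ produces an extra factor $Q$ and expresses the drift in terms of $\sum_\tau \lrVert{\hat{\G}_m^\tau}^2$.

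The crux---and the main obstacle---is that this drift bound is \emph{self-referential}: it reintroduces $\hat{\G}^\tau$, the very quantity being bounded. I would close the loop with $\lrVert{\hat{\G}_m^\tau}^2 \leq 2\lrVert{\hat{\G}_m^\tau - \G_m^{t_0}}^2 + 2\lrVert{\G_m^{t_0}}^2$, which reintroduces the left-hand sum with a coefficient of order $Q^2 (\eta^{t_0})^2 L_m^2$. Here the step-size restriction $\eta^{t_0} \leq \frac{1}{4Q\max_m L_m}$ is exactly what forces this coefficient below $1$ (it drives $4Q^2 (\eta^{t_0})^2 \max_m L_m^2 \leq \frac{1}{4}$), letting me move the recurring term to the left-hand side and absorb it into a fixed constant multiple. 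The careful bookkeeping of these constants through the block-sum and the double $\tau,t$ summation is what yields the stated $16Q^3$ and $64Q^3$ factors, which are crude but valid upper bounds.

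Finally, the surviving term $\sum_m L_m^2 \Ebatch{\lrVert{\G_m^{t_0}}^2}$ is handled by a bias--variance split. Because $\Phi_m^{t_0}$ coincides with the true global model $\Theta^{t_0}$ at the start of the round, Assumption~\ref{bias.assum} gives $\Ebatch{\nabla_m F_{\B}(\Theta^{t_0})} = \nabla_m F(\Theta^{t_0})$, and Assumption~\ref{var.assum} bounds the residual variance by $\sigma_m^2/B$, so $\Ebatch{\lrVert{\G_m^{t_0}}^2} \leq \lrVert{\nabla_m F(\Theta^{t_0})}^2 + \sigma_m^2/B$. Substituting this into the absorbed inequality separates the gradient-norm term from the variance term and reproduces the three-term bound in the statement.
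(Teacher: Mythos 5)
Your proposal is correct, and it reaches the stated bound by a genuinely different route than the paper. The paper also reduces everything to per-party terms and also leans on Lemma~\ref{error'.lemma} and the step-size condition, but it controls the drift through a \emph{one-step} recursion: it inserts $\pm\nabla_m \hat{F}_{\B}(\Gamma_m^{t-1})$, derives $A^t \leq C A^{t-1} + B_0 + B_1$ with $C = 1+\tfrac{2}{Q}$, unrolls the recursion geometrically, and uses $(1+\tfrac{2}{Q})^Q \leq e^2$ to sum over the $Q$ local iterations; this forces it to bound the step-to-step difference of the compressed gradients, which splits into an uncompressed smoothness term plus a difference of Taylor remainders $R_0^m$. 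You instead compare $\nabla_m \hat{F}_{\B}(\Gamma_m^t)$ directly to the uncompressed gradient at the \emph{same} iterate (so the compression error enters exactly once per iteration via Lemma~\ref{error'.lemma}) and bound the full multi-step displacement $\theta_m^t - \theta_m^{t_0}$ at once by Cauchy--Schwarz, closing the resulting self-reference by absorption into the left-hand side; the condition $\eta^{t_0} \leq \frac{1}{4Q\max_m L_m}$ makes the recurring coefficient $4Q^2(\eta^{t_0})^2 L_m^2 \leq \tfrac14$, so the absorption goes through. Carrying out your bookkeeping gives roughly $\tfrac{16}{3}Q^3(\eta^{t_0})^2 L_m^2$ on the gradient and variance terms and $\tfrac{8}{3}Q \cdot \tfrac43$ on the compression term, both dominated by the stated $16Q^3$ and $64Q^3$, so the lemma follows a fortiori; your argument is arguably cleaner (it never needs to difference the Taylor remainders across iterations) and yields slightly sharper constants, while the paper's unrolled recursion is the more mechanical route and generalizes directly to settings where the per-step contraction factor matters.
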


\begin{proof}
\begin{align}
    \Ebatch{\lrVert{\hat{\G}^t - \G^{t_0}}^2}  
    &= \sum_{m=0}^M \Ebatch{\lrVert{\nabla_m \hat{F}_{\B}(\Gamma_m^t) - \nabla_m F_{\B}(\Gamma_m^{t_0})}^2} \\ 
    &= \sum_{m=0}^M \Ebatch{\lrVert{\nabla_m \hat{F}_{\B}(\Gamma_m^t) - \hat{F}_{\B}(\Gamma_m^{t-1}) 
    + \nabla_m \hat{F}_{\B}(\Gamma_m^{t-1}) - \nabla_m F_{\B}(\Gamma_m^{t_0})}^2} \\
    &\leq \left(1+n\right)\sum_{m=0}^M \Ebatch{\lrVert{\nabla_m \hat{F}_{\B}(\Gamma_m^t) - \nabla_m \hat{F}_{\B}(\Gamma_m^{t-1}) }^2}
    \nonumber \\ &~~~
    + \left(1+\frac{1}{n}\right)\sum_{m=0}^M \Ebatch{\lrVert{\nabla_m \hat{F}_{\B}(\Gamma_m^{t-1}) - \nabla_m F_{\B}(\Gamma_m^{t_0})}^2} \label{nplus.eq} \\
    &\leq 2\left(1+n\right)\sum_{m=0}^M \Ebatch{\lrVert{\nabla_m F_{\B}(\Gamma_m^t) - \nabla_m F_{\B}(\Gamma_m^{t-1}) }^2}
    \nonumber \\ &~~~
    + 2\left(1+n\right)\sum_{m=0}^M \Ebatch{\lrVert{\nabla_{\theta_m} h_m(\theta_m^t) R_0^m(\Phi_m^t+E_m^{t_0}) - \nabla_{\theta_m} h_m(\theta_m^{t-1}) R_0^m(\Phi_m^{t-1}+E_m^{t-1})}^2} 
    \nonumber \\ &~~~
    + \left(1+\frac{1}{n}\right)\sum_{m=0}^M \Ebatch{\lrVert{\nabla_m \hat{F}_{\B}(\Gamma_m^{t-1}) - \nabla_m F_{\B}(\Gamma_m^{t_0})}^2} \\
    &\leq 2\left(1+n\right)\sum_{m=0}^M \Ebatch{\lrVert{\nabla_m F_{\B}(\Gamma_m^t) - \nabla_m F_{\B}(\Gamma_m^{t-1}) }^2}
    + 8\left(1+n\right)\sum_{m=0}^M H_m^2 G_m^2 \lrVert{E_m^{t_0}}^2 
    \nonumber \\ &~~~
    + \left(1+\frac{1}{n}\right)\sum_{m=0}^M \Ebatch{\lrVert{\nabla_m \hat{F}_{\B}(\Gamma_m^{t-1}) - \nabla_m F_{\B}(\Gamma_m^{t_0})}^2} \label{comperror3.eq} 
\end{align}
where (\ref{nplus.eq}) follows from the fact that $(X+Y)^2 \leq (1+n)X^2+(1+\frac{1}{n})Y^2$ for some positive $n$
and (\ref{comperror3.eq}) follows from Lemma~\ref{error'.lemma}.

Applying Assumption~\ref{smooth.assum} to the first term in (\ref{nplus.eq}) we have:
\begin{align}
    \Ebatch{\lrVert{\hat{\G}^t - \G^{t_0}}^2}  
    &\leq 2\left(1+n\right)\sum_{m=0}^M L_m^2\Ebatch{\lrVert{\Gamma_m^t - \Gamma_m^{t-1} }^2} 
    \nonumber \\ &~~~
    + 2\left(1+\frac{1}{n}\right)\sum_{m=0}^M \Ebatch{\lrVert{\nabla_m \hat{F}_{\B}(\Gamma_m^{t-1}) - \nabla_m F_{\B}(\Gamma_m^{t_0})}^2}  
    \nonumber \\ &~~~
    + 8\left(1+n\right)\sum_{m=0}^M H_m^2 G_m^2 \lrVert{E_m^{t_0}}^2 \\
    &= 2(\eta^{t_0})^2 \left(1+n\right)\sum_{m=0}^M L_m^2\Ebatch{\lrVert{\nabla_m \hat{F}_{\B}(\Gamma_m^{t-1})}^2} 
    \nonumber \\ &~~~
    + 2\left(1+\frac{1}{n}\right)\sum_{m=0}^M \Ebatch{\lrVert{\nabla_m \hat{F}_{\B}(\Gamma_m^{t-1}) - \nabla_m F_{\B}(\Gamma_m^{t_0})}^2}  
    \nonumber \\ &~~~
    + 8\left(1+n\right)\sum_{m=0}^M H_m^2 G_m^2 \lrVert{E_m^{t_0}}^2 \label{recursion1.eq} 
\end{align}
where (\ref{recursion1.eq}) follows from the update rule 
$\Gamma_m^t = \Gamma_m^{t-1} - \eta^{t_0} \nabla_m \hat{F}_{\B}(\Gamma_m^{t-1})$.

Bounding further:
\begin{align}
    \Ebatch{\lrVert{\hat{\G}^t - \G^{t_0}}^2}  
    &\leq 2(\eta^{t_0})^2 \left(1+n\right)\sum_{m=0}^M L_m^2
    \Ebatch{\lrVert{\nabla_m \hat{F}_{\B}(\Gamma_m^{t-1}) 
    - \nabla_m F_{\B}(\Gamma_m^{t_0}) + \nabla_m F_{\B}(\Gamma_m^{t_0})}^2} 
    \nonumber \\ &~~~~~~~~~~~
    + \left(1+\frac{1}{n}\right)\sum_{m=0}^M \Ebatch{\lrVert{\nabla_m \hat{F}_{\B}(\Gamma_m^{t-1}) - \nabla_m F_{\B}(\Gamma_m^{t_0})}^2}  
    \nonumber \\ &~~~~~~~~~~~
    + 8\left(1+n\right)\sum_{m=0}^M H_m^2 G_m^2 \lrVert{E_m^{t_0}}^2 \\ 
    &~~~\leq 4(\eta^{t_0})^2 \left(1+n\right)\sum_{m=0}^M L_m^2 
    \Ebatch{\lrVert{\nabla_m \hat{F}_{\B}(\Gamma_m^{t-1}) - \nabla_m F_{\B}(\Gamma_m^{t_0})}^2} 
    \nonumber \\ &~~~~~~~~~~~
    + 4(\eta^{t_0})^2 \left(1+n\right)\sum_{m=0}^M L_m^2\Ebatch{\lrVert{\nabla_m F_{\B}(\Gamma_m^{t_0})}^2}  
    \nonumber \\ &~~~~~~~~~~~
    + \left(1+\frac{1}{n}\right)\sum_{m=0}^M \Ebatch{\lrVert{\nabla_m \hat{F}_{\B}(\Gamma_m^{t-1}) - \nabla_m F_{\B}(\Gamma_m^{t_0})}^2}  
    \nonumber \\ &~~~~~~~~~~~
    + 8\left(1+n\right)\sum_{m=0}^M H_m^2 G_m^2 \lrVert{E_m^{t_0}}^2 \\ 
    &~~~= \sum_{m=0}^M \left(4(\eta^{t_0})^2 \left(1+n\right)L_m^2 + \left(1+\frac{1}{n}\right)\right)
    \Ebatch{\lrVert{\nabla_m \hat{F}_{\B}(\Gamma_m^{t-1}) - \nabla_m F_{\B}(\Gamma_m^{t_0})}^2} 
    \nonumber \\ &~~~~~~~~~~~
    + 4(\eta^{t_0})^2 \left(1+n\right)\sum_{m=0}^M L_m^2\Ebatch{\lrVert{\nabla_m F_{\B}(\Gamma_m^{t_0})}^2}  
    \nonumber \\ &~~~~~~~~~~~
    + 8\left(1+n\right)\sum_{m=0}^M H_m^2 G_m^2 \lrVert{E_m^{t_0}}^2. \label{definen.eq}
\end{align}

Let $n=Q$. We simplify (\ref{definen.eq}) further:
\begin{align}
    &\Ebatch{\lrVert{\hat{\G}^t - \G^{t_0}}^2}  
    \nonumber \\ 
    &~~~~~\leq \sum_{m=0}^M \left(4(\eta^{t_0})^2 \left(1+Q\right)L_m^2 + \left(1+\frac{1}{Q}\right)\right)
    \Ebatch{\lrVert{\nabla_m \hat{F}_{\B}(\Gamma_m^{t-1}) - \nabla_m F_{\B}(\Gamma_m^{t_0})}^2} 
    \nonumber \\ &~~~~~~~~~~~~~~~~
    + 4(\eta^{t_0})^2 \left(1+Q\right)\sum_{m=0}^M L_m^2\Ebatch{\lrVert{\nabla_m F_{\B}(\Gamma_m^{t_0})}^2}  
    \nonumber \\ &~~~~~~~~~~~~~~~~
    + 8\left(1+Q\right)\sum_{m=0}^MH_m^2 G_m^2 \lrVert{E_m^{t_0}}_{\mathcal{F}}^2. \label{definedn.eq}
\end{align}

Let $\eta^{t_0} \leq \frac{1}{4Q \max_m L_m}$. We bound (\ref{definedn.eq}) as follows:
\begin{align}
    \Ebatch{\lrVert{\hat{\G}^t - \G^{t_0}}^2}  
    &\leq \left(\frac{\left(1+Q\right)}{4Q^2} + \left(1+\frac{1}{Q}\right)\right)\sum_{m=0}^M 
    \Ebatch{\lrVert{\nabla_m \hat{F}_{\B}(\Gamma_m^{t-1}) - \nabla_m F_{\B}(\Gamma_m^{t_0})}^2} 
    \nonumber \\ &~~~
    + 4(\eta^{t_0})^2 \left(1+Q\right)\sum_{m=0}^M L_m^2\Ebatch{\lrVert{\nabla_m F_{\B}(\Gamma_m^{t_0})}^2}  
    \nonumber \\ &~~~
    + 8(1+Q)\sum_{m=0}^M H_m^2 G_m^2 \lrVert{E_m^{t_0}}_{\mathcal{F}}^2 \\
    &\leq \left(\frac{1}{2Q} + \left(1+\frac{1}{Q}\right)\right)\sum_{m=0}^M 
    \Ebatch{\lrVert{\nabla_m \hat{F}_{\B}(\Gamma_m^{t-1}) - \nabla_m F_{\B}(\Gamma_m^{t_0})}^2} 
    \nonumber \\ &~~~
    + 4(\eta^{t_0})^2 \left(1+Q\right)\sum_{m=0}^M L_m^2\Ebatch{\lrVert{\nabla_m F_{\B}(\Gamma_m^{t_0})}^2}  
    \nonumber \\ &~~~
    + 8(1+Q)\sum_{m=0}^M H_m^2 G_m^2 \lrVert{E_m^{t_0}}_{\mathcal{F}}^2 \\
    &\leq \left(1+\frac{2}{Q}\right)\sum_{m=0}^M 
    \Ebatch{\lrVert{\nabla_m \hat{F}_{\B}(\Gamma_m^{t-1}) - \nabla_m F_{\B}(\Gamma_m^{t_0})}^2} 
    \nonumber \\ &~~~
    + 4(\eta^{t_0})^2 \left(1+Q\right)\sum_{m=0}^M L_m^2\Ebatch{\lrVert{\nabla_m F_{\B}(\Gamma_m^{t_0})}^2}  
    \nonumber \\ &~~~
    + 8(1+Q)\sum_{m=0}^M H_m^2 G_m^2 \lrVert{E_m^{t_0}}_{\mathcal{F}}^2. 
\end{align}

We define the following notation for simplicity: 
\begin{align}
    A^t &\coloneqq \sum_{m=0}^M \Ebatch{\lrVert{\nabla_m \hat{F}_{\B}(\Gamma_m^{t}) - \nabla_m F_{\B}(\Gamma_m^{t_0})}^2} \\
B_0     &\coloneqq 4(\eta^{t_0})^2 \left(1+Q\right)\sum_{m=0}^M L_m^2\Ebatch{\lrVert{\nabla_m F_{\B}(\Gamma_m^{t_0})}^2} \\
B_1     &\coloneqq 8(1+Q)\sum_{m=0}^M H_m^2 G_m^2 \lrVert{E_m^{t_0}}_{\mathcal{F}}^2 \\
      C &\coloneqq \left(1+\frac{2}{Q}\right).
\end{align}

Note that we have shown that $A^t \leq CA^{t-1} + B_0 + B_1$.
Therefore:
\begin{align}
    A^{t_0+1} &\leq CA^{t_0} + (B_0 + B_1) \\
    A^{t_0+2} &\leq C^2A^{t_0} + C(B_0 + B_1) + (B_0 + B_1) \\
    A^{t_0+3} &\leq C^3A^{t_0} + C^2(B_0 + B_1) + C(B_0 + B_1) + (B_0 + B_1) \\
    \vdots \\
    A^t &\leq C^{t-t_0-1}A^{t_0} + (B_0 + B_1) \sum_{k=0}^{t-t_0-2} C^k \\
        &= C^{t-t_0-1}A^{t_0} + (B_0 + B_1) \frac{C^{t-t_0-1} - 1}{C - 1}. \label{recurse2.eq} 
\end{align}

We bound the first term in (\ref{recurse2.eq}) by applying Lemma~\ref{error.lemma}:
\begin{align}
    A^{t_0} &= \sum_{m=0}^M \Ebatch{\lrVert{\nabla_m \hat{F}_{\B}(\Gamma_m^{t_0}) - \nabla_m F_{\B}(\Gamma_m^{t_0})}^2} \\
&\leq \sum_{m=0}^M H_m^2 G_m^2 \lrVert{E_m^{t_0}}_{\mathcal{F}}^2. 
\end{align}
Summing over the set of local iterations $t_0,\ldots,t_0^+$, where
$t_0^+ \coloneqq t_0+Q-1$:
\begin{align}
    \sum_{t=t_0}^{t_0^+} C^{t-t_0-1}A^{t_0} 
    &=  A^{t_0}\sum_{t=t_0}^{t_0^+} C^{t-t_0-1} \\
    &=  A^{t_0}\frac{C^Q-1}{C-1} \\
    &=  A^{t_0}\frac{\left(1+\frac{2}{Q}\right)^Q-1}{\left(1+\frac{2}{Q}\right)-1} \\
    &\leq  QA^{t_0}\frac{e^2-1}{2} \\
    &\leq  4QA^{t_0} \\
    &\leq 4Q\sum_{m=0}^M H_m^2 G_m^2 \lrVert{E_m^{t_0}}_{\mathcal{F}}^2. 
\end{align}

It is left to bound the second term in (\ref{recurse2.eq}) 
over the set of local iterations $t_0,\ldots,t_0+Q-1$.
\begin{align}
    \sum_{t=t_0}^{t_0^+} (B_0 + B_1) \frac{C^{t-t_0-1} - 1}{C - 1} 
    &\leq \sum_{t=t_0}^{t_0^+} (B_0 + B_1) \frac{C^{t-t_0-1} - 1}{C - 1} \\
    &= \frac{(B_0 + B_1)}{C - 1} \left(\sum_{t=t_0}^{t_0^+} C^{t-t_0-1} - Q\right)\\
    &= \frac{(B_0 + B_1)}{C - 1} \left(\frac{C^Q-1}{C-1} - Q\right)\\
    &= \frac{(B_0 + B_1)}{\left(1+\frac{2}{Q}\right) - 1} \left(\frac{\left(1+\frac{2}{Q}\right)^Q-1}{\left(1+\frac{2}{Q}\right)-1} - Q\right)\\
    &= \frac{Q(B_0 + B_1)}{2} \left(\frac{Q\left[\left(1+\frac{2}{Q}\right)^Q-1\right]}{2} - Q\right)\\
    &= \frac{Q^2(B_0 + B_1)}{2} \left(\frac{\left(1+\frac{2}{Q}\right)^Q-1}{2} - 1\right)\\
    &\leq \frac{Q^2(B_0 + B_1)}{2} \left(\frac{e^2-1}{2} - 1\right)\\
    &\leq 2Q^2(B_0 + B_1) 
\end{align}

Plugging the values for $B_0$ and $B_1$:
\begin{align}
    \sum_{t=t_0}^{t_0^+} (B_0 + B_1) \frac{C^{t-t_0-1} - 1}{C - 1} 
    &\leq 
    8Q^2(\eta^{t_0})^2 \left(1+Q\right)\sum_{m=0}^M L_m^2\Ebatch{\lrVert{\nabla_m F_{\B}(\Gamma_m^{t_0})}^2} 
    \nonumber \\ &~~~
    + 16Q^2(1+Q)\sum_{m=0}^M H_m^2 G_m^2 \lrVert{E_m^{t_0}}_{\mathcal{F}}^2 
\end{align}

Applying Assumption~\ref{var.assum} and adding in the first term in (\ref{recurse2.eq}):
\begin{align}
    \sum_{t=t_0}^{t_0^+} A^t
    &\leq
    8Q^2(\eta^{t_0})^2 \left(1+Q\right)\sum_{m=0}^M L_m^2\lrVert{\nabla_m F(\Theta^{t_0})}^2 
    \nonumber \\ &~~~
    + 8Q^2(\eta^{t_0})^2 \left(1+Q\right)\sum_{m=0}^M L_m^2 \frac{\sigma_m^2}{B} 
    \nonumber \\ &~~~
    + 4(4Q^2(1+Q) + Q)\sum_{m=0}^M H_m^2 G_m^2 \lrVert{E_m^{t_0}}_{\mathcal{F}}^2 \\
    &\leq
    16Q^3(\eta^{t_0})^2 \sum_{m=0}^M L_m^2\lrVert{\nabla_m F(\Theta^{t_0})}^2 
    \nonumber \\ &~~~
    + 16Q^3(\eta^{t_0})^2\sum_{m=0}^M L_m^2 \frac{\sigma_m^2}{B} 
    \nonumber \\ &~~~
    + 64Q^3\sum_{m=0}^M H_m^2 G_m^2 \lrVert{E_m^{t_0}}_{\mathcal{F}}^2.
\end{align}
\end{proof}

\subsection{Proof of Theorems \ref{main.thm} and \ref{main2.thm}} \label{main_proof.sec}
Let $t_0^+ \coloneqq t_0+Q-1$.
By Assumption~\ref{smooth.assum}:
\begin{align}
    F(\Theta^{t_0^+}) - F(\Theta^{t_0})
    &\leq \left\langle \nabla F(\Theta^{t_0}), \Theta^{t_0^+} - \Theta^{t_0} \right\rangle 
    + \frac{L}{2} \lrVert{\Theta^{t_0^+} - \Theta^{t_0}}^2 \\
    &= -\left\langle \nabla F(\Theta^{t_0}), \sum_{t=t_0}^{t_0^+} \eta^{t_0} \hat{\G}^t \right\rangle 
    + \frac{L}{2} \lrVert{\sum_{t=t_0}^{t_0^+} \eta^{t_0} \hat{\G}^t}^2 \\ 
    &\leq -\sum_{t=t_0}^{t_0^+} \eta^{t_0} \left\langle \nabla F(\Theta^{t_0}), \hat{\G}^t \right\rangle 
    + \frac{LQ}{2} \sum_{t=t_0}^{t_0^+}(\eta^{t_0})^2\lrVert{\hat{\G}^t}^2 \label{cauchy1.eq}
\end{align}
where (\ref{cauchy1.eq}) follows from fact that $(\sum_{n=1}^N x_n)^2 \leq N\sum_{n=1}^N x_n^2$.

We bound further:
\begin{align}
    F(\Theta^{t_0^+}) - F(\Theta^{t_0})
    &\leq -\sum_{t=t_0}^{t_0^+} \eta^{t_0} 
    \left\langle \nabla F(\Theta^{t_0}), \hat{\G}^t - \G^{t_0} \right\rangle 
    -\sum_{t=t_0}^{t_0^+} \eta^{t_0} 
    \left\langle \nabla F(\Theta^{t_0}), \G^{t_0} \right\rangle 
    \nonumber \\ &~~~
    + \frac{LQ}{2} \sum_{t=t_0}^{t_0^+}(\eta^{t_0})^2\lrVert{\hat{\G}^t - \G^{t_0} + \G^{t_0}}^2 \\
    &\leq -\sum_{t=t_0}^{t_0^+} \eta^{t_0} 
    \left\langle \nabla F(\Theta^{t_0}), \hat{\G}^t - \G^{t_0} \right\rangle 
    -\sum_{t=t_0}^{t_0^+} \eta^{t_0} 
    \left\langle \nabla F(\Theta^{t_0}), \G^{t_0} \right\rangle 
    \nonumber \\ &~~~
    + LQ \sum_{t=t_0}^{t_0^+}(\eta^{t_0})^2\lrVert{\hat{\G}^t - \G^{t_0}}^2 
    + LQ \sum_{t=t_0}^{t_0^+}(\eta^{t_0})^2\lrVert{\G^{t_0}}^2 \\ 
    &= \sum_{t=t_0}^{t_0^+} \eta^{t_0} 
    \left\langle -\nabla F(\Theta^{t_0}), \G^{t_0} - \hat{\G}^t \right\rangle 
    -\sum_{t=t_0}^{t_0^+} \eta^{t_0} 
    \left\langle \nabla F(\Theta^{t_0}), \G^{t_0} \right\rangle 
    \nonumber \\ &~~~
    + LQ \sum_{t=t_0}^{t_0^+}(\eta^{t_0})^2\lrVert{\hat{\G}^t - \G^{t_0}}^2 
    + LQ \sum_{t=t_0}^{t_0^+}(\eta^{t_0})^2\lrVert{\G^{t_0}}^2. \\
    &\leq \frac{1}{2}\sum_{t=t_0}^{t_0^+} \eta^{t_0} \lrVert{\nabla F(\Theta^{t_0})}^2 
    \nonumber \\ &~~~
    + \frac{1}{2}\sum_{t=t_0}^{t_0^+} \eta^{t_0} \lrVert{\hat{\G}^t - \G^{t_0}}^2  
    -\sum_{t=t_0}^{t_0^+} \eta^{t_0} 
    \left\langle \nabla F(\Theta^{t_0}), \G^{t_0} \right\rangle 
    \nonumber \\ &~~~
    + LQ \sum_{t=t_0}^{t_0^+}(\eta^{t_0})^2\lrVert{\hat{\G}^t - \G^{t_0}}^2 
    + LQ \sum_{t=t_0}^{t_0^+}(\eta^{t_0})^2\lrVert{\G^{t_0}}^2 \label{product_half.eq} 
\end{align}
where (\ref{product_half.eq}) follows from the fact that 
$A \cdot B = \frac{1}{2}A^2 + \frac{1}{2}B^2 - \frac{1}{2}(A-B)^2$.

We apply the expectation $\mathbb{E}^{t_0}$ to both sides of (\ref{product_half.eq}):
\begin{align}
    \Ebatch{F(\Theta^{t_0^+})} - F(\Theta^{t_0})
    &\leq -\frac{1}{2}\sum_{t=t_0}^{t_0^+} \eta^{t_0} \lrVert{\nabla F(\Theta^{t_0})}^2 
    + \frac{1}{2}\sum_{t=t_0}^{t_0^+} \eta^{t_0}(1+2LQ\eta^{t_0}) \Ebatch{\lrVert{\hat{\G}^t - \G^{t_0}}^2}  
    \nonumber \\ &~~~~~~~~~~~~~~~~~~~~~~~~~~~~~~~~~~~~~~~~~~~
    + LQ \sum_{t=t_0}^{t_0^+}(\eta^{t_0})^2\Ebatch{\lrVert{\G^{t_0}}^2} \label{exp1.eq} \\
    &\leq -\frac{1}{2}\sum_{t=t_0}^{t_0^+} \eta^{t_0}(1-2LQ\eta^{t_0}) \lrVert{\nabla F(\Theta^{t_0})}^2 
    \nonumber \\ &~~~
    + \frac{1}{2}\sum_{t=t_0}^{t_0^+} \eta^{t_0}(1+2LQ\eta^{t_0}) \Ebatch{\lrVert{\hat{\G}^t - \G^{t_0}}^2}  
    + LQ \sum_{m=0}^M \frac{\sigma_m^2}{B} \sum_{t=t_0}^{t_0^+}(\eta^{t_0})^2 \\
    &= -\frac{Q}{2}\eta^{t_0}(1-2LQ\eta^{t_0}) \lrVert{\nabla F(\Theta^{t_0})}^2 
    \nonumber \\ &~~~
    + \frac{1}{2}\sum_{t=t_0}^{t_0^+} \eta^{t_0}(1+2LQ\eta^{t_0}) \Ebatch{\lrVert{\hat{\G}^t - \G^{t_0}}^2}  
    + LQ^2 (\eta^{t_0})^2 \sum_{m=0}^M \frac{\sigma_m^2}{B} \label{var1.eq}
\end{align}
where (\ref{exp1.eq}) follows from applying Assumption~\ref{bias.assum}
and noting that $\Ebatch{\G^{t_0}} = \nabla F(\Theta^{t_0})$,
and (\ref{var1.eq}) follows from Assumption~\ref{var.assum}.

Applying Lemma~\ref{diff.lemma} to (\ref{var1.eq}):
\begin{align}
    &\Ebatch{F(\Theta^{t_0^+})} - F(\Theta^{t_0})
    \leq -\frac{Q}{2}\eta^{t_0}(1-2LQ\eta^{t_0}) \lrVert{\nabla F(\Theta^{t_0})}^2 
    \nonumber \\ &~~~~~~~~~~~~~~~~~~~~~~~~~~~~~~~~~~~~~~~~~~~~~~~~~~
    + 8Q^3(\eta^{t_0})^3 (1+2LQ\eta^{t_0})
    \sum_{m=0}^M L_m^2\lrVert{\nabla_m F(\Theta_m^{t_0})}^2    
    \nonumber \\ &~~~~~~~~~~~~~~~~~~~~~~~~~~~~~~~~~~~~~~~~~~~~~~~~~~
    + 8Q^3(\eta^{t_0})^3(1+2LQ\eta^{t_0})   
    \sum_{m=0}^M L_m^2 \frac{\sigma_m^2}{B} 
    \nonumber \\ &~~~~~~~~~~~~~~~~~~~~~~~~~~~~~~~~~~~~~~~~~~~~~~~~~~
    + 32Q^3\eta^{t_0}(1+2LQ\eta^{t_0})   
    \sum_{m=0}^M H_m^2 G_m^2 \lrVert{E_m^{t_0}}_{\mathcal{F}}^2 
    \nonumber \\ &~~~~~~~~~~~~~~~~~~~~~~~~~~~~~~~~~~~~~~~~~~~~~~~~~~
    + LQ^2 (\eta^{t_0})^2 \sum_{m=0}^M \frac{\sigma_m^2}{B}\\
    &~~~~~\leq 
    -\frac{Q}{2}\sum_{m=0}^M \eta^{t_0}(1-2LQ\eta^{t_0} - 16Q^2L_m^2(\eta^{t_0})^2  - 16Q^3L_m^2L(\eta^{t_0})^3)) \lrVert{\nabla_m F(\Theta^{t_0})}^2 
    \nonumber \\ &~~~~~~~~~~~~~~~~~~
    + (LQ^2(\eta^{t_0})^2+8Q^3L_m^2(\eta^{t_0})^3+8Q^4LL_m^2(\eta^{t_0})^4) 
    \sum_{m=0}^M \frac{\sigma_m^2}{B} 
    \nonumber \\ &~~~~~~~~~~~~~~~~~~
    + 32Q^3 \eta^{t_0}(1+2LQ\eta^{t_0})   
    \sum_{m=0}^M H_m^2 G_m^2 \lrVert{E_m^{t_0}}_{\mathcal{F}}^2. \label{needeta2.eq}
\end{align}

Let $\eta^{t_0} \leq \frac{1}{16Q\max\{L,\max_m L_m\}}$. Then we bound (\ref{needeta2.eq}) further:
\begin{align}
    \Ebatch{F(\Theta^{t_0^+})} - F(\Theta^{t_0})
    &\leq -\frac{Q}{2}\sum_{m=0}^M \eta^{t_0}\left(1-\frac{1}{8} - \frac{1}{16}  - \frac{1}{16^2}\right) \lrVert{\nabla_m F(\Theta^{t_0})}^2 
    \nonumber \\ &~~~
    + (LQ^2(\eta^{t_0})^2+8Q^3L_m^2(\eta^{t_0})^3+8Q^4LL_m^2(\eta^{t_0})^4) 
    \sum_{m=0}^M \frac{\sigma_m^2}{B} 
    \nonumber \\ &~~~
    + 16Q^3 \eta^{t_0}(1+2LQ\eta^{t_0})   
    \sum_{m=0}^M H_m^2 G_m^2 \lrVert{E_m^{t_0}}_{\mathcal{F}}^2 \\ 
    &\leq -\frac{3Q}{8}\eta^{t_0} \lrVert{\nabla F(\Theta^{t_0})}^2 
    \nonumber \\ &~~~
    + (LQ^2(\eta^{t_0})^2+8Q^3L_m^2(\eta^{t_0})^3+8Q^4LL_m^2(\eta^{t_0})^4) 
    \sum_{m=0}^M \frac{\sigma_m^2}{B} 
    \nonumber \\ &~~~
    + 32Q^3 \eta^{t_0}(1+2LQ\eta^{t_0})   
    \sum_{m=0}^M H_m^2 G_m^2 \lrVert{E_m^{t_0}}_{\mathcal{F}}^2  
\end{align}

After some rearranging of terms:
\begin{align}
    \eta^{t_0} \lrVert{\nabla F(\Theta^{t_0})}^2 
    &\leq \frac{4\left[F(\Theta^{t_0}) - \Ebatch{F(\Theta^{t_0^+})}\right]}{Q}
    \nonumber \\ &~~~
    + \frac{8}{3}(LQ(\eta^{t_0})^2+8Q^2L_m^2(\eta^{t_0})^3+8Q^3LL_m^2(\eta^{t_0})^4) 
    \sum_{m=0}^M \frac{\sigma_m^2}{B} 
    \nonumber \\ &~~~
    + 86Q^2 \eta^{t_0}(1+2LQ\eta^{t_0})   
    \sum_{m=0}^M H_m^2 G_m^2 \lrVert{E_m^{t_0}}_{\mathcal{F}}^2 
\end{align}

Summing over all global rounds $t_0=0,\ldots,R-1$ and taking total expectation:
\begin{align}
    \sum_{t_0=0}^{R-1} \eta^{t_0} \Etot{\lrVert{\nabla F(\Theta^{t_0})}^2} 
    &\leq \frac{4\left[F(\Theta^0) - \Etot{F(\Theta^T)}\right]}{Q}
    \nonumber \\ &~~~
    + \frac{8}{3} \sum_{t_0=0}^{R-1}
    (LQ(\eta^{t_0})^2+8Q^2L_m^2(\eta^{t_0})^3+8Q^3LL_m^2(\eta^{t_0})^4) 
    \sum_{m=0}^M \frac{\sigma_m^2}{B} 
    \nonumber \\ &
    + 86Q^2 \eta^{t_0}(1+2LQ\eta^{t_0})   
    \sum_{m=0}^M H_m^2 G_m^2 \lrVert{E_m^{t_0}}_{\mathcal{F}}^2 \\ 
    &\leq \frac{4\left[F(\Theta^0) - \Etot{F(\Theta^T)}\right]}{QR}
    \nonumber \\ &~~~
    + \frac{8}{3}\sum_{t_0=0}^{R-1}(QL(\eta^{t_0})^2+8Q^2L_m^2(\eta^{t_0})^3+8Q^3LL_m^2(\eta^{t_0})^4) \sum_{m=0}^M \frac{\sigma_m^2}{B} 
    \nonumber \\ &
    + 86Q^2 \sum_{t_0=0}^{R-1}\eta^{t_0}(1+2LQ\eta^{t_0})   
    \sum_{m=0}^M H_m^2 G_m^2 \Etot{\lrVert{E_m^{t_0}}_{\mathcal{F}}^2}.
     \label{before_theoremsplit.eq}
\end{align}

Note that:
\begin{align}
    \sum_{m=0}^MH_m^2 G_m^2 \Etot{\lrVert{E_m^{t_0}}_{\mathcal{F}}^2} 
    &=\sum_{m=0}^M H_m^2 G_m^2 \sum_{j \neq m} \Etot{\lrVert{\epsilon_j^{t_0}}_{\mathcal{F}}^2} \\
    &= \sum_{m=0}^M H_m^2 G_m^2 \sum_{j \neq m} \mathcal{E}_j^{t_0} \label{error_batch.eq} 
\end{align}
where (\ref{error_batch.eq}) follows from Definition~\ref{compress.assum}.  

Plugging this into (\ref{before_theoremsplit.eq})
\begin{align}
    \sum_{t_0=0}^{R-1} \eta^{t_0} \Etot{\lrVert{\nabla F(\Theta^{t_0})}^2} 
    &\leq \frac{4\left[F(\Theta^0) - \Etot{F(\Theta^T)}\right]}{QR}
    \nonumber \\ &~~~
    + \frac{8}{3}\sum_{t_0=0}^{R-1}(QL(\eta^{t_0})^2+8Q^2L_m^2(\eta^{t_0})^3+8Q^3LL_m^2(\eta^{t_0})^4) \sum_{m=0}^M \frac{\sigma_m^2}{B} 
    \nonumber \\ &
    + 86Q^2 \sum_{t_0=0}^{R-1}\eta^{t_0}(1+2LQ\eta^{t_0})   
    \sum_{m=0}^M H_m^2 G_m^2 \sum_{j \neq m}\mathcal{E}_j^{t_0}.
     \label{theoremsplit.eq}
\end{align}

Suppose that $\eta^{t_0}=\eta$ for all global rounds $t_0$.
Then, averaging over $R$ global rounds, we have:
\begin{align}
    \frac{1}{R} \sum_{t_0=0}^{R-1} \Etot{\lrVert{\nabla F(\Theta^{t_0})}^2} 
    &\leq \frac{4\left[F(\Theta^0) - \Etot{F(\Theta^T)}\right]}{QR\eta}
    + \frac{8}{3}\sum_{m=0}^M (QL\eta+8Q^2L_m^2\eta^2+8Q^3LL_m^2\eta^3) \frac{\sigma_m^2}{B} 
    \nonumber \\ &~~~
    + \frac{86Q^2}{R} \sum_{t_0=0}^{R-1}(1+2LQ\eta)   
    \sum_{m=0}^M H_m^2 G_m^2 \sum_{j \neq m}\mathcal{E}_j^{t_0}.\\
    &\leq \frac{4\left[F(\Theta^0) - \Etot{F(\Theta^T)}\right]}{QR\eta}
    + 6\sum_{m=0}^M QL\eta \frac{\sigma_m^2}{B} 
    + \frac{92Q^2}{R} \sum_{t_0=0}^{R-1}   
    \sum_{m=0}^M H_m^2 G_m^2 \sum_{j \neq m}\mathcal{E}_j^{t_0}.
    \label{thm1.eq}
\end{align}
where (\ref{thm1.eq}) follows from our assumption that 
$\eta^{t_0} \leq \frac{1}{16Q\max\{L,\max_m L_m\}}$.
This completes the proof of Theorem \ref{main.thm}.

We continue our analysis to prove Theorem \ref{main2.thm}.
Starting from (\ref{theoremsplit.eq}),
we bound the left-hand side with the minimum over all iterations:
\begin{align}
    &\min_{t_0=0,\ldots,R-1} \Etot{\lrVert{\nabla F(\Theta^{t_0})}^2}
    \leq \frac{4\left[F(\Theta^0) - \Ebatch{F(\Theta^T)}\right]}{Q\sum_{t_0=0}^{R-1} \eta^{t_0}}
    \nonumber \\ &~~~
    + 4\left(QL\frac{\sum_{t_0=0}^{R-1}(\eta^{t_0})^2}{\sum_{t_0=0}^{R-1}\eta^{t_0}}
    +16Q^2L_m^2\frac{\sum_{t_0=0}^{R-1}(\eta^{t_0})^3}{\sum_{t_0=0}^{R-1}\eta^{t_0}}
    +16Q^3LL_m^2\frac{\sum_{t_0=0}^{R-1}(\eta^{t_0})^4}{\sum_{t_0=0}^{R-1}\eta^{t_0}}\right) 
    \sum_{m=0}^M \frac{\sigma_m^2}{B} 
    \nonumber \\ &~~~
    + 86Q^2\sum_{m=0}^M H_m^2 G_m^2  \frac{\sum_{t_0=0}^{R-1}\eta^{t_0}
    \sum_{j \neq m}\mathcal{E}_j^{t_0}}{\sum_{t_0=0}^{R-1}\eta^{t_0}}
    + 172LQ^3\sum_{m=0}^M H_m^2 G_m^2  \frac{\sum_{t_0=0}^{R-1}(\eta^{t_0})^2
    \sum_{j \neq m}\mathcal{E}_j^{t_0}}{\sum_{t_0=0}^{R-1}\eta^{t_0}}
\end{align}
      
As $R \rightarrow \infty$, if $\sum_{t_0=0}^{R-1}\eta^{t_0} = \infty$,
$\sum_{t_0=0}^{R-1}(\eta^{t_0})^2 < \infty$,
and $\sum_{t_0=0}^{R-1}\eta^{t_0}\sum_{j \neq m}\mathcal{E}_j^{t_0} < \infty$,
then $\min_{t_0=0,\ldots,R-1} \Etot{\lrVert{\nabla F(\Theta^{t_0})}^2} \rightarrow 0$.
This completes the proof of Theorem~\ref{main2.thm}.

\section{Common Compressors} \label{common.sec}
In this section, we calculate the compression error and
parameter bounds for
uniform scalar quantization, lattice vector quantization
and top-$k$ sparsification, as well as discuss
implementation details of these compressors in \mbox{C-VFL}.

We first consider a uniform scalar quantizer~\citep{DBLP:journals/bstj/Bennett48}
with a set of $2^q$ quantization levels, 
where $q$ is the number of bits to represent compressed values.
We define the range of values that quantize to the same
quantization level as the quantization bin.
In \mbox{C-VFL}, a scalar quantizer  
quantizes each individual component of embeddings.
The error in each embedding of a batch $\B$ 
in scalar quantization is 
$\leq P_m \frac{\Delta^2}{12} = P_m\frac{(h_{max} - h_{min})^2}{12}2^{-2q}$
where $\Delta$ the size of a quantization bin,
$P_m$ is the size of the $m$-th embedding,
$h_{max}$ and $h_{min}$ are respectively the
maximum and minimum value $h_m(\theta_m^t; x_m^i)$ 
can be for all iterations $t$, parties $m$, and $x_m^i$.
We note that if $h_{max}$ or $h_{min}$ are unbounded, then
the error is unbounded as well.
By Theorem~\ref{main.thm}, we know that 
$\frac{1}{R}\sum_{t_0=0}^{R-1} \sum_{m=0}^M \mathcal{E}_m^{t_0} = O(\frac{1}{\sqrt{T}})$
to obtain a convergence rate of $O(\frac{1}{\sqrt{T}})$.
If we use the same $q$ for all parties and iterations,
we can solve for $q$ to find that the value $q$ must be lower bounded by 
$q = \Omega (\log_2 (P_m(h_{max} - h_{min})^2\sqrt{T}))$
to reach a convergence rate of $O(\frac{1}{\sqrt{T}})$.
For a diminishing compression error, required by Theorem~\ref{main2.thm},
we let $T=t_0$ in this bound, indicating that $q$, the number of quantization
bins, must increase as training continues.

A vector quantizer creates a set of $d$-dimensional vectors
called a codebook~\citep{DBLP:journals/tit/ZamirF96}.
A vector is quantized by dividing the components into sub-vectors
of size $d$,
then quantizing each sub-vector to the nearest codebook vector in Euclidean distance.
A cell in vector quantization is defined as all points in $d$-space that
quantizes to a single codeword. The volume of these cells
are determined by how closely packed codewords are.
We consider the commonly applied $2$-dimensional hexagonal lattice quantizer~\citep{DBLP:journals/tsp/ShlezingerCEPC21}.
In \mbox{C-VFL}, each embedding is divided into sub-vectors
of size two, scaled to the unit square,
then quantized to the nearest vector by Euclidean distance in the codebook.
The error in this vector quantizer is $\leq \frac{VP_m}{24}$
where $V$ is the volume of a lattice cell.
The more bits available for quantization,
the smaller the volume of the cells,
the smaller the compression error.
We can calculate an upper bound on $V$ based on Theorem~\ref{main.thm}:
$V = O(\frac{1}{P_m \sqrt{T}})$.
If a diminishing compression error is required,
we can set $T=t_0$ in this bound, indicating that $V$ must decrease
at a rate of $O(\frac{1}{P_m \sqrt{t_0}})$.
As the number of iterations increases, the smaller $V$ must be,
and thus the more bits that must be communicated.

In top-$k$ sparsification~\citep{DBLP:conf/iclr/LinHM0D18}, 
when used in distributed SGD algorithms,
the $k$ largest magnitude components of the gradient are sent while
the rest are set to zero.
In the case of embeddings in \mbox{C-VFL}, 
a large element may be as important 
as an input to the server model as a small one.
We can instead select the $k$ embedding
elements to send with the largest magnitude partial derivatives in
$\nabla_{\theta_m} h_m(\theta_m^t)$.
Since a party $m$ cannot calculate $\nabla_{\theta_m} h_m(\theta_m^t)$
until all parties send their embeddings, 
party $m$ can use the embedding gradient 
calculated in the previous iteration, 
$\nabla_{\theta_m} h_m(\theta_m^{t-1})$. 
This is an intuitive method, as we assume our gradients are
Lipschitz continuous, and thus do not change too rapidly.
The error of sparsification is
$\leq (1-\frac{k}{P_m})(\|h\|^2)_{max}$
where $(\|h\|^2)_{max}$ is the maximum value
of $\|h_m(\theta_m^t; x_m^i)\|^2$ 
for all iterations $t$, parties $m$, and $x_m^i$. 
Note that if $(\|h\|^2)_{max}$ is unbounded, then the error is unbounded.
We can calculate a lower bound on $k$:
$k = \Omega (P_m - \frac{P_m}{(\|h\|^2)_{max}\sqrt{T}})$.
Note that the larger $(\|h\|^2)_{max}$, the larger $k$ must be. 
More components must be sent if embedding magnitude is large
in order to achieve a convergence rate of $O(\frac{1}{\sqrt{T}})$.
When considering a diminishing compression error,
we set $T=t_0$, showing that $k$ must increase
over the course of training.

\section{Experimental Details} \label{exp_details.sec}
For our experiments, we used an internal cluster 
of $40$ compute nodes running CentOS 7 each with 
$2\times$ $20$-core $2.5$ GHz Intel Xeon Gold $6248$ CPUs,
$8\times$ NVIDIA Tesla V100 GPUs with $32$ GB HBM,
and $768$ GB of RAM.

\subsection{MIMIC-III}
The MIMIC-III dataset can be found at: \href{https://mimic.physionet.org/}{mimic.physionet.org}.
The dataset consists of time-series data from ${\sim}60$,$000$ intensive
care unit admissions. The data includes many features about each patient,
such as demographic, vital signs, medications, and more. All
the data is anonymized.
In order to gain access to the dataset, one must take the short 
online course provided on their website.

Our code for training with the MIMIC-III dataset can be found in
in the folder titled ``mimic3". This is an extension of
the MIMIC-III benchmarks repo found at: 
\href{https://github.com/YerevaNN/mimic3-benchmarks}{github.com/YerevaNN/mimic3-benchmarks}. 
The original code preprocesses the MIMIC-III dataset and provides
starter code for training LSTMs using centralized SGD. Our code
has updated their existing code to TensorFlow 2.
The new file of interest in our code base
is ``mimic3models/in\_hospital\_mortality/quant.py" 
which runs \mbox{C-VFL}. 
Both our code base and the original are under the MIT License.
More details on installation, dependencies, and running 
our experiments can be found in ``README.md". 
Each experiment took approximately six hours to run on a node in our cluster.

The benchmarking preprocessing code splits the data up into
different prediction cases. Our experiments train models
to predict for in-hospital mortality.
For in-hospital mortality, there are $14$,$681$ training samples,
and $3$,$236$ test samples.  
In our experiments, we use a step size of $0.01$, as is standard
for training an LSTM on the MIMIC-III dataset.

\subsection{ModelNet10 and CIFAR10}
Details on the ModelNet10 dataset can be found at: \href{https://modelnet.cs.princeton.edu/}{modelnet.cs.princeton.edu/}.
The specific link we downloaded the dataset 
from is the following Google Drive link: \href{https://drive.google.com/file/d/0B4v2jR3WsindMUE3N2xiLVpyLW8/view}{https://drive.google.com/file/d/0B4v2jR3WsindMUE3N2xiLVpyLW8/view}.
The dataset consists of 3D CAD models of different common
objects in the world. For each CAD model, there are
12 views from different angles saved as PNG files. 
We only trained our models on the following 10 classes:
bathtub,  bed,  chair,  desk,  dresser,  monitor,  night\_stand,  sofa,  table,  toilet.
We used a subset of the data with
$1$,$008$ training samples and $918$ test samples.
In our experiments, we use a step size of $0.001$, as is standard
for training a CNN on the ModelNet10 dataset.

Our code for learning on the ModelNet10 dataset is in the folder
``MVCNN\_Pytorch" and is an extension of the MVCNN-PyTorch
repo: \href{https://github.com/RBirkeland/MVCNN-PyTorch}{github.com/RBirkeland/MVCNN-PyTorch}.
The file of interest in our code base
is ``quant.py" which runs \mbox{C-VFL}. 
Both our code base and the original are under the MIT License.
Details on how to run our experiments can be found in the ``README.md".
Each experiment took approximately six hours to run on a node in our cluster.

In the same folder, ``MVCNN\_Pytorch", we include our code for running CIFAR-10.
The file of interest is ``quant\_cifar.py" which trains \mbox{C-VFL} with CIFAR-10.
We use the version of CIFAR-10 downloaded through the torchvision library.
More information on the CIFAR-10 dataset can be found at: 
\href{https://www.cs.toronto.edu/~kriz/cifar.html}{cs.toronto.edu/~kriz/cifar.html}.

\subsection{ImageNet}
In Section~\ref{additional.sec}, we include additional experiments 
that use the ImageNet dataset.
Details on ImageNet can be found at: \href{https://www.image-net.org/}{image-net.org/}.
We specifically use a random 100-class subset from the 2012 ILSVRC version of the data.

Our code for learning on the ImageNet dataset is in the folder
``ImageNet\_CVFL" and is a modification on the moco\_align\_uniform repo: 
\href{https://github.com/SsnL/moco_align_uniform}{https://github.com/SsnL/moco\_align\_uniform}.
The file of interest in our code base
is ``main\_cvfl.py" which runs \mbox{C-VFL}. 
Both our code base and the original are under the CC-BY-NC 4.0 license.
Details on how to run our experiments can be found in the ``README.txt".
Each experiment took approximately $24$ hours to run on a node in our cluster.

\section{Additional Plots and Experiments} \label{additional.sec}

In this section, we include additional plots using
the results from the experiments introduced in 
Section~\ref{exp.sec} of the main paper. 
We also include new experiments with the ImageNet100 dataset.
Finally, we include additional experiments with an alternate
\mbox{C-VFL} for $Q=1$.

\subsection{Additional Plots}
We first provide additional plots from the experiements in the main paper.
The setup for the experiments is described in the main paper.
These plots provide some additional insight into the
effect of each compressor on convergence in all datasets.
As with the plots in the main paper, 
the solid lines in each plot are the average of five runs 
and the shaded regions represent one standard deviation.

\begin{figure}[H]
    \centering
    \begin{subfigure}{0.275\textwidth}
        \centering
        \includegraphics[width=\textwidth]{images/accs_test_quant_4.png}
        \caption{$2$ bits per parameter}
        \label{bit2mimic.fig}
    \end{subfigure}
    \begin{subfigure}{0.275\textwidth}
        \centering
        \includegraphics[width=\textwidth]{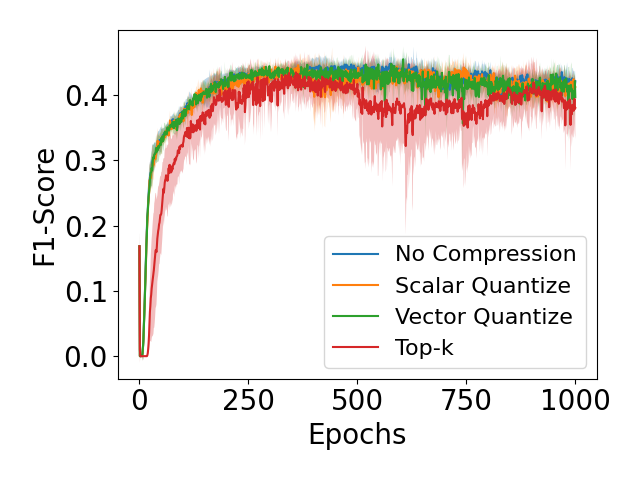}
        \caption{$3$ bits per parameter}
        \label{bit3mimic.fig}
    \end{subfigure}
    \begin{subfigure}{0.275\textwidth}
        \centering
        \includegraphics[width=\textwidth]{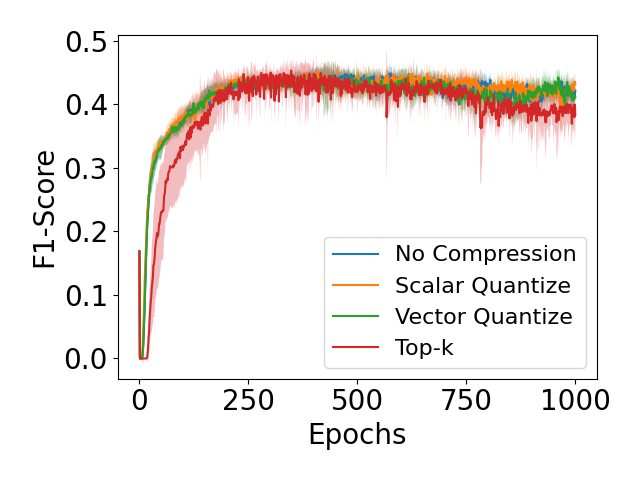}
        \caption{$4$ bits per parameter}
        \label{bit4mimic.fig}
    \end{subfigure}
    \caption{Test $F_1$-Score on MIMIC-III dataset. Scalar and vector
            In these experiments, $Q=10$ and $M=4$.
        quantization achieve similar test $F_1$-Score even when only
        using $2$ bits in quantization. 
        On the other hand, top-$k$ sparsification performs worse than the
        other compressors in the MIMIC-III dataset.}
    \label{mimic_acc.fig}
\end{figure}

Figure~\ref{mimic_acc.fig} plots the test $F_1$-Score for
training on the MIMIC-III dataset for different levels of compression. 
We can see that scalar and vector quantization perform
similarly to no compression and improve as the number
of bits available increase.
We can also see that top-$k$ sparsification has high
variability on the MIMIC-III dataset and generally performs
worse than the other compressors. 

\begin{figure}[H]
    \centering
    \begin{subfigure}{0.275\textwidth}
        \centering
        \includegraphics[width=\textwidth]{images/accs_test_quantcomm_4.png}
        \caption{$2$ bits per parameter}
        \label{bit2mimic.fig}
    \end{subfigure}
    \begin{subfigure}{0.275\textwidth}
        \centering
        \includegraphics[width=\textwidth]{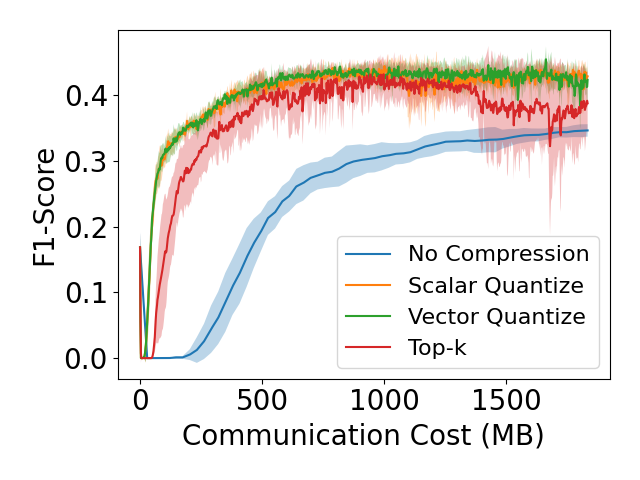}
        \caption{$3$ bits per parameter}
        \label{bit3mimic.fig}
    \end{subfigure}
    \begin{subfigure}{0.275\textwidth}
        \centering
        \includegraphics[width=\textwidth]{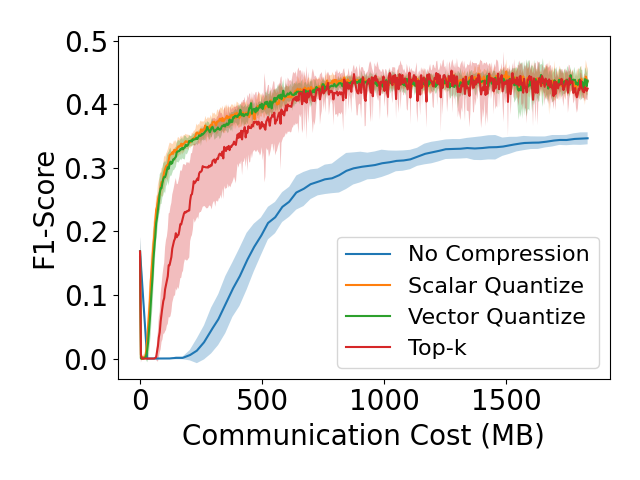}
        \caption{$4$ bits per parameter}
        \label{bit4mimic.fig}
    \end{subfigure}
    \caption{Test $F_1$-Score on MIMIC-III dataset plotted by communication cost.
            In these experiments, $Q=10$ and $M=4$.
        We can see that all compressors reach higher $F_1$-Scores with 
        lower communication cost than no compression. We can see that the standard
        deviation for each compressor decreases as the number of bits available 
        increases. Top-$k$ sparsification generally performs worse than the 
        other compressors on the MIMIC-III-dataset.}
    \label{mimiccomm_acc.fig}
\end{figure}

\begin{figure}[H]
    \centering
    \begin{subfigure}{0.275\textwidth}
        \centering
        \includegraphics[width=\textwidth]{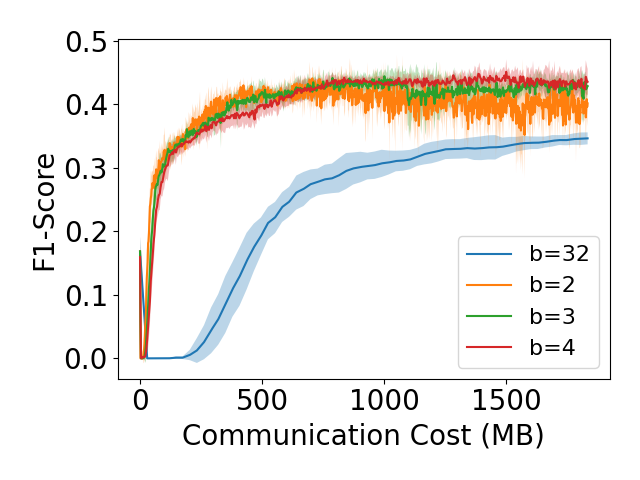}
        \caption{Scalar quantization}
        \label{bit2mimic.fig}
    \end{subfigure}
    \begin{subfigure}{0.275\textwidth}
        \centering
        \includegraphics[width=\textwidth]{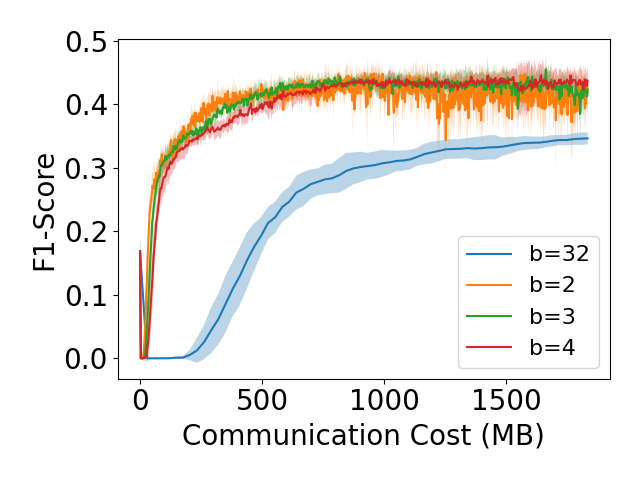}
        \caption{Vector quantization}
        \label{bit3mimic.fig}
    \end{subfigure}
    \begin{subfigure}{0.275\textwidth}
        \centering
        \includegraphics[width=\textwidth]{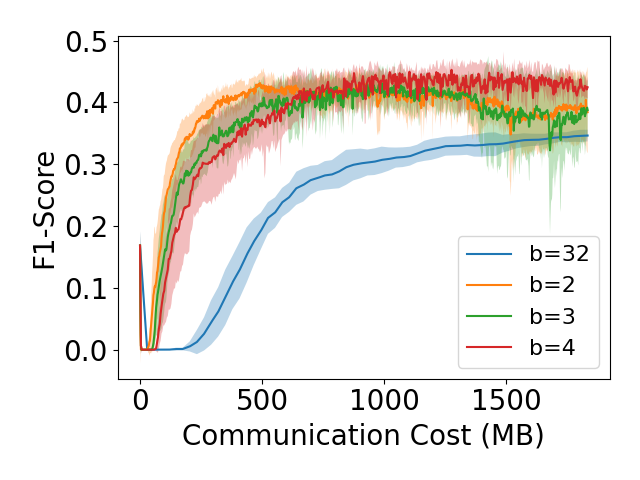}
        \caption{Top-$k$ sparsification}
        \label{bit4mimic.fig}
    \end{subfigure}
    \caption{Test $F_1$-Score on MIMIC-III dataset plotted by communication cost.
            In these experiments, $Q=10$ and $M=4$.
        We can see that all compressors reach higher $F_1$-Scores with 
        lower communication cost than no compression. We can see that the variability 
        for each compressor decreases as the number of bits available 
        increases.}
    \label{mimicquant_acc.fig}
\end{figure}

Figures~\ref{mimiccomm_acc.fig} and \ref{mimicquant_acc.fig}
plot the test $F_1$-Score for training on the 
MIMIC-III dataset plotted against the communication cost.
The plots in Figure~\ref{mimiccomm_acc.fig} include
all compression techniques for a given level of compression, 
while the plots in Figure~\ref{mimicquant_acc.fig} 
include all levels of compression for a given compression technique.
We can see that all compressors reach higher $F_1$-Scores
with lower communication cost than no compression.
It is interesting to note that increasing the number of bits per
parameter reduces the variability in all compressors.

    \begin{figure}[H]
    \centering
        \begin{subfigure}{0.275\textwidth}
            \centering
            \includegraphics[width=\textwidth]{images/accs_test_quant_mvcnn4.png}
            \caption{$2$ bits per parameter}
            \label{bit2mvcnn.fig}
        \end{subfigure}
        \begin{subfigure}{0.275\textwidth}
            \centering
            \includegraphics[width=\textwidth]{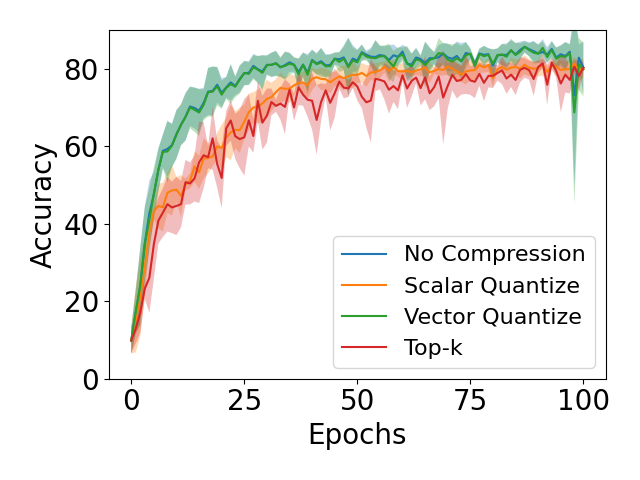}
            \caption{$3$ bits per parameter}
            \label{bit3mvcnn.fig}
        \end{subfigure}
        \begin{subfigure}{0.275\textwidth}
            \centering
            \includegraphics[width=\textwidth]{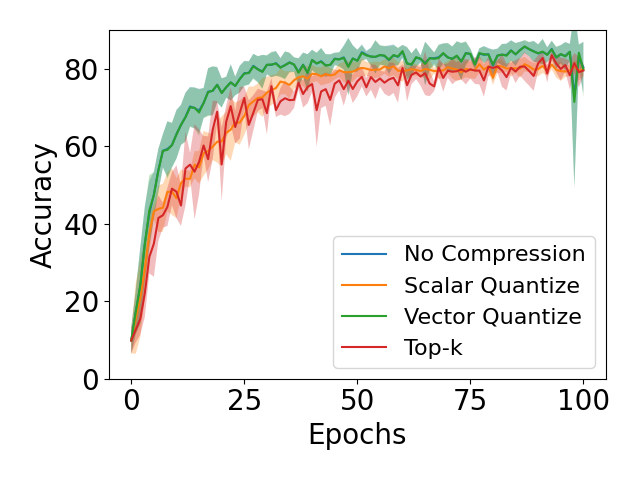}
            \caption{$4$ bits per parameter}
            \label{bit4mvcnn.fig}
        \end{subfigure}
        \caption{Test accuracy on ModelNet10 dataset. Vector quantization
            In these experiments, $Q=10$ and $M=4$.
            and top-$k$ sparsification perform similarly to no compression, 
            even when only $2$ bits are available. Scalar quantization
            converges to a lower test accuracy, 
            and has high variability on the ModelNet10 dataset.}
        \label{mvcnn_acc.fig}
    \end{figure}

    Figure~\ref{mvcnn_acc.fig}
    plots the test accuracy for
    training on the ModelNet10 dataset. 
    Vector quantization and top-$k$ sparsification 
    perform similarly to no compression, 
    even when only $2$ bits are available.
    We can see that scalar quantization has high
    variability on the ModelNet10 dataset.

    \begin{figure}[H]
            \centering
        \begin{subfigure}{0.275\textwidth}
            \centering
            \includegraphics[width=\textwidth]{images/accs_test_quantcomm_4_mvcnn.png}
            \caption{$2$ bits per parameter}
            \label{bit2mimic.fig}
        \end{subfigure}
        \begin{subfigure}{0.275\textwidth}
            \centering
            \includegraphics[width=\textwidth]{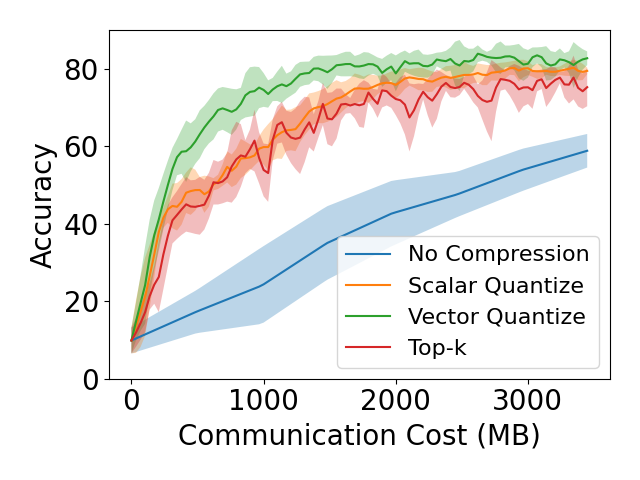}
            \caption{$3$ bits per parameter}
            \label{bit3mimic.fig}
        \end{subfigure}
        \begin{subfigure}{0.275\textwidth}
            \centering
            \includegraphics[width=\textwidth]{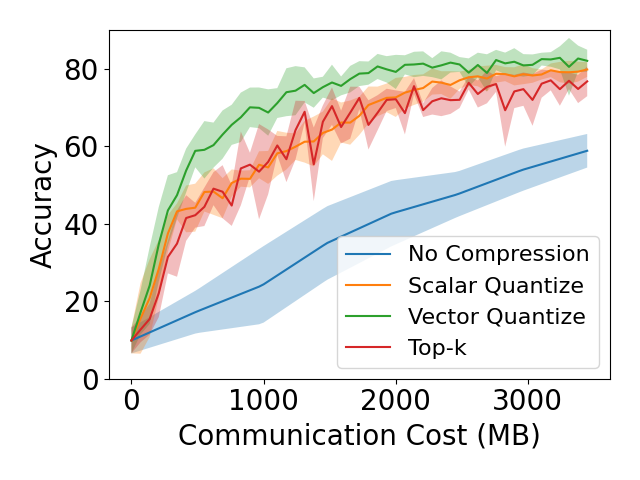}
            \caption{$4$ bits per parameter}
            \label{bit4mimic.fig}
        \end{subfigure}
        \caption{Test accuracy on ModelNet10 dataset plotted by communication cost.
            In these experiments, $Q=10$ and $M=4$.
            We can see that all compressors reach higher accuracies with 
            lower communication cost than no compression. 
            Scalar quantization generally performs worse than the 
            other compressors on the ModelNet10 dataset.}
        \label{mvcnncomm_acc.fig}
    \end{figure}

    \begin{figure}[H]
            \centering
        \begin{subfigure}{0.275\textwidth}
            \centering
            \includegraphics[width=\textwidth]{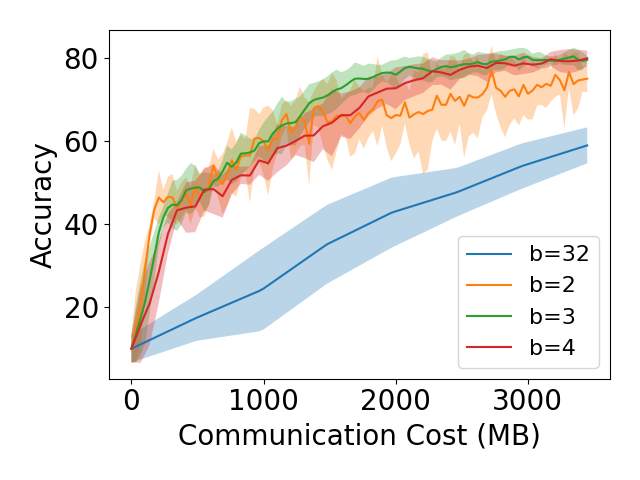}
            \caption{Scalar quantization}
            \label{bitscalarmvcnn.fig}
        \end{subfigure}
        \begin{subfigure}{0.275\textwidth}
            \centering
            \includegraphics[width=\textwidth]{images/accs_test_quantcomm_quantize2_mvcnn.png}
            \caption{Vector quantization}
            \label{bitvectormvcnn.fig}
        \end{subfigure}
        \begin{subfigure}{0.275\textwidth}
            \centering
            \includegraphics[width=\textwidth]{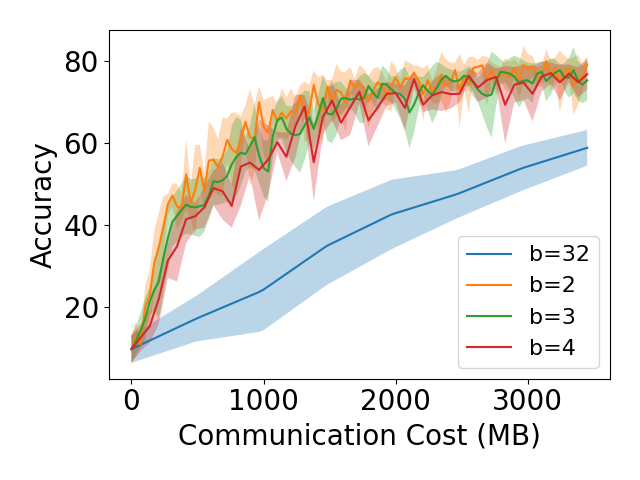}
            \caption{Top-$k$ sparsification}
            \label{bittopkmvcnn.fig}
        \end{subfigure}
        \caption{Test accuracy on ModelNet10 dataset plotted by communication cost.
            In these experiments, $Q=10$ and $M=4$.
            We can see that all compressors reach higher accuracies with 
            lower communication cost than no compression. 
            We can see that when less bits are used in each compressor,
            higher test accuracies are reached at lower communication costs.
            Scalar quantization generally performs worse than the 
            other compressors on the ModelNet10 dataset.}
        \label{mvcnnquant_acc.fig}
    \end{figure}

Figures~\ref{mvcnncomm_acc.fig} and \ref{mvcnnquant_acc.fig}
plot the test accuracy for training on the 
ModelNet10 dataset against the communication cost. 
The plots in Figure~\ref{mvcnncomm_acc.fig} include
all compression techniques for a given level of compression, 
while the plots in Figure~\ref{mvcnnquant_acc.fig} 
include all levels of compression for a given compression technique.
We can see that all compressors reach higher accuracies with 
lower communication cost than no compression. 
Scalar quantization generally performs worse than the 
other compressors on the ModelNet10 dataset.
From Figure~\ref{mvcnnquant_acc.fig}, we also see
that when fewer bits are used in each compressor,
higher test accuracies are reached at lower communication costs.
    
\begin{figure}[H]
    \centering
    \begin{subfigure}{0.275\textwidth}
        \centering
        \includegraphics[width=\textwidth]{images/accs_test_quant_cifar4.png}
        \caption{Plotted by epochs}
        \label{cifar_epochs.fig}
    \end{subfigure}
    \begin{subfigure}{0.275\textwidth}
        \centering
        \includegraphics[width=\textwidth]{images/accs_test_quantcomm_4_cifar.png}
        \caption{Plotted by cost}
        \label{cifar_comm.fig}
    \end{subfigure}
    \begin{subfigure}{0.275\textwidth}
        \centering
        \includegraphics[width=\textwidth]{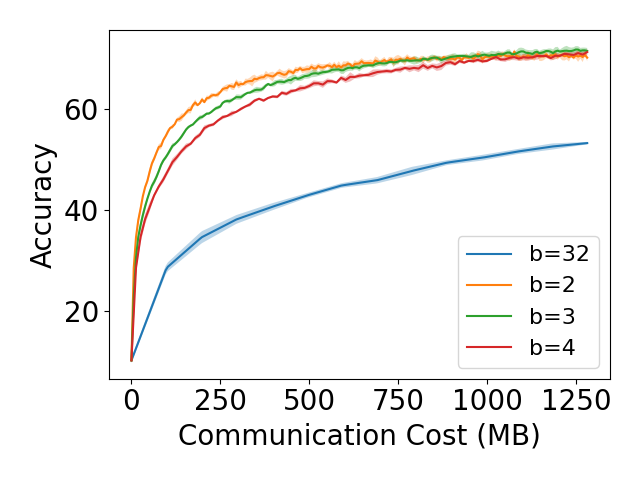}
        \caption{Vector quantization}
        \label{cifar_vector.fig}
    \end{subfigure}
    \caption{Test accuracy on CIFAR-10 dataset with the number of parties
    $M=4$ and number of local iterations $Q=10$. 
    In the first two plots, the compressors have $b=2$, 
    where $b$ is the number of bits used to represent embedding components.
    In the third plot, $b=32$ indicates there is no compression.
    The results show vector quantization performs the best our of the compressors,
    and all compressors show improvement over no compression in terms of
    communication cost to reach target test accuracies.}
    \label{cifar.fig}
\end{figure}

    In Figure~\ref{cifar.fig}, we plot the test accuracy for the CIFAR-10 dataset.
    The test accuracy is fairly low compared to typical baseline accuracies,
    which is expected, as learning object classification from only a quadrant of
    a $32 \times 32$ pixel image is difficult.
    Figure~\ref{cifar_epochs.fig} shows the test accuracy plotted by epochs.
    We can see that vector quantization performs almost as 
    well as no compression in the CIFAR-10 dataset.
    When plotting by communication cost, seen in Figure~\ref{cifar_comm.fig},
    we can see that vector quantization performs the best, though scalar
    quantization and top-$k$ sparsification show communication savings as well.
    In Figure~\ref{cifar_vector.fig}, we plot the test accuracy
    of \mbox{C-VFL} using vector quantization for different values of $b$, the
    number of bits to represent compressed values. Similar to previous
    results, lower $b$ tends to improve test accuracy reached with
    the same amount of communication cost.

    \subsection{Additional Experiments With ImageNet}

We also run \mbox{C-VFL} on ImageNet100~\cite{deng2009imagenet}.
ImageNet is a large image dataset for object classification. 
We use a random subset of 100 classes (ImageNet100) from the ImageNet dataset (about $126$,$000$ images).
We train a set of $4$ parties, each storing a different quadrant of every image. 
Each party trains ResNet18, and the server trains a fully-connected layer.
We use a variable step size, that starts at $0.001$, and drops to $0.0001$ after $50$ epochs.
We use a batch size of $256$ and train for $100$ epochs.
    
\begin{figure}[H]
        \centering
    \begin{subfigure}{0.275\textwidth}
        \centering
        \includegraphics[width=\textwidth]{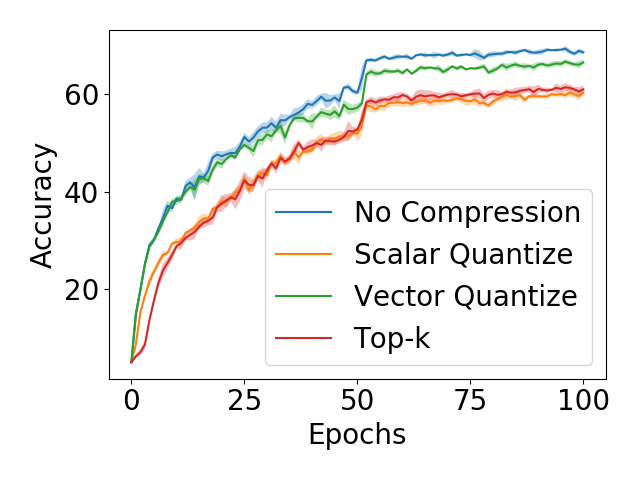}
        \caption{Plotted by epochs}
        \label{imagenet_epochs.fig}
    \end{subfigure}
    \begin{subfigure}{0.275\textwidth}
        \centering
        \includegraphics[width=\textwidth]{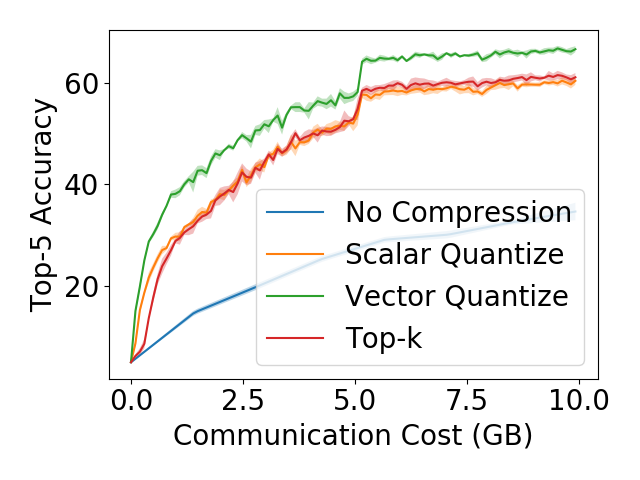}
        \caption{Plotted by cost}
        \label{imagenet_comm.fig}
    \end{subfigure}
    \begin{subfigure}{0.275\textwidth}
        \centering
        \includegraphics[width=\textwidth]{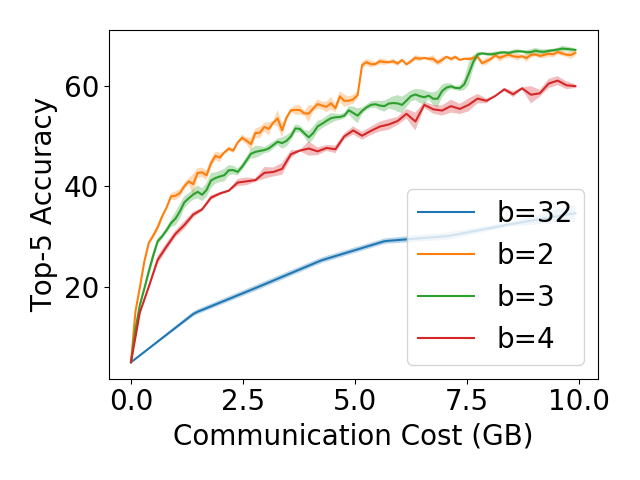}
        \caption{Vector quantization}
        \label{imagenet_vector.fig}
    \end{subfigure}
    \caption{Test accuracy on ImageNet-100 dataset with the number of parties
    $M=4$ and number of local iterations $Q=10$. 
    In the first two plots, the compressors have $b=2$, 
    where $b$ is the number of bits used to represent embedding components.
    In the third plot, $b=32$ indicates there is no compression.
    The results show vector quantization performs the best our of the compressors,
    and all compressors show improvement over no compression in terms of
    communication cost to reach target test accuracies.}
    \label{imagenet.fig}
\end{figure}

    In Figure~\ref{imagenet.fig}, we plot the top-$5$ test accuracy for ImageNet100.
    Figure~\ref{imagenet_epochs.fig} shows the test accuracy plotted by epochs.
    We can see that vector quantization performs almost as 
    well as no compression in the ImageNet100 dataset.
    When plotting by communication cost, seen in Figure~\ref{imagenet_comm.fig},
    we can see that vector quantization performs the best, though scalar
    quantization and top-$k$ sparsification show communication savings as well.
    In Figure~\ref{imagenet_vector.fig}, we plot the test accuracy
    of \mbox{C-VFL} using vector quantization for different values of $b$, the
    number of bits to represent compressed values. Similar to previous
    results, lower $b$ tends to improve test accuracy reached with
    the same amount of communication cost.

    \rev{
        \subsection{Comparison With Alternative \mbox{C-VFL} Algorithm For $Q=1$}
        In \mbox{C-VFL}, the server distributes party embeddings to all parties along with the 
        server model parameters. This allows parties to calculate their partial derivatives
        for local model updates for multiple local iterations. However, if the number of local
        iterations $Q=1$, then a more efficient method of communication
        is for the server to compute partial derivative updates for the parties~\citep{FDML,SplitNN,pyvertical},
        avoiding the need for parties to receive embeddings from other parties. 
        This approach can be applied to \mbox{C-VFL} as well.

\begin{algorithm}[H]
    \begin{algorithmic}[1]
        \rev{
    \STATE {\textbf{Initialize:}} $\theta_m^0$ for all parties $m$ and server model $\theta_0^0$
    \FOR {$t \leftarrow 0, \ldots, T-1$}
        \STATE Randomly sample $\B^t \in \{\X, \y\}$
        \FOR {$m \leftarrow 1, \ldots, M$ in parallel}
            \STATE Send $\mathcal{C}_m(h_m(\theta_m^t ; \X^{\B^t}_m))$ to server %
        \ENDFOR
        \STATE $\hat{\Phi}^t \leftarrow 
            \{\mathcal{C}_0(\theta_0), \mathcal{C}_1(h_1(\theta_1^t)), 
            \ldots, \mathcal{C}_M(h_M(\theta_M^t))\}$
        \STATE $\theta_0^{t+1} = \theta_0^t - \eta^t \nabla_0 F_{\B}(\hat{\Phi}^t ; \y^{\B^t})$
        \STATE Server sends $\nabla_{h_m(\theta_m^t ; \X^{\B^t}_m)} F_{\B}(\hat{\Phi}^t ; \y^{\B^t})$ 
        to each party $m$
        \FOR {$m \leftarrow 1, \ldots, M$ in parallel}
            \STATE $\nabla_m F_{\B}(\hat{\Phi}^t ; \y^{\B^t}) = \nabla_{\theta_m} h_m(\theta_m^t ; \X^{\B^t}_m) \nabla_{h_m(\theta_m^t ; \X^{\B^t}_m)} F_{\B}(\hat{\Phi}^t ; \y^{\B^t})$
            \STATE $\theta_m^{t+1} = \theta_m^t - \eta^t \nabla_m F_{\B}(\hat{\Phi}^t ; \y^{\B^t})$
        \ENDFOR
        \ENDFOR}
    \end{algorithmic}
    \caption{Compressed Vertical Federated Learning for $Q=1$}
    \label{cvflopt.alg}
\end{algorithm}

        The pseudo-code for this method is presented in Algorithm~\ref{cvflopt.alg}.
        In this version of \mbox{C-VFL}, parties send their compressed embeddings to the server.
        The server calculates the loss by feeding the embeddings through the server model.
        The server calculates the gradient with respect to the loss.
        The server then sends to each party $m$ the partial derivative with respect to its embedding:
        $\nabla_{h_m(\theta_m^t ; \X^{\B^t}_m)} F_{\B}(\hat{\Phi}^t ; \y^{\B^t})$.
        Each party $m$ calculates the following partial derivative with respect to its local model parameters:
        \begin{align}
            \nabla_m F_{\B}(\hat{\Phi}^t ; \y^{\B^t}) = \nabla_{\theta_m} h_m(\theta_m^t ; \X^{\B^t}_m) \nabla_{h_m(\theta_m^t ; \X^{\B^t}_m)} F_{\B}(\hat{\Phi}^t ; \y^{\B^t}).
        \end{align}
        Using this partial derivative, the party updates its local model:
        \begin{align}
            \theta_m^{t+1} = \theta_m^t - \eta^{t_0}
            \nabla_m F_{\B}(\hat{\Phi}_m^t ; \y^{\B^{t_0}}).
        \end{align}
        Note that this process is mathematically equivalent to \mbox{C-VFL} when $Q=1$; thus the analysis
        in Section~\ref{main.sec} holds. 
        The communication cost of Algorithm~\ref{cvflopt.alg} per communication round without compression is 
        $O(B \cdot \sum_m P_m)$, a reduction in communication compared to 
        the communication cost per round of Algorithm~\ref{cvfl.alg}: $O(M \cdot (B \cdot \sum_m P_m + |\theta_0|))$.
        Although Algorithm~\ref{cvflopt.alg} reduces communication in a given round, 
        it is limited to the case when $Q=1$. 
        For $Q>1$, we must use Algorithm~\ref{cvfl.alg}.

\begin{table}
\caption{MIMIC-III communication cost to reach a target test $F_1$-Score of $0.4$.
Value shown is the mean of $5$ runs, $\pm$ the standard deviation.
The first row has no embedding compression, while the second row employs 
vector quantization on embeddings with $b=3$. 
For the cases where $Q=1$, Algorithm~\ref{cvflopt.alg} is used,
and for cases where $Q>1$, Algorithm~\ref{cvfl.alg} is used.
In these experiments, the number of clients $M=4$.}
\label{optQ.table}
\vskip 0.1in
\small
\centering
\begin{tabular}{lccc}
    \toprule 
    \textbf{Compressor}  &  \multicolumn{3}{c}{\textbf{Cost (MB) to Reach Target $F_1$-Score $0.4$}} \\
                         & \subhead{$Q=1$}        &\subhead{$Q=10$}        & \subhead{$Q=25$}\\
    \midrule
    \midrule
    None $b=32$          & 4517.59 $\pm$ 465.70 & 3777.21 $\pm$ 522.43 & 1536.69 $\pm$ 201.99 \\
    \midrule
    Vector $b=3$         & 433.28 $\pm$ 79.83   & 330.47 $\pm$ 10.63   & 125.09 $\pm$ 4.56 \\
    \bottomrule                                                                               
\end{tabular}
\end{table}

        We run experiments on the MIMIC-III dataset to compare
        \mbox{C-VFL} using Algorithm~\ref{cvflopt.alg} with \mbox{C-VFL} using Algorithm~\ref{cvfl.alg} 
        with values of $Q>1$.
        We show the results of these experiments in Table~\ref{optQ.table}.
        Here, we show the communication cost to reach a target $F_1$-Score of $0.4$.
        The results in the column labeled $Q=1$ are from running Algorithm~\ref{cvflopt.alg},
        while all other results are from running Algorithm~\ref{cvfl.alg}.
        We include results for the case where \mbox{C-VFL} is run without embedding compression,
        as well as results for the case where vector quantization with $b=3$ is used to compress embeddings.
        We can see that for this dataset, values of $Q>1$ reduce the cost of communication
        to reach the target $F_1$-Score.
        In all cases, Algorithm~\ref{cvfl.alg} achieves lower communication cost to 
        reach a target model accuracy than Algorithm~\ref{cvflopt.alg},
        despite Algorithm~\ref{cvflopt.alg} having a lower communication cost
        per communication round than Algorithm~\ref{cvfl.alg}. 
        The use of multiple local iterations in Algorithm~\ref{cvfl.alg} 
        decreased the number of global rounds required to attain the target accuracy
        compared to Algorithm~\ref{cvflopt.alg}.
    }

\end{document}